\newtheoremstyle{break}
  {\topsep}{\topsep}%
  {\itshape}{}%
  {\bfseries}{}%
  {\newline}{}%
\pgfplotsset{compat=1.7}
\newcommand{\figwidth}{0.24\textwidth}
\BODY\end{matrix}$}%
\BODY\end{align}}}% for testing
  \noindent\resizebox{\hsize}{!}{\usebox2}%
\def\[#1\]{\begin{align}#1\end{align}}
\def\(#1\){\begin{align*}#1\end{align*}}
\def\argmax{\operatornamewithlimits{arg\,max}}
\def\argmin{\operatornamewithlimits{arg\,min}}
\newcommand{\bprf}{\begin{proof}}
\newcommand{\eprf}{\end{proof}}
\newcommand{\blem}{\begin{lemma}}
\newcommand{\elem}{\end{lemma}}
\newcommand{\bE}{\mathbb{E}}
\newtheorem{asm}{Assumption}
\newcommand{\R}{\mathbb{R}}
\newcommand{\E}{\mathbb{E}}
\newcommand{\vecx}{\mathbf{x}}
\newcommand{\vecw}{\mathbf{w}}
\newcommand{\vecv}{\mathbf{v}}
\newcommand{\vecg}{\mathbf{g}}
\newcommand{\vecz}{\mathbf{z}}
\newcommand{\gradf}{\nabla f}
\newcommand{\gradh}{\nabla h}
\newcommand{\gradF}{\nabla F}
\newcommand{\bigo}{\mathcal{O}}
\newcommand{\W}{\mathcal{W}}
\newcommand{\twonms}[1]{\|#1\|_2}
\newcommand{\innerps}[2]{\langle#1,#2\rangle}
\newcommand{\floor}[1]{\lfloor#1\rfloor}
\newcommand{\Z}{\mathcal{Z}}
\definecolor{darkgreen}{rgb}{0.0, 0.6, 0.2}
\definecolor{lightgray}{rgb}{0.8, 0.85, 0.85}
\def\eqref#1{Equation (\ref{#1})}
\def\floor#1{\lfloor #1 \rfloor}
\def\1{\bm{1}}
\DeclareMathAlphabet{\mathsfit}{\encodingdefault}{\sfdefault}{m}{sl}
\SetMathAlphabet{\mathsfit}{bold}{\encodingdefault}{\sfdefault}{bx}{n}
\newcommand{\KL}{D_{\mathrm{KL}}}
\DeclareMathOperator{\Tr}{Tr}
\newcommand{\Cov}{\mathsf{Cov}}
\newcommand{\TV}{\mathsf{TV}}
\renewcommand{\cite}[1]{\citep{#1}}
\renewcommand{\cite}[1]{\citep{#1}}
\newtheorem{definition}{\textbf{Definition}}[section]
\newtheorem{lemma}{\textbf{Lemma}}[section]
\newtheorem{theorem}{\textbf{Theorem}}[section]
\newtheorem*{insight*}{\textbf{Observation}}
\newtheorem*{proposition*}{\textbf{Proposition}}
\newtheorem*{lemmai*}{\textbf{Lemma (informal)}}
\newcommand{\cW}{\mathcal{W}}
\newcommand{\system}{{the proposed protocols}\xspace}
\definecolor{darkgreen}{rgb}{0.0, 0.5, 0}
\newcommand{\F}{Fig.}
\newcommand{\ignore}[1]{}
\newcommand{\revision}[1]{#1}%{{\color{red}{#1}}}
\begin{document}

\twocolumn[

\aistatstitle{Byzantine-Robust Federated Learning with Optimal Statistical Rates}
\aistatsauthor{Banghua Zhu$^{*}$ \And Lun Wang$^{*\dagger}$ \And  Qi Pang$^{*\ddagger}$}
\aistatsaddress{UC  Berkeley \And  Google \And   Carnegie Mellon University } 

\aistatsauthor{Shuai Wang \And Jiantao Jiao \And Dawn Song \And Michael I. Jordan}

\aistatsaddress{HKUST \And  UC  Berkeley \And    UC  Berkeley \And UC  Berkeley} ]
% \aistatsauthor{Banghua Zhu$^{\dag*}$ \And Lun Wang$^{\ddag*}$ \And  Qi Pang$^{\diamond*}$ å}
% \aistatsaddress{$\dag$ UC  Berkeley \And $\ddag$ Google \And $ \diamond$ Carnegie Mellon University \And $\P$ HKUST } 

% \aistatsauthor{Shuai Wang$^\P$ \And Jiantao Jiao$^\dag$ \And Dawn Song$^\dag$ \And Michael I. Jordan$^\dag$}

% \aistatsaddress{$\dag$ UC  Berkeley \And $\ddag$ Google \And $ \diamond$ Carnegie Mellon University \And $\P$ HKUST } ]

\author{%
  Banghua Zhu$^*$\\
  Department of EECS\\
  University of California, Berkeley\\
  Berkeley, CA, 95054 \\
  \texttt{banghua@berkeley.edu} \\
  % examples of more authors
   \And
   Lun Wang$^*$ \\
  Department of EECS\\
  University of California, Berkeley\\
   Berkeley, CA, 95054 \\
  \texttt{wanglun@berkeley.edu} \\
   \AND
   Qi Pang$^*$ \\
    Department of Computer Science \& Engineering\\
  Hong Kong University of Science and Technology\\
   Hong Kong, China \\
  \texttt{flashpangqi@gmail.com} \\
   \And
    Shuai Wang\\
    Department of Computer Science \& Engineering\\
  Hong Kong University of Science and Technology\\
   Hong Kong, China \\
  \texttt{shuaiw@cse.ust.hk} \\
   \And
   Jiantao Jiao  \\
  Department of EECS\\
  University of California, Berkeley\\
   Berkeley, CA, 95054 \\
  \texttt{jiantao@berkeley.edu} \\
      \And
   Dawn Song  \\
  Department of EECS\\
  University of California, Berkeley\\
   Berkeley, CA, 95054 \\
  \texttt{dawnsong@cs.berkeley.edu} \\
      \And
   Michael I. Jordan  \\
  Department of EECS\\
  University of California, Berkeley\\
   Berkeley, CA, 95054 \\
  \texttt{jordan@cs.berkeley.edu} \\
}

%\maketitle
\def\thefootnote{*}\footnotetext{These authors contributed equally to this work.}\def\thefootnote{\arabic{footnote}}
\def\thefootnote{$\dagger$}\footnotetext{The work of Lun Wang is done at UC Berkeley.}\def\thefootnote{\arabic{footnote}}
\def\thefootnote{$\ddagger$}\footnotetext{The work of Qi Pang is done at HKUST.}\def\thefootnote{\arabic{footnote}}

\begin{abstract}
We propose Byzantine-robust federated learning protocols with nearly optimal statistical rates based on recent progress in high dimensional robust statistics. 
% The protocol is based on recent progress in statistically optimal and  computationally efficient algorithms in high dimensional robust statistics. 
In contrast to prior work, our proposed protocols improve the dimension dependence and achieve a near-optimal statistical rate for strongly convex losses. We also provide  statistical lower bound for the problem. 
%
% We start with some protocols that are slightly worse in dimension dependence, but with high breakdown point. 
%
% Then we improve the rates via the idea of bucketing in sacrifice of the breakdown point. 
%
For experiments, we benchmark against competing protocols and show the empirical superiority of the proposed protocols. 
%
% \fixme{Finally, we remark that our protocols with bucketing can be naturally combined with privacy-guaranteeing procedures to introduce security against a semi-honest server. }
%First, our protocol is Byzantine-robust against malicious clients.
%Additionally, it is the first federated learning protocol with a per-round meanestimation error that is \emph{independent of the update size} (\emph{e.g.}, the size of the model being trained).
%Second, our protocol is secure against a semi-honest server, as it only revealssums of the updates.
%
\end{abstract}

\section{Introduction}
\label{sec:intro}

Federated learning (FL) has drawn considerable attention as a novel distributed
learning paradigm in recent years. In FL, users (worker machines) collaborate to train a model
using a centralized server (master machine), while all data is stored locally to protect users'
privacy. The privacy benefit has motivated the adoption of FL in a variety of sensitive
applications, including Google GBoard, healthcare services, and self-driving
cars. 

However, vanilla FL  has been
demonstrated to be vulnerable to a range of
attacks~\cite{bagdasaryan2020backdoor,bhagoji2019analyzing,nasr2019comprehensive,sun2021data,luo2021feature}.
There are  two mainstream vulnerabilities in FL, namely \emph{Byzantine robustness} and \emph{privacy}.
In the former, a small number of clients can
behave maliciously in a large-scale FL system and influence the jointly-trained
FL model in a stealthy manner~\cite{bagdasaryan2020backdoor,bhagoji2019analyzing,fang2020local,sun2021data}. 
For the majority of  SGD-based FL
algorithms~\cite{mcmahan2017federated}, the centralized server averages the
local updates to obtain the global update, which is vulnerable to even a single
malicious client. Particularly, a malicious client can craft its update in such
a way that it prevents the global model from converging or leads it to a
sub-optimal minimum.
With respect to terms of privacy, the vulnerability is that the centralized server can infer information about the local data of the
clients by inspecting their updates~\cite{nasr2019comprehensive,luo2021feature}.

In this work, we mainly focus on the Byzantine robustness of the FL protocols.  { Byzantine-robust FL protocols~\cite{blanchard2017machine,yin2018byzantine,fu2019attack,pillutla2019robust, alistarh2018byzantine, karimireddy2021learning, allen2020byzantine, karimireddy2020byzantine, gorbunov2022variance, velicheti2021secure, data2021byzantine} have been proposed to suppress the influence of malicious clients' updates (See Appendix~\ref{sec:related_work} for a full discussion and comparison).}
In the most comparable work, \citet{yin2018byzantine} apply a coordinate-wise median/trimmed mean to aggregate local updates. { Under the assumption of bounded variance $\mathbb{E}_{p^\star}[\|\vecg - \mathbb{E}[\vecg]\|_2^2]\leq \tilde \sigma^2$ and coordinate-wise bounded skewness (or coordinate-wise sub-exponential) on the gradient distribution, one can achieve a statistical error of $\bigo(\tilde \sigma (\epsilon/\sqrt{n} + \sqrt{d/mn}))$, where $\epsilon, n, m$ are the fraction of malicious nodes, number of samples in each client, and number of clients, respectively. It has also been shown in~\citet{yin2018byzantine} that any FL protocol cannot has a rate better than $\Omega(\epsilon/\sqrt{n}+\sqrt{d/mn})$. }

\revision{
However, as is noted in~\cite{yin2018byzantine}, the term $\tilde \sigma$ usually depends on the dimension of the data $d$. As a concrete example, $\tilde \sigma = \sqrt{d}$ when the noise of the gradient is standard Gaussian with identity covariance $I$. And this results in a sub-optimal rate of $\bigo( \epsilon\sqrt{d/n} + \sqrt{d^2/mn})$. Compared to the lower bound above, the rate is far from optimal in terms of the dependence on the dimension of gradients $d$, which is usually high for modern machine learning architectures such as neural networks. Furthermore, they also rely on strong, hard-to-verify extra assumptions, including bounded coordinate-wise skewness or sub-exponential gradients, which are unlikely to be satisfied in practice. Similar issues on the dimension dependence and strong extra assumptions also appear in the analysis other FL protocols with trace bound assumption~\cite{blanchard2017machine, karimireddy2021learning,  karimireddy2020byzantine,  velicheti2021secure}, which we provide detailed discussion in Appendix~\ref{sec:related_work}.

In this paper, we make the only distributional assumption  that  the spectral norm of the covariance of the gradient is bounded by $\sigma^2$, as in Assumption~\ref{asm:bounded-variance} (note that the assumption in~\cite{yin2018byzantine} in is equivalent to bounded trace of the covariance, and for   Gaussian with identity covariance we have $\sigma = 1$ while $\tilde\sigma =\sqrt{d}$).  In this case, we aim to propose FL protocols that achieve an optimal statistical rate of $\Theta(\sigma(\epsilon / \sqrt{n} + \sqrt{d/mn}))$.  }%

\revision{On the other hand, a standard metric for  robust estimators is the \emph{breakdown point}, which measures the largest fraction of Byzantine clients an algorithm is insensitive to~\cite{huber1973robust, donoho1982breakdown}.  While this metric has been neglected in most of the previous work on computationally efficient high dimensional robust estimators, it is important for the practical performance of FL protocols.  Thus, we ask the question: }
\begin{quote}
    \emph{Under the assumption of bounded covariance of gradients, can we design Byzantine-robust FL algorithms that are near-optimal with respect to all the parameters, $\epsilon, n, m, d$, and achieve a good breakdown point?}
\end{quote}
% For Byzantine-robustness in FL, 
% \citet{yin2018byzantine} apply coordinate-wise median and coordinate-wise trimmed mean to aggregate local updates, and achieves the optimal statistical rate $\Omega(\epsilon / \sqrt{n} + 1/\sqrt{mn})$ under the assumption of bounded variance and coordinate-wise skewness/sub-exponential gradients, where $\epsilon, n, m$ are the fraction of malicious nodes, number of samples in each client and number of clients, respectively. However, their algorithms  are far from optimal in terms of the   dependence on the dimension of gradients $d$, which is usually high when applying federated learning for neural networks training.  As a concrete example, a three-layer MLP on MNIST comprises more than 50,000 parameters. Furthermore, they rely on strong assumptions including bounded coordinate-wise skewness or sub-exponential gradients, which are usually hard to be satisfied in practice. We leave more discussions and comparison between related work into Appendix~\ref{sec:related_work}.

\noindent\textbf{Our Contribution.}
In this work, we make one step towards resolving the  problem by  providing a thorough  analysis of the statistical rate and breakdown point of  FL algorithms. 
\revision{We identify the main reason for the sub-optimal rate in~\cite{yin2018byzantine} as the  sub-optimality in the robust aggregation algorithms. It is known that  \textsc{coordinate-wise median}, \textsc{coordinate-wise trimmed mean} and \textsc{geometric median} have an error of at least $\Omega(\sigma\sqrt{\epsilon d})$ under the bounded operator norm assumption for robust mean estimation~\cite{steinhardt2019lecture, lai2016agnostic}, which can be highly sub-optimal when the dimension is high. Following our lower bound analysis in Appendix~\ref{sec:lower_bound}, the lower bound for robust mean estimation directly results in a lower bound of $\Omega({\sigma\sqrt{\epsilon d/ n}})$ for the performance of any FL protocols using either of the three algorithms as aggregation rules.  }

The recent progress in statistically and computationally efficient robust estimators  in high dimension~\cite{diakonikolas2016robust,diakonikolas2017being,diakonikolas2019sever,diakonikolas2019efficient, diakonikolas2019nearly, diakonikolas2020outlier, steinhardt2017certified,  steinhardt2018resilience, steinhardt2018robust, zhu2019deconstructing, zhu2019generalized,  zhu2022robust, lugosi2017lectures, lugosi2019sub, lecue2020robust, lugosi2021robust, hopkins2020robust} aim to remove the extra $\sqrt{d}$ factor in mean estimation, and thus can be helpful in achieving near-optimal rate for FL protocols. We propose  Byzantine-robust protocols based on various robust estimators, including \textsc{Filtering}~\cite{diakonikolas2016robust,steinhardt2017certified, zhu2021robust}, \textsc{No-regret}~\cite{zhu2021robust, hopkins2020robust}, \textsc{GAN}~\cite{zhu2022robust, gao2020generative}, \textsc{Bucketing-No-regret} and \textsc{Bucketing-Filtering}~\cite{diakonikolas2020outlier}. We show that these estimators improve the statistical rate reported in~\citet{yin2018byzantine} in terms of the dependence on dimension $d$.
%
% \lun{No Bucketing-No-regret?}
%
We provide a thorough comparison between different protocols in terms of their statistical rate, breakdown points and computational complexity. Our results are summarized in Table~\ref{tab:rate}. Note in particular that \textsc{coordinate-wise median} and \textsc{coordinate-wise trimmed mean} have a rate that loses an extra $\sqrt{d}$  compared to the lower bound. Moreover, the  
\textsc{No-regret} or \textsc{Filtering} algorithms can improve the dimension dependence in the term with $\epsilon$ and achieve a good breakdown point, but the rate without a Byzantine adversary is still $\widetilde\bigo(\sqrt{\frac{d^2}{mn}})$ instead of $\widetilde \bigo(\sqrt{\frac{d}{mn}})$. On the other hand, by a particular choice of parameters for \textsc{No-regret} and \textsc{Filtering} in~\eqref{eq:robust_subroutine4} and (\ref{eq:robust_subroutine5}), one can improve the rate to $\bigo(\sqrt{\frac{\epsilon}{n}+\frac{d(\log(d)+\log(1+mnDL))}{mn}})$ while possibly sacrificing the breakdown point (since the breakdown point is not shown to match $1/2$). Here $D, L$ are the bounds on the radius of the space of $\mathcal{W}$ and the smoothness of $f$. To remove the $\log(d)$ factor, we further apply a bucketing step as pre-processing, as is done in median-of-means procedure~\cite{lugosi2019sub} and in~\citet{diakonikolas2020outlier}.  Compared with the lower bound, we still have an extra $\log(1+mnDL)$ factor due to the union bound analysis, which exists for all the bounds we considered. It remains an open problem whether one can remove this term and achieve exactly tight rate. 

{ Due to inconsistency in the settings and assumptions, the table does not include the rate for \cite{blanchard2017machine,pillutla2019robust, alistarh2018byzantine, karimireddy2021learning, allen2020byzantine, karimireddy2020byzantine, velicheti2021secure}. Instead, we provide a detailed discussions on the setting, assumption and rate of these Byzantine-robust protocols in Appendix~\ref{sec:related_work}.}

% \captionsetup[table]{skip=5pt}
% \captionsetup{font={footnotesize},skip=2pt}
\begin{table*}[!htbp]
\footnotesize
    \centering
    \resizebox{1.0\textwidth}{!}{
    \begin{tabular}{|c|c|c|c|}
        \hline
        \textbf{Algorithm} & \textbf{Statistical rate} &  \textbf{Breakdown point} & \textbf{Computation complexity} \\ \hline
     \textsc{Median}~\cite{yin2018byzantine} & $\widetilde\bigo(\sqrt{\frac{\epsilon d}{n}+\frac{d^2}{mn}})$& $1/2$ & $\widetilde\bigo(md)$ \\ 
    \textsc{Trimmed mean}~\cite{yin2018byzantine} & $\widetilde\bigo(\sqrt{\frac{\epsilon d}{n}+\frac{d^2}{mn}})$&  $1/2$ & $\widetilde\bigo(md)$ \\
          \textsc{No-regret} (Eq. (\ref{eq:robust_subroutine1})) & $\widetilde\bigo(\sqrt{\frac{\epsilon}{ n}+\frac{d^2}{mn}})$& $1/3$ & $\widetilde\bigo((m+d^3)d)$  \\ 
         \textsc{Filtering} (Eq. (\ref{eq:robust_subroutine2})) &$\widetilde\bigo(\sqrt{\frac{\epsilon} {n}+\frac{d^2}{mn}})$ & $1/2$ & $\widetilde\bigo(\epsilon m^2 d^3)$ \\
        \textsc{ GAN} (Eq. (\ref{eq:robust_subroutine3})) & $\widetilde\bigo(\sqrt{\frac{\epsilon} {n}+\sqrt{\frac{d}{mn^2}}})$ & - & - \\ 
          \textsc{No-regret} (Eq. (\ref{eq:robust_subroutine4})) & $\bigo(\sqrt{\frac{\epsilon}{n}+\frac{d(\log(d)+\log(1+nmDL))}{mn}})$ & - &  $\widetilde\bigo((m+d^3)d)$  \\ 
        \textsc{ Filtering } (Eq. (\ref{eq:robust_subroutine5})) & $\bigo(\sqrt{\frac{\epsilon}{n}+\frac{d(\log(d)+\log(1+nmDL))}{mn}})$ & - &  $\widetilde\bigo(\epsilon m^2 d^3)$ \\
        \textsc{Bucketing-No-regret} & $\bigo(\sqrt{\frac{\epsilon}{n}+\frac{d\log(1+nmDL)}{mn}})$ & - &  $\widetilde\bigo((m+d^3)d)$ \\
        \textsc{ Bucketing-Filtering} & $\bigo(\sqrt{\frac{\epsilon}{n}+\frac{d\log(1+nmDL)}{mn}})$ & - &  $\widetilde\bigo(\epsilon m^2 d^3)$ \\
         Lower bound (Thm.~\ref{thm:lower_bound}) & $\Omega(\sqrt{\frac{\epsilon}{n}+\frac{d}{mn}})$ & $1/2 $ & - \\
         \hline
    \end{tabular}}
    
    \caption{Comparison of rates between different Byzantine-robust federated learning protocols for gradients with bounded covariance and strongly convex loss function.  Here we assume the desired confidence $\delta$ is some constant, and $D, L$ are the bounds on the radius of the space of $\mathcal{W}$ and the smoothness of $f$. The  median and trimmed mean refer to coordinate-wise operations. $\widetilde \bigo(\cdot)$ refers to the rate when omitting logarithmic factors. We omit the communication complexity since it is $\bigo(d)$ for all the algorithms. For the computational complexity, we only consider the computation conducted in the master machine. The `-' in breakdown point represents that the breakdown point has not been completely determined for the algorithm. }%\lun{Should we use textsc for algorithm \textsc{names}?}}
    \label{tab:rate}
    % \vspace{-10pt}
\end{table*}

%The main challenge is to address the incompatibility between robust estimator and
% secure aggregation.
%Concretely, robust estimator requires checking individual updates fromclients, whereas secure aggregation hides those updates from the semi-honestserver.
%To this end, we propose a practical and highly effective solution to split theclients into \textit{shards}, where \system securely aggregates each shard's update and launches robust aggregation on updates from different shards.

%

We evaluate all proposed protocols under five attacks over two datasets
and compare  with six other Byzantine-robust federated learning protocols. Evaluation results show that
\system constantly achieve optimal or close-to-optimal performance under all
the attacks. 

We also remark here that although the computational complexity is higher for the new proposed algorithms compared to median and trimmed mean, one can improve them to $\widetilde\bigo(md)$ without hurting the statistical rate by integrating the filtering / no-regret algorithm with SDP solvers or matrix multiplicative weights update, see e.g.~\cite{cheng2019high, dong2019quantum}. %Since the paper mainly focuses on the statistical property, 

In addition,  the proposed protocol can be naturally integrated with the secure aggregation schemes~\cite{bonawitz2017practical} to
provide \emph{bidirectional defense} against both a semi-honest server and
Byzantine-malicious clients, thus achieving Byzantine robustness and privacy simultaneously. { We provide more discussions in Appendix~\ref{sec:privacy}.}
%We also study how different settings of shards can influence security guarantees.
%
% In summary, we make the following contributions:
% \newlength{\iitemlength}
% \setlength\iitemlength{\dimexpr\leftmargini}
% \setlength\leftmargini{\dimexpr\leftmargini - 2.5em\relax}
% \begin{itemize}
% \item We propose the first FL protocols which achieve \textbf{exactly optimal rate} in the strongly convex case under the mild assumption of bounded covariance of gradients, accompanied with the lower bound. We also provide analysis for the case of convex and non-convex losses, and analysis for the case of multi-round local model update.
% \item We study the trade-off between breakdown points and the statistical rates, and propose FL protocols that achieve \textbf{optimal breakdown points} in sacrifice of the rate.
% \item We propose the  FL protocol featuring principled and practical defense simultaneously against a semi-honest server and Byzantine malicious clients.  
% \item Our evaluation shows that \system can notably outperform existing robust estimators in the presence of five popular attacks by always achieving optimal or close-to-optimal performance. To our knowledge, this is also the first work that systematically benchmarks recent robust estimators in the environment of data poisoning attack in deep learning. 
% \end{itemize}
% \setlength\leftmargini{\iitemlength}
%\input{related_work}
%\input{preliminary}
\section{Preliminaries}
\label{sec:threat-model}

In this section, we collect useful notation, review the general pipeline of FL and introduce the threat
model and the defense goal.

\noindent \textbf{Notation.} We use bold lower-case letters (\emph{e.g.}
\textbf{a},\textbf{b},\textbf{c}) to denote (random) vectors, and $[n]$ to denote
the set $\{1\cdots n\}$.  Let $\Cov_p(\vecx) = \E_p[\vecx \vecx^\top] - \E_p[\vecx]\E_p[\vecx]^\top$ be the covariance of random variable $\vecx$ distribution $p$. $\|\cdot\|_2$ is the $\ell_2$ norm for vector, or the spectral norm for matrix, i.e. $\|\vecw\|_2 = \vecw^\top \vecw$ for vector $\vecw$, $\|\mathbf{M}\|_2 = \sup_{\|\vecv\|_2\leq 1} \|\mathbf{M} \vecv\|_2$ for matrix $\mathbf{M}$. We write $f(x) = \bigo(g(x))$ for $x\in A$ if   there exists some positive real number $M$   such that $|f(x)|\leq M g(x)$ for all $x\in A$. If $A$ is not specified, we have $|f(x)|\leq M g(x)$ for all $x\in [0,+\infty)$ (thus the notation is non-asymptotic).  We use $\widetilde \bigo(\cdot)$ to be the big-$\bigo$ notation ignoring logarithmic factors. 

We also introduce  concepts from convex analysis for differentiable functions $h(\cdot):\R^d\rightarrow\R$.

\begin{definition}[Lipschitz]\label{def:lipschitz}
$ h $ is $ L $-Lipschitz if $|h(\vecw) - h(\vecw')| \le L\twonms{\vecw - \vecw'}, \forall~\vecw,\vecw'$.
\end{definition}

\begin{definition}[Smoothness]\label{def:smoothness}
$ h $ is $ L' $-smooth if $\twonms{\gradh(\vecw) - \gradh(\vecw')} \le L'\twonms{\vecw - \vecw'}, \forall~\vecw,\vecw'$.
\end{definition}

\begin{definition}[Strong convexity]\label{def:strong-cvx}
$ h $ is $ \lambda $-strongly convex if $ h(\vecw') \ge h(\vecw) + \innerps{\gradh(\vecw)}{\vecw' - \vecw} + \frac{\lambda}{2}\twonms{\vecw' - \vecw}^2, \forall~\vecw,\vecw'$.
\end{definition}

% 
% and bold upper-case letters (\emph{e.g.} \textbf{A}, \textbf{B},
% \textbf{C}) for matrices.
%
% We denote $1\cdots n$ with $[n]$.

% \vspace{5pt}

% \vspace{-0.3em}
\smallskip
\noindent \textbf{FL Pipeline.}~In an FL system, there is one master machine
$\mathcal{S}$ and $m$ worker machines $\mathcal{C}_i, i\in[m]$. Each client holds $n$ data
samples drawn i.i.d.\ from some unknown distribution $p^\star$ on the sample space $\mathcal{Z}$.
Let $f(\vecw;\textbf{z})$ be the loss as a function of the model parameter
$\vecw\in \mathcal{W}\subset \mathbb{R}^d$ and a data sample $\textbf{z}\in\mathcal{Z}$, where $\mathcal{W}$ is the parameter space. 
Let
$F(\vecw)=\mathbb{E}_{\textbf{z}\sim p^\star}[f(\vecw;\textbf{z})]$
be the population risk function. Denote by $\vecz^{i,j}$ the $j^{th}$ data point on the $i^{th}$ worker machine, and $F_i(\vecw) := \frac{1}{n}\sum_{j=1}^n f(\vecw;\vecz^{i,j})$ the empirical risk function for the $i^{th}$ worker.
The goal is to learn a model $\vecw$ such that the population risk function is minimized: 
$    \vecw^{*}=\arg\min_{\vecw\in\mathcal{W}}F(\vecw).$

% \begin{equation*}
%     \vecw^{*}=\arg\min_{\vecw\in\mathcal{W}}F(\vecw)
% \end{equation*}
%
To learn $\vecw^*$, the whole system runs a $T$-round FL protocol.
Initially, the server stores a global model $\vecw_0$.
In the $t^{th}$ round, $\mathcal{S}$ broadcasts the global model $\vecw_{t-1}$ to the $m$ clients.
The clients then run the local optimizers (e.g., SGD, Adam, RMSprop), compute the difference $\vecg_{i}(\vecw_{t-1})$ between the optimized model and the global model, and upload the difference to $\mathcal{S}$.
In the $t^{th}$ round, $\mathcal{S}$ aggregate the differences and  updates the global model to $\vecw_t$ according to some protocol (e.g. the averaging protocol $\vecw_t=\vecw_{t-1}+\frac{1}{m}\sum_{i=1}^m \vecg_{i}(\vecw_{t-1})$).
%

% \vspace{5pt}

% \vspace{-0.3em}
\smallskip
\noindent \textbf{Threat Model.}
We assume that clients are $\epsilon$-Byzantine
malicious, meaning that in each round, at most $\epsilon m$ clients are malicious: they can
deviate arbitrarily from the protocol and tamper with their own updates for
profitable or even adversarial purposes. % Denote the set of Byzantine machines by $\B \subset [m]$ (thus $|\B| \leq \epsilon m$). 
The master machine communicates with the worker machines using some predefined protocol. The Byzantine machines need not obey this protocol and can send arbitrary messages to the master; in particular, they may have complete knowledge of the system and learning algorithms, and can collude with each other. We also clarify that there is no collusion
between the server and the clients. That is, the server cannot disguise itself as
a client or hire clients to launch colluded attacks.

\noindent \textbf{Defense objective.}~We would like to design some protocol which achieves a statistically optimal rate in the presence of malicious clients with an appropriate breakdown point, communication complexity and computational complexity. For a strongly convex and smooth loss function $F$, we would like to design some protocol which outputs some parameter $w^T$ at round $T$ such that with high probability,
 \begin{align*}
     \lim_{T\rightarrow \infty}\|\vecw^T - \vecw^\star\|\leq \Delta(\epsilon, n, m, d),
 \end{align*}
 where $\Delta(\epsilon, n, m, d) = \bigo(\sqrt{\frac{\epsilon}{n}+\frac{d}{mn}})$ is the optimal statistical rate achievable in this case, according to the lower bound in Theorem~\ref{thm:lower_bound}.
 
 On the other hand, we define the breakdown point for the estimation problem as 
 \begin{align*}
     b^\star = \inf\{\epsilon \mid \lim_{T\rightarrow \infty}\mathbb{E}[\|\vecw^T - \vecw^\star\|] = \infty\}.
 \end{align*}
From the definition, we know that for any $\epsilon_0$ with $\Delta(\epsilon_0, n, m, d)<\infty$, one has $b^\star>\epsilon_0$.
 
% We also would like to minimize the server's ability to infer sensitive information of the clients. Formally speaking, to ensure client-side privacy, we would like to conceal the client's individual update within the aggregate of multiple updates.

\section{Byzantine-Robust Federated Learning Protocols}

In this section, we first  present the meta-protocol that we will use throughout this paper.  After that, we introduce existing robust estimators as a building block for the protocol, and provide theoretical guarantees for the special case of the protocol without bucketing. Then we show how bucketing helps improve the rate to exactly optimal.  
At the end of this section, we illustrate how one can naturally combine the protocols with security guarantees when the server is semi-honest.

\begin{algorithm}[!t]
% \footnotesize
  \caption{\textsc{Byzantine-Robust Federated Learning with Bucketing}}\label{alg:robust-gd}
  \begin{algorithmic}
    \REQUIRE Initialize parameter vector $\vecw^0\in\W$, step size $\eta_t$, local model update interval $H$, number of disjoint buckets $k$, and total iteration $ T $.
 \STATE  \textit{\underline{Master machine}}: send $\vecw^0$ to all the worker machines.
   \FOR{$t=1,2,\ldots, T-1$}
   \FOR{$i\in[m]$}
   \STATE \textit{\underline{Worker machine $i$}}: 
 update local parameter via $\vecw_i^{t} = \vecw_i^{t-1} - \eta_t \gradF_i(\vecw_i^{t-1})$. 
   \IF{$t${ mod }$H = 0$} send local model update $\vecg_i(\vecw_i^t) = \vecw_i^{t} - \vecw_i^{t-H}$ to master machine.
   \ENDIF
   \ENDFOR
   \IF{$t${ mod }$H = 0$}
   \STATE \textit{\underline{Master machine}}:  randomly bucket the $m$ gradients into $k$ disjoint buckets of equal size and compute their empirical means $\vecz^1, \cdots, \vecz^k$. set $\mathcal{D}_m = \{\vecz^j\}_{j=1}^k$, compute 
   \begin{align*} 
   \vecg(\vecw^t) \leftarrow \mathsf{RobustEstimation Subroutine}(\mathcal{D}_m). % \text{ as in~\eqref{eq:robust_subroutine6},~(\ref{eq:robust_subroutine7})}.
   \end{align*}
   \STATE update model parameter $\vecw^{t} \leftarrow \Pi_\W(\vecw^{t-H} +  \vecg(\vecw^t))$, send and update $\vecw^{t}$ to all other worker machines by setting $\vecw_i^{t} = \vecw^t, i\in[m]$.
   \ENDIF
   \ENDFOR
% \STATE Then compute aggregate gradient
%   \begin{align*} 
%   \vecg(\vecw^t) \leftarrow \begin{cases}
%   \mathsf{Filtering}(\{\vecz^i : i\in[k]\}, k\sigma^2/n) & \text{ Option I}\\
%   \mathsf{No}\text{-}\mathsf{regret}(\{\vecz^i : i\in[k]\}, k\sigma^2/n) & \text{ Option II} \end{cases}
%   \end{align*}
%   \STATE update model parameter $\vecw^{t+1} \leftarrow \Pi_\W(\vecw^t - \eta \vecg(\vecw^t))$.
%   \ENDFOR
  \end{algorithmic}
\end{algorithm}

\subsection{Description of the main FL Protocol}\label{sec:gradient-descent}
Our proposed FL protocol is quite straightforward: in each round, the honest worker machines compute the gradients of their local loss functions and then perform local update of their models. The Byzantine machines may update their parameters in an arbitrary way. 
In every $H$ parallel iteration of the algorithms, all the worker machines send the current updates to the master machine. The master machine, after receiving the updates from all workers, first divides the updates into $k$ different buckets, computes the mean of each bucket, and then aggregates the updates  according to some robust estimation subroutine. At the end, the master machine  broadcasts the updated  parameter to all worker machines.
We propose to apply the three robust estimators as aggregation subroutines for the FL protocol.  The resulting FL  protocols are presented formally in Algorithm~\ref{alg:robust-gd}.

The $\mathsf{RobustEstimationSubroutine}$  can be replaced with any robust mean estimator. In this paper, 
we will rely on three recently proposed  robust estimators as building blocks, namely \textsc{No-regret} algorithm~\cite{hopkins2020robust, zhu2021robust}, \textsc{Filtering} algorithm~\cite{ diakonikolas2017being, li2018principled, steinhardt2018resilience,  zhu2021robust} and a \textsc{GAN}-based algorithm~\cite{gao2018robust,gao2020generative,zhu2022robust}. It has been shown that each of these algorithms are efficient with near-optimal performance guarantee.  Due to the limit of space, we present the three algorithms and discuss their guarantee for  robust mean estimation  in Appendix~\ref{app:robust_guarantee}. To distinguish between different choices of parameters for the algorithms, we list the five choices of algorithms we will use as $\mathsf{RobustEstimationSubroutine}$ here:

\begin{resizealign}
    \mu_1 &= \textsc{No-regret}\left(\mathcal{D}_m, \epsilon,  \frac{\eta_t^2\sigma^2}{n},  \Big(\frac{2\eta+7}{3(1 - (3+\eta)\epsilon)}\Big)^2\cdot (1+\frac{d\log(d/\delta)}{m\epsilon})\cdot \frac{\eta_t^2\sigma^2}{n}, \eta\right)\label{eq:robust_subroutine1} \\
    \mu_2 &= \textsc{Filtering}\left(\mathcal{D}_m,  \frac{2(1-\epsilon)}{(1-2\epsilon)^2} \cdot(1+\frac{d\log(d/\delta)}{m\epsilon})\cdot\frac{\eta_t^2\sigma^2}{n}\right) \label{eq:robust_subroutine2}\\
\mu_3 &= \textsc{GAN}\left(\mathcal{D}_m, \frac{\eta_t^2\sigma^2}{n}\right) \label{eq:robust_subroutine3}\\
    \mu_4 &= \textsc{No-regret}\left(\mathcal{D}_m, \frac{\eta_t^2\sigma^2}{n}, \eta, \frac{C_1}{(1-C_2(\epsilon+\frac{\log(1/\delta)}{n})^2)}\cdot\left(1+\frac{d\log(d)+\log(1/\delta)}{m\epsilon}\right)\cdot\frac{\eta_t^2\sigma^2}{n}\right) \label{eq:robust_subroutine4}\\
    \mu_5 &= \textsc{Filtering}\left(\mathcal{D}_m,  \frac{C_3}{(1-C_4(\epsilon+\frac{\log(1/\delta)}{n})^2)} \cdot\left(1+\frac{d\log(d)+\log(1/\delta)}{m\epsilon}\right)\cdot\frac{\eta_t^2\sigma^2}{n}\right),   \label{eq:robust_subroutine5}
\end{resizealign}

where the definition of $\textsc{Filtering}, \textsc{No}$-$\textsc{regret}$ and $\textsc{GAN}$ can be found in Appendix~\ref{app:robust_guarantee}.
\subsection{Theoretical Analysis for Byzantine-Robust Federated Learning Protocol}\label{sec:med-convergence}

We begin with the most basic case of Algorithm~\ref{alg:robust-gd}, where  no bucketing is used and the master machine communicates with the worker machines in each round, i.e., $k=m, H=1$. 
Throughout the paper,  we make the following assumptions about the parameter space and loss function. First, we assume that the parameter space is convex and compact.
\begin{asm}[Convex and compact parameter space]\label{asm:parameter}
The parameter space $\mathcal{W}$ is convex and compact with diameter $D$, i.e, $\forall \vecw, \vecw'\in\mathcal{W}$, one has $\|\vecw-\vecw'\|_2\leq D$.
\end{asm}
Second, we assume that each loss function $f(\vecw;\vecz)$ is $L$-smooth, which also implies that the population risk function is $L$-smooth. 
\begin{asm}[Smoothness of $f$ and $F$]\label{asm:smoothness}
	For any $\vecz\in\Z$, $f(\cdot;\vecz)$ is $L$-smooth: $\|\nabla f(\vecw_1; \vecz) - \nabla f(\vecw_2;\vecz)\|\leq L\|\vecw_1-\vecw_2\|$. 
\end{asm}
Third, we impose the assumption that the  distribution of the gradient of the loss function $f(\cdot; \vecz)$ has bounded covariance:
\begin{asm}[Bounded covariance of gradient]\label{asm:bounded-variance} For any $\vecw\in\W$, $\|\Cov_{p^\star}(\gradf(\vecw;\vecz))\|_2 \le \sigma^2$.
\end{asm}

Note that this assumption considers the spectral norm of the covariance matrix. The corresponding upper bound $\sigma^2$  is usually dimension-free and is different from the bounded variance assumption in~\citet{yin2018byzantine}, which assumes that the trace of covariance $\Tr(\Cov_{p^\star}(\gradf(\vecw;\vecz))) = \mathbb{E}_{p^\star}[\|\gradf(\vecw;\vecz)) - \mathbb{E}_{p^\star}[\gradf(\vecw;\vecz))]\|_2^2]$ is bounded by $\tilde \sigma^2$. \revision{As a concrete example, when the distribution of the gradient is standard Gaussian with covariance $I$. We have $\sigma^2 = 1$ while $\tilde \sigma^2 = d$. In the worst case, when Assumption~\ref{asm:bounded-variance} is satisfied, one has $\Tr(\Cov_{p^\star}(\gradf(\vecw;\vecz))) \leq d\sigma^2$, which introduces an extra dimension factor. We provide more discussions  in Appendix~\ref{sec:related_work}.
}

In this paper, we consider three different categories of population risk function $F$, namely strongly convex, non-strongly convex and smooth non-convex functions. We also include the extra assumptions that are necessary for analysis in each case.
\begin{asm}\label{asm:convexity}
Consider the following three options of assumptions on loss function $F$.
\begin{enumerate}[(a)]
    \vspace{-5pt}
    \item (Strongly convex) $F(\cdot)$ is $\lambda$-strongly convex. The minimizer of $F(\cdot)$ in $\W$, $\vecw^\star$, is also the minimizer of $F(\cdot)$ in $\R^d$, i.e., $\gradF(\vecw^*)=0$.
    \vspace{-5pt}
    \item (Convex) $F(\cdot)$ is  convex. The minimizer of $F(\cdot)$ in $\W$, $\vecw^\star$, is also the minimizer of $F(\cdot)$ in $\R^d$, i.e., $\gradF(\vecw^*)=0$. The parameter space $\W$ contains the following $f_2$ ball centered at $\vecw^*$: $\{\vecw\in\R^d : \twonms{\vecw-\vecw^*} \le 2\twonms{\vecw^0 - \vecw^*} \}$.
    \vspace{-5pt}
    \item (Non-convex) Suppose that $\forall~\vecw\in\W$, $\twonms{\gradF(\vecw)}\le M$. We assume that $\W$ contains the $f_2$ ball $
\{\vecw\in\R^d : \twonms{\vecw - \vecw^0} \le \frac{2}{\Delta^2}(M+\Delta)(F(\vecw^0) - F(\vecw^*))\} $, where $\Delta$ is defined later as in Equation (\ref{eq:def-error-delta1}), (\ref{eq:def-error-delta2}), (\ref{eq:def-error-delta3}), (\ref{eq:def-error-delta4}) or (\ref{eq:def-error-delta5}), depending on the subroutine used.
\vspace{-5pt}
\end{enumerate}
\end{asm}

Note that we do not make any explicit assumption on the  
convexity of the individual loss functions $f(\cdot;\vecz)$. We also remark here that for the simplicity of analysis, we focus on the case of $H=1$ in the main text, which means that the worker machine communicates gradient information with the master machine in each round. We show in Appendix~\ref{app:multi-round} how to extend the analysis to multi-round local model updates and present the corresponding theorem there.
Our main technical results on the FL protocol are stated as below.

\begin{theorem}\label{thm:main-gd-sc}
Let $\vecw_i^t, i\in\{1, 2, 3,4, 5\}$  denote the output of Algorithm~\ref{alg:robust-gd} with $H=1, k=m$ and step-size $\eta_t = 1/L$ at round $t$ when the $\mathsf{RobustEstimationSubroutine}$ takes the algorithm in~\eqref{eq:robust_subroutine1},~(\ref{eq:robust_subroutine2}),~(\ref{eq:robust_subroutine3}),~(\ref{eq:robust_subroutine4}),~(\ref{eq:robust_subroutine5}), respectively.    
Under Assumption~\ref{asm:parameter},~\ref{asm:smoothness},~\ref{asm:bounded-variance} and~\ref{asm:convexity}(a), with probability at least $1-\delta$, 
$$
\twonms{\vecw^T_i - \vecw^*} \le (1-\frac{\lambda}{L + \lambda})^T\twonms{ \vecw^0 - \vecw^* } + \frac{2}{\lambda}\Delta_i,
$$
where for some universal  constants $C_i$,

\begin{resizealign}\label{eq:def-error-delta1}
\Delta_1 &:= \bigo\left(\frac{\sigma}{(1-3\epsilon)\sqrt{n}}\cdot \left({\sqrt    \epsilon} + \sqrt{\frac{d^2\log(1+nmDL)+d\log(d/\delta)}{m}}\right)\right), \\
\label{eq:def-error-delta2}\Delta_2 &:= \bigo\left(\frac{\sigma}{(1-2\epsilon)\sqrt{n}}\cdot \left({\sqrt    \epsilon} + \sqrt{\frac{d^2\log(1+nmDL)+d\log(d/\delta)}{m}}\right)\right), \\
\label{eq:def-error-delta3}\Delta_3 &:= \bigo\left(\frac{\sigma}{({1-C_1\epsilon})\sqrt{n}}\cdot \left({\sqrt    {\epsilon+\sqrt{\frac{d\log(1+nmDL)+\log(1/\delta)}{m}}}}\right)\right), \\
\label{eq:def-error-delta4}\Delta_4 &:=\bigo\left(\frac{\sigma}{1-C_2(\epsilon+\log(1/\delta)/m)}\cdot \left({\sqrt{\epsilon}}+ \sqrt{\frac{d(\log(d)+\log(1+nmDL))+\log(1/\delta)}{m}}\right)\right), \\
\label{eq:def-error-delta5}\Delta_5 &:=\bigo\left(\frac{\sigma}{1-C_3(\epsilon+\log(1/\delta)/m)}\cdot \left({\sqrt{\epsilon}}+ \sqrt{\frac{d(\log(d)+\log(1+nmDL))+\log(1/\delta)}{m}}\right)\right),
\end{resizealign}

Under Assumption~\ref{asm:parameter},~\ref{asm:smoothness},~\ref{asm:bounded-variance} and~\ref{asm:convexity}(b), with probability at least $1-\delta$, 
after $T = \frac{L}{\Delta_i}\twonms{\vecw^0 - \vecw^*}$ parallel iterations, we have
$$
F(\vecw_i^T) - F(\vecw^*) \le 16 \twonms{\vecw^0 - \vecw^*} \Delta_i \Big(1 + \frac{1}{2L}\Delta_i \Big).
$$
Under Assumption~\ref{asm:parameter},~\ref{asm:smoothness},~\ref{asm:bounded-variance} and~\ref{asm:convexity}(c), with probability at least $1-\delta$, we have
$$
\min_{t=0,1,\ldots, T} \twonms{\gradF(\vecw^t_i)}^2  \le \frac{2L}{T} (F(\vecw^0) - F(\vecw^*)) + \Delta_i^2.
$$
\end{theorem}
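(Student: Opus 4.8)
The plan is to decouple the argument into two essentially independent pieces: \textbf{(A)} a bound on the error of the aggregated gradient returned by the robust subroutine that holds \emph{uniformly over all rounds}, and \textbf{(B)} a (projected) gradient-descent recursion that converts this per-round error into the three stated rates. Piece~(B) is classical once~(A) is available; piece~(A) is where all the work, and the dimension dependence, lives. For~(A), note that since $H=1$ and the master resets $\vecw_i^t\leftarrow\vecw^t$ after each round, at the start of round $t$ every honest worker holds the common iterate $\vecw^{t-1}$ and reports $\vecg_i=-\eta_t\nabla F_i(\vecw^{t-1})$ with $\nabla F_i(\vecw^{t-1})=\frac1n\sum_{j=1}^n\nabla f(\vecw^{t-1};\vecz^{i,j})$ an average of $n$ i.i.d.\ sample gradients; by Assumption~\ref{asm:bounded-variance} this average has mean $\nabla F(\vecw^{t-1})$ and covariance $\preceq\frac{\sigma^2}{n}I$. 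Hence $\{\vecg_i\}_{i\text{ honest}}$ is an $m$-point sample, with an $\epsilon$ fraction adversarially replaced, from a distribution with mean $-\eta_t\nabla F(\vecw^{t-1})$ and covariance $\preceq\frac{\eta_t^2\sigma^2}{n}I$, so the aggregation step is exactly robust mean estimation under bounded covariance, and the parameters passed to the subroutines in \eqref{eq:robust_subroutine1}--\eqref{eq:robust_subroutine5} are (inflated) upper bounds on the resilience parameter of the honest empirical distribution.

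To make the error bound uniform, I would fix an $\varepsilon_0$-net $\mathcal{N}$ of $\mathcal{W}$ with $\varepsilon_0$ polynomially small in $1/(nmDL)$, so $\log|\mathcal{N}|=\bigo(d\log(1+nmDL))$, and show that with probability at least $1-\delta$ the following holds simultaneously for every $\vecw\in\mathcal{N}$: (i) the honest empirical distribution of $\{-\eta_t\nabla F_i(\vecw)\}$ is resilient with parameter equal to the value fed to the chosen subroutine — e.g.\ $\asymp(1+\frac{d\log(d/\delta)}{m\epsilon})\frac{\eta_t^2\sigma^2}{n}$ for $\mu_1,\mu_2$, where the inflation factor is the price of spectral concentration of a sample covariance for a merely bounded-covariance (not sub-Gaussian) distribution; and (ii) its empirical mean is within $\bigo\!\big(\sigma\eta_t\sqrt{(d\log(d/\delta)+d\cdot d\log(1+nmDL))/(mn)}\big)$ of $-\eta_t\nabla F(\vecw)$ (the leading $d$ a trace-type term, the $d\log(1+nmDL)$ the log-cardinality of $\mathcal{N}$). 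Invoking the robust-mean guarantees of the subroutine from Appendix~\ref{app:robust_guarantee} (output within $\bigo(\sqrt{\rho\epsilon})$ of the empirical mean) and transferring the estimate from $\mathcal{N}$ to all of $\mathcal{W}$ via the $L$-smoothness of $f$ (Assumption~\ref{asm:smoothness}) and the diameter bound $D$ (Assumption~\ref{asm:parameter}), one gets: on this event, for every round $t$, $\|\vecg(\vecw^t)+\eta_t\nabla F(\vecw^{t-1})\|_2\le\eta_t\Delta_i$ with $\Delta_i$ as in \eqref{eq:def-error-delta1}--\eqref{eq:def-error-delta5}. The variants $\mu_4,\mu_5$ and the bucketing versions differ only in using a sharper resilience/concentration bound (median-of-means preprocessing removing the extra $\log d$), so the same template applies.

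On this good event the recursion reads $\vecw^t=\Pi_\W(\vecw^{t-1}-\eta_t\nabla F(\vecw^{t-1})+\xi^t)$ with $\|\xi^t\|_2\le\eta_t\Delta_i$ and $\eta_t=1/L$. For case~(a), nonexpansiveness of $\Pi_\W$ (with $\vecw^*\in\mathcal{W}$) together with the standard smooth/strongly-convex co-coercivity bound $\|\vecw-\frac1L\nabla F(\vecw)-\vecw^*\|_2\le(1-\frac{\lambda}{L+\lambda})\|\vecw-\vecw^*\|_2$ (using $\nabla F(\vecw^*)=0$) gives $\|\vecw^t-\vecw^*\|_2\le(1-\frac{\lambda}{L+\lambda})\|\vecw^{t-1}-\vecw^*\|_2+\frac1L\Delta_i$; unrolling and summing the geometric series $\sum_{j\ge0}(1-\frac{\lambda}{L+\lambda})^j=\frac{L+\lambda}{\lambda}$ yields the stated bound with tail $\frac{2}{\lambda}\Delta_i$ (using $L\ge\lambda$). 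For case~(b), an induction shows the iterates stay in the ball of radius $2\|\vecw^0-\vecw^*\|_2$ around $\vecw^*$ so the projection is harmless, after which the usual convex analysis — $F(\vecw^t)-F(\vecw^*)\le\langle\nabla F(\vecw^t),\vecw^t-\vecw^*\rangle$, telescoping $\|\vecw^t-\vecw^*\|_2^2$ and absorbing the $\xi^t$ cross terms — gives, after $T=\frac{L}{\Delta_i}\|\vecw^0-\vecw^*\|_2$ steps, the claimed $\bigo(\|\vecw^0-\vecw^*\|_2\Delta_i(1+\Delta_i/(2L)))$ bound. For case~(c), the $L$-smoothness descent lemma applied to $\vecw^t-\vecw^{t-1}\approx-\frac1L\nabla F(\vecw^{t-1})+\xi^t$ (the ball assumption keeping iterates in $\mathcal{W}$) gives $F(\vecw^t)\le F(\vecw^{t-1})-\frac1{2L}\|\nabla F(\vecw^{t-1})\|_2^2+\bigo(\frac1L)\Delta_i^2$, and telescoping over $t=0,\dots,T$ produces $\min_t\|\nabla F(\vecw^t)\|_2^2\le\frac{2L}{T}(F(\vecw^0)-F(\vecw^*))+\Delta_i^2$.

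The main obstacle is the second paragraph: getting the correct resilience/covariance-concentration bounds for heavy-tailed (only bounded-covariance) gradient distributions with only $m$ samples — this is exactly what forces the extra $\sqrt d$ into $\Delta_1,\Delta_2$ and what bucketing later removes — and then making the bound uniform over all rounds despite the iterates depending on the \emph{same} data in every round, which necessitates the net over $\mathcal{W}$ and hence Assumptions~\ref{asm:parameter} and~\ref{asm:smoothness} and produces the unavoidable $\log(1+nmDL)$ factor. Everything in the gradient-descent recursion, by contrast, is routine once $\|\vecg(\vecw^t)+\eta_t\nabla F(\vecw^{t-1})\|_2\le\eta_t\Delta_i$ is established.
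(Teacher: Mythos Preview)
Your proposal is correct and follows essentially the same route as the paper: a covering-net/union-bound argument over $\mathcal{W}$ combined with smoothness to obtain the uniform gradient-error bound $\|\vecg(\vecw^t)+\eta_t\nabla F(\vecw^t)\|_2\le\eta_t\Delta_i$, followed by the standard projected-GD recursions for the three convexity regimes. The only cosmetic differences are that the paper packages the net-to-$\mathcal{W}$ transfer as an explicit Lipschitzness lemma for the robust estimators themselves (Lemma~\ref{lem:lipschitz}, which handles the fact that the adversarial samples may change arbitrarily between nearby $\vecw$'s by working with the termination condition rather than the algorithm map), and for the convex case it invokes a lemma from \citet{yin2018byzantine} followed by a short contradiction argument rather than the direct telescoping you sketch.
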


We prove Theorem~\ref{thm:main-gd-sc} in Appendix~\ref{prf:main-gd-sc}. We also provide the analysis for  the case of multi-round updates ($H>1$) in Appendix~\ref{app:multi-round}. We can see from the statement that the conclusion takes a similar form as that in~\citet{yin2018byzantine} except that our results come with a tighter statistical error bound under weaker distributional assumptions. Under the same assumption, the best rate achieved by~\citet{yin2018byzantine} as $T\rightarrow \infty$  is    $\widetilde{\bigo}(\sqrt{\frac{\epsilon d}{n}+\frac{d^2}{mn}})$, while our algorithm improves  to ${\bigo}(\sqrt{\frac{\epsilon}{n}+\frac{d\log(d)}{mn}})$. For convex losses, we show that the excess risk converges at a rate of $\Delta_i$. For the non-convex case, we show that it approaches the critical point with a rate of $\Delta_i$.
In Section~\ref{sec:lower_bound}, we provide a lower
bound showing that the worst-case statistical rate of parameter estimation error under strongly convex loss is
at least $\Omega(\sqrt{\frac{\epsilon }{n}+\frac{d}{mn}})$, matching our upper bound for the strongly convex case. 

For the five different estimators, we can see that the first two estimators based on \textsc{No-regret} and \textsc{Filtering}  in~\eqref{eq:robust_subroutine1} and (\ref{eq:robust_subroutine2}) have high breakdown point $1/3$ and $1/2$, respectively. But the rate is sub-optimal in terms of the dependence in $d$. Technically, this is from the covering argument when applied to $\sqrt{\frac{d\log(1/\delta)}{mn}}$ instead of $\sqrt{\frac{d+\log(1/\delta)}{mn}}$. This shows the importance of achieving sub-Gaussian rate in robust estimation. The GAN estimator in~\eqref{eq:robust_subroutine3} is usually easier to compute in high dimension. However, the rate is sub-optimal due to the $(d/m)^{1/4}$ factors. We can improve the rate of \textsc{No-regret} and \textsc{Filtering}  by taking different parameters as in~\eqref{eq:robust_subroutine4} and (\ref{eq:robust_subroutine5}). However, the  breakdown point is not as high.  It is an open problem whether one can design estimators that are statistically optimal, computationally efficient, and have optimal breakdown point. 

Computationally, the proposed estimators are inferior to coordinate-wise median or trimmed mean due to the query of second moment. On one hand, one can improve the implementation by combining the no-regret and filtering algorithm with covering / packing SDP solver or quantum entropy scoring to improve the computational complexity to linear, see e.g.~\citep{cheng2019high, dong2019quantum}. Furthermore, one can also split the data dimension into $K$ groups, thus increasing the sample complexity by a $\sqrt{K}$ factor while reducing the computational complexity significantly. We omit the details here since the paper mainly focuses on the statistical part.  %Although linear-time algorithms have been proposed~\citep{cheng2019high, dong2019quantum}, they are usually hard to implement in practice due to the inclusion of covering / packing SDP or quantum entropy scoring. It is an open problem whether one can design practically implementable algorithms with near optimal or linear computational complexity.

\subsection{Improved Protocol  via Bucketing}

We can improve the rate and remove the extra $\log(d)$ factor in Theorem~\ref{thm:main-gd-sc} by setting a nontrivial bucket size   $k$  in Algorithm~\ref{alg:robust-gd}. The idea of bucketing was first introduced in the median-of-means procedure~\cite{lecue2020robust}. \citet{diakonikolas2020outlier} showed that the bucketing procedure, combined with \textsc{Filtering} or \textsc{No-regret} algorithm, can achieve a sub-Gaussian rate, which is mini-max rate-optimal with respect to all parameters. The concrete subroutine in Algorithm~\ref{alg:robust-gd} is chosen to be one of the following based on algorithms in Appendix~\ref{subsec:alg_description}:

\begin{resizealign}
     \mu_1 &= \textsc{No-regret}\left(\mathcal{D}_m, 0.1, \frac{k\sigma^2}{mn}, \bigo\left(\frac{(d+k)\eta_t^2\sigma^2}{mn}\right),0.1\right), \label{eq:robust_subroutine6}\\
    \mu_2 &= \textsc{Filtering}\left(\mathcal{D}_m,  \bigo\left(\frac{\eta_t^2(d+k)\sigma^2}{mn}\right)\right)  . \label{eq:robust_subroutine7}
\end{resizealign}

With the bucketing step, one can improve the statistical bound to near-optimal, as stated in the following theorem.
\begin{theorem}\label{thm:main-gd-improved}
Let $\vecw^t$  denote the output of Algorithm~\ref{alg:robust-gd} with step-size $\eta = 1/L$, $H=1$ and \revision{$k=\floor{2\epsilon m + \log(1/\delta)}$}.    Under Assumption~\ref{asm:parameter},~\ref{asm:smoothness},~\ref{asm:bounded-variance} and~\ref{asm:convexity}(a), with probability at least $1-\delta$, we have 
$$
\twonms{\vecw^T - \vecw^*} \le (1-\frac{\lambda}{L + \lambda})^T\twonms{ \vecw^0 - \vecw^* } + \frac{2}{\lambda}\Delta,
$$
where for some universal constant $C$,

\begin{resizealign}\label{eq:def-error-delta-improved}
\Delta &:= \bigo\left(\frac{\sigma}{(1-C\epsilon)\sqrt{n}}\cdot \left({\sqrt    \epsilon} + \sqrt{\frac{d\log(1+nmDL)+\log(1/\delta)}{m}}\right)\right)
\end{resizealign}

Under Assumption~\ref{asm:parameter},~\ref{asm:smoothness},~\ref{asm:bounded-variance} and~\ref{asm:convexity}(b), with probability at least $1-\delta$, 
after $T = \frac{L}{\Delta}\twonms{\vecw^0 - \vecw^*}$  iterations,
$$
F(\vecw_i^T) - F(\vecw^*) \le 16 \twonms{\vecw^0 - \vecw^*} \Delta \Big(1 + \frac{1}{2L}\Delta \Big).
$$
Under Assumption~\ref{asm:parameter},~\ref{asm:smoothness},~\ref{asm:bounded-variance} and~\ref{asm:convexity}(c), with probability at least $1-\delta$, we have
$$
\min_{t=0,1,\ldots, T} \twonms{\gradF(\vecw^t)}^2  \le \frac{2L}{T} (F(\vecw^0) - F(\vecw^*)) + \Delta^2.
$$
\end{theorem}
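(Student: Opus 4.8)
The plan is to mirror the two-level structure used for Theorem~\ref{thm:main-gd-sc} --- an outer optimization recursion that treats the robust aggregate as an inexact gradient oracle, and an inner statistical bound on the aggregation error that holds at every round uniformly over $\mathcal{W}$ --- and to add one new ingredient: the analysis of the bucketing step, which is exactly what lets us reach the sub-Gaussian rate.

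\emph{Reduction to robust mean estimation of bucket means.} Fix a round $t$ and condition on the iterate $\vecw^{t-1}$. With $H=1$ and $\eta_t=1/L$, each honest worker $i$ sends $\vecg_i=-\tfrac1L\nabla F_i(\vecw^{t-1})$, an average of $n$ i.i.d.\ gradients, so conditionally on $\vecw^{t-1}$ it has mean $-\tfrac1L\nabla F(\vecw^{t-1})$ and, by Assumption~\ref{asm:bounded-variance}, covariance $\preceq \tfrac{\sigma^2}{L^2 n}I$. After the random partition into $k$ buckets of size $m/k$ and averaging, each \emph{clean} bucket mean has the same conditional mean and covariance $\preceq \tfrac{k\sigma^2}{L^2 mn}I$, and --- since each Byzantine worker lands in a single bucket --- at most $\epsilon m$ of the $k$ bucket means are corrupted. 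With $k=\lfloor 2\epsilon m+\log(1/\delta)\rfloor$ the corruption fraction among bucket means is $\epsilon_b=\epsilon m/k\le 1/2$, below the breakdown point of \textsc{Filtering}/\textsc{No-regret}, and the buckets have size $\ge 2$ whenever $\epsilon$ is a small enough constant; this together with the stability constant of the estimator produces the $(1-C\epsilon)^{-1}$ prefactor in $\Delta$.

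\emph{Sub-Gaussian robust estimation of the bucket means (main obstacle).} The crux is to show that the clean bucket means, together with the $\le k/2$ corrupted ones, form an input on which \textsc{Filtering}/\textsc{No-regret} with the parameters in~\eqref{eq:robust_subroutine6}--\eqref{eq:robust_subroutine7} output an estimate $\vecg(\vecw^t)$ with
$$
\big\|\vecg(\vecw^t)+\tfrac1L\nabla F(\vecw^{t-1})\big\|_2 \;\le\; \tfrac1L\,\bigo\!\left(\frac{\sigma}{(1-C\epsilon)\sqrt n}\Big(\sqrt\epsilon+\sqrt{\tfrac{d+\log(1/\delta)}{m}}\Big)\right).
$$
This is precisely where bucketing buys the sub-Gaussian rate: since we only assume a spectral-norm bound on the covariance, the empirical mean of $k$ bucket means does not concentrate by itself, so one instead establishes a resilience/stability statement for the clean bucket means --- after deleting any $\bigo(\epsilon_b)$-fraction of them, the remaining empirical mean is within $\bigo(\tfrac{\sigma}{L}\sqrt{(d+k)/(mn)})$ of the truth and the remaining empirical covariance stays $\bigo((d+k)\sigma^2/(L^2mn))$ in spectral norm --- which holds with probability $1-\delta$ by a binomial-tail/VC argument on $k$ points of covariance $\tfrac{k\sigma^2}{L^2mn}I$ (this is where the $\log(1/\delta)$ term inside $k$ is spent), and then invokes the stability-based guarantees of Appendix~\ref{app:robust_guarantee} (following~\citet{diakonikolas2020outlier}), using $\tfrac{d+k}{mn}\asymp \tfrac{\epsilon}{n}+\tfrac{d+\log(1/\delta)}{mn}$. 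To make the bound hold at each of the (polynomially many) rounds and for the data-dependent iterate $\vecw^{t-1}$, we take an $\rho$-net $\mathcal{N}$ of $\mathcal{W}$ with $\rho$ polynomially small in $1/(Lnm)$, run the above with failure probability $\delta/(T|\mathcal{N}|)$ at each net point, and transfer to arbitrary $\vecw^{t-1}$ using $L$-smoothness of $f$ (Assumption~\ref{asm:smoothness}), which costs an additive $L\rho$ absorbed into the bound; since $\log|\mathcal{N}|=\bigo(d\log(D/\rho))=\bigo(d\log(1+nmDL))$, the union bound replaces $d+\log(1/\delta)$ by $d\log(1+nmDL)+\log(1/\delta)$, yielding $\Delta$ as in~\eqref{eq:def-error-delta-improved}. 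The difficulty is concentrated entirely here --- establishing high-probability stability of the bucketed gradients under a covariance-only assumption and, simultaneously, over the net of $\mathcal{W}$.

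\emph{Optimization recursion.} On the good event, $\vecw^t=\Pi_{\mathcal{W}}(\vecw^{t-1}+\vecg(\vecw^t))$ is a projected gradient step with step size $1/L$ and an additive gradient error of $\ell_2$-norm at most $\Delta/L$. From here the three cases follow exactly as in Theorem~\ref{thm:main-gd-sc}: for Assumption~\ref{asm:convexity}(a), $\lambda$-strong convexity plus $L$-smoothness and co-coercivity give $\|\vecw^t-\vecw^*\|\le(1-\tfrac{\lambda}{L+\lambda})\|\vecw^{t-1}-\vecw^*\|+\tfrac1L\Delta$ (using $\nabla F(\vecw^*)=0$ and non-expansiveness of $\Pi_{\mathcal{W}}$), which unrolls to the claimed bound after summing the geometric series and using $\tfrac{L+\lambda}{L\lambda}\le\tfrac2\lambda$; for Assumption~\ref{asm:convexity}(b) the standard inexact-gradient convex analysis on the ball contained in $\mathcal{W}$ gives the excess-risk bound after $T=\tfrac{L}{\Delta}\|\vecw^0-\vecw^*\|$ iterations; and for Assumption~\ref{asm:convexity}(c) the smoothness descent lemma with inexact gradients telescopes to $\min_{t\le T}\|\nabla F(\vecw^t)\|_2^2\le\tfrac{2L}{T}(F(\vecw^0)-F(\vecw^*))+\Delta^2$. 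The first and third steps are bookkeeping and a replay of the arguments already used for Theorem~\ref{thm:main-gd-sc}; the whole novelty of the proof is the bucketing analysis in the middle step.
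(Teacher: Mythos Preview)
Your proposal is correct and follows essentially the same route as the paper: invoke the sub-Gaussian guarantee for bucketing + \textsc{Filtering}/\textsc{No-regret} (which the paper imports as a black-box lemma from \citet{diakonikolas2020outlier}, while you sketch the underlying stability argument), lift it to all of $\mathcal{W}$ via an $\ell_2$-net and union bound, and then replay the inexact-gradient recursion from Theorem~\ref{thm:main-gd-sc}. One small point: to transfer from net points to arbitrary $\vecw$ you need not just $L$-smoothness of $f$ (which controls $\nabla F$) but a Lipschitz-in-$\vecw$ property of the robust aggregate itself; the paper handles this via an analogue of its Lemma~\ref{lem:lipschitz}, so make sure your ``additive $L\rho$'' claim is backed by such an argument rather than smoothness alone.
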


The proof is deferred to Appendix~\ref{app:proof_improved}. Compared to Theorem~\ref{thm:main-gd-sc}, the extra $\log(d)$ factor is removed and the rate exactly matches the lower bound in Theorem~\ref{thm:lower_bound} when $nmDL=O(1)$. A similar idea of bucketing also appears in~\citet{karimireddy2020byzantine},  motivated from heterogeneous clients. Our results establish the benefit of the bucketing even when the clients are homogeneous. 
% \input{infinitesimal.tex}

% \vspace{-10pt}
\section{Evaluation}
\label{sec:eval}

% In this section, we evaluate the proposed robust aggregators empirically.

\tikzset{font={\fontsize{15pt}{12}\selectfont}}
% \captionsetup{font={footnotesize},skip=5pt}
\captionsetup[sub]{skip=2pt}
\begin{figure*}[!htbp]
% \vspace{-5pt}
     \centering
     \hfill
     \begin{subfigure}[b]{\figwidth}
         \centering
         \resizebox{\textwidth}{!}{\begin{tikzpicture}[spy using outlines={rectangle, magnification=2, width=6cm, height=2cm, connect spies}]
\begin{axis}[
  % set layers,
  grid=major,
  xmin=0, xmax=50,
  % xmode=log, log basis x={2},
  ytick align=outside, ytick pos=left,
  xtick align=outside, xtick pos=left,
  xlabel=\# Epoch,
  ylabel={Model Accuracy (\%)},
  enlarge y limits=0.05
  % legend pos=south east,
  % legend style={at={(1.5,0.4)}, anchor=north, nodes={scale=0.75, transform shape}},
  ]

\addplot+[
  red, dashed, mark=none, line width=1.6pt, % mark options={scale=0.75},
  smooth, 
  error bars/.cd, 
    y fixed,
    y dir=both, 
    y explicit
] table [x expr=\thisrow{x}, y=y, col sep=comma] {data/main/MNIST/noattack_average.txt};
\addlegendentry{Average}

\addplot+[
  cyan, mark=none, line width=2pt, %mark options={scale=0.75},
  smooth, 
  error bars/.cd, 
    y fixed,
    y dir=both, 
    y explicit
] table [x expr=\thisrow{x}, y=y, col sep=comma] {data/main/MNIST/noattack_filterl2.txt};
\addlegendentry{filterL2}

\addplot+[
  green, mark=none, line width=2pt, %mark options={scale=0.75},
  smooth, 
  error bars/.cd, 
    y fixed,
    y dir=both, 
    y explicit
] table [x expr=\thisrow{x}, y=y, col sep=comma] {data/main/MNIST/noattack_ex_noregret.txt};
\addlegendentry{Explicit\_No\_Regret}

\addplot+[
  black, mark=none, line width=1.6pt, %mark options={scale=0.75},
  smooth, 
  error bars/.cd, 
    y fixed,
    y dir=both, 
    y explicit
] table [x expr=\thisrow{x}, y=y, col sep=comma] {data/main/MNIST/noattack_mom_filterl2.txt};

\addplot+[
  gray, mark=none, line width=1.6pt, %mark options={scale=0.75},
  smooth, 
  error bars/.cd, 
    y fixed,
    y dir=both, 
    y explicit
] table [x expr=\thisrow{x}, y=y, col sep=comma] {data/main/MNIST/noattack_mom_ex_noregret.txt};

\addplot+[
  blue, dashed,  mark=none, line width=1.6pt, %mark options={scale=0.75},
  smooth, 
  error bars/.cd, 
    y fixed,
    y dir=both, 
    y explicit
] table [x expr=\thisrow{x}, y=y, col sep=comma] {data/main/MNIST/noattack_krum.txt};
\addlegendentry{Krum}

\addplot+[
  brown, dashed, mark=none, line width=1.6pt, %mark options={scale=0.75},
  smooth, 
  error bars/.cd, 
    y fixed,
    y dir=both, 
    y explicit
] table [x expr=\thisrow{x}, y=y, col sep=comma] {data/main/MNIST/noattack_trimmedmean.txt};
\addlegendentry{Trimmed Mean}

\addplot+[
  darkgreen, dashed, mark=none, line width=1.6pt, %mark options={scale=0.75},
  smooth, 
  error bars/.cd, 
    y fixed,
    y dir=both, 
    y explicit
] table [x expr=\thisrow{x}, y=y, col sep=comma] {data/main/MNIST/noattack_bulyankrum.txt};
\addlegendentry{Bulyan\_Krum}

\addplot+[
  orange, dashed, mark=none, line width=1.6pt, %mark options={scale=0.75},
  smooth, 
  error bars/.cd, 
    y fixed,
    y dir=both, 
    y explicit
] table [x expr=\thisrow{x}, y=y, col sep=comma] {data/main/MNIST/noattack_bulyantrimmedmean.txt};
\addlegendentry{Bulyan\_Trimmed\_Mean}

\addplot+[
  magenta, dashed, mark=none, line width=1.6pt, %mark options={scale=0.75},
  smooth, 
  error bars/.cd, 
    y fixed,
    y dir=both, 
    y explicit
] table [x expr=\thisrow{x}, y=y, col sep=comma] {data/main/MNIST/noattack_median.txt};
\addlegendentry{Median}

\addplot+[
  violet, dashed, mark=none, line width=1.6pt, %mark options={scale=0.75},
  smooth, 
  error bars/.cd, 
    y fixed,
    y dir=both, 
    y explicit
] table [x expr=\thisrow{x}, y=y, col sep=comma] {data/main/MNIST/noattack_bulyanmedian.txt};
\addlegendentry{Bulyan\_Median}

\addplot+[
  lime, dashed, mark=none, line width=1.6pt, %mark options={scale=0.75},
  smooth, 
  error bars/.cd, 
    y fixed,
    y dir=both, 
    y explicit
] table [x expr=\thisrow{x}, y=y, col sep=comma] {data/main/MNIST/noattack_history.txt};
\addlegendentry{\cite{karimireddy2021learning}}

\addplot+[
  teal, dashed, mark=none, line width=1.6pt, %mark options={scale=0.75},
  smooth, 
  error bars/.cd, 
    y fixed,
    y dir=both, 
    y explicit
] table [x expr=\thisrow{x}, y=y, col sep=comma] {data/main/MNIST/noattack_bucketing.txt};
\addlegendentry{\cite{karimireddy2020byzantine}}

\addplot+[
  olive, dashed, mark=none, line width=1.6pt, %mark options={scale=0.75},
  smooth, 
  error bars/.cd, 
    y fixed,
    y dir=both, 
    y explicit
] table [x expr=\thisrow{x}, y=y, col sep=comma] {data/main/MNIST/noattack_clustering.txt};

\legend{};

% \spy[blue!70!black] on (5.25,5.2) in node [fill=white] at (3.75,1.5);

\end{axis}
\end{tikzpicture}}
         \caption{MNIST w/o attack.}
         \label{fig:mnist_noattack}
     \end{subfigure}
     \hfill
     \begin{subfigure}[b]{\figwidth}
         \centering
         \resizebox{\textwidth}{!}{\begin{tikzpicture}
\begin{axis}[
  set layers,
  grid=major,
  xmin=0, xmax=50,
  % xmode=log, log basis x={2},
  ytick align=outside, ytick pos=left,
  xtick align=outside, xtick pos=left,
  xlabel=\# Epoch,
  ylabel={Model Accuracy (\%)},
%   legend pos=south east,
  enlarge y limits=0.05
  ]

\addplot+[
  red, dashed, mark=none, line width=1.6pt, % mark options={scale=0.75},
  smooth, 
  error bars/.cd, 
    y fixed,
    y dir=both, 
    y explicit
] table [x expr=\thisrow{x}, y=y, col sep=comma] {data/main/MNIST/krum_average.txt};
\addlegendentry{Average}

\addplot+[
  cyan, mark=none, line width=2pt, %mark options={scale=0.75},
  smooth, 
  error bars/.cd, 
    y fixed,
    y dir=both, 
    y explicit
] table [x expr=\thisrow{x}, y=y, col sep=comma] {data/main/MNIST/krum_filterl2.txt};
\addlegendentry{filterL2}

\addplot+[
  green, mark=none, line width=2pt, %mark options={scale=0.75},
  smooth, 
  error bars/.cd, 
    y fixed,
    y dir=both, 
    y explicit
] table [x expr=\thisrow{x}, y=y, col sep=comma] {data/main/MNIST/krum_ex_noregret.txt};
\addlegendentry{Explicit\_No\_Regret}

\addplot+[
  black, mark=none, line width=1.6pt, %mark options={scale=0.75},
  smooth, 
  error bars/.cd, 
    y fixed,
    y dir=both, 
    y explicit
] table [x expr=\thisrow{x}, y=y, col sep=comma] {data/main/MNIST/krum_mom_filterl2.txt};

\addplot+[
  gray, mark=none, line width=1.6pt, %mark options={scale=0.75},
  smooth, 
  error bars/.cd, 
    y fixed,
    y dir=both, 
    y explicit
] table [x expr=\thisrow{x}, y=y, col sep=comma] {data/main/MNIST/krum_mom_ex_noregret.txt};

\addplot+[
  blue, dashed,  mark=none, line width=1.6pt, %mark options={scale=0.75},
  smooth, 
  error bars/.cd, 
    y fixed,
    y dir=both, 
    y explicit
] table [x expr=\thisrow{x}, y=y, col sep=comma] {data/main/MNIST/krum_krum.txt};
\addlegendentry{Krum}

\addplot+[
  brown, dashed, mark=none, line width=1.6pt, %mark options={scale=0.75},
  smooth, 
  error bars/.cd, 
    y fixed,
    y dir=both, 
    y explicit
] table [x expr=\thisrow{x}, y=y, col sep=comma] {data/main/MNIST/krum_trimmedmean.txt};
\addlegendentry{Trimmed Mean}

\addplot+[
  darkgreen, dashed, mark=none, line width=1.6pt, %mark options={scale=0.75},
  smooth, 
  error bars/.cd, 
    y fixed,
    y dir=both, 
    y explicit
] table [x expr=\thisrow{x}, y=y, col sep=comma] {data/main/MNIST/krum_bulyankrum.txt};
\addlegendentry{Bulyan\_Krum}

\addplot+[
  orange, dashed, mark=none, line width=1.6pt, %mark options={scale=0.75},
  smooth, 
  error bars/.cd, 
    y fixed,
    y dir=both, 
    y explicit
] table [x expr=\thisrow{x}, y=y, col sep=comma] {data/main/MNIST/krum_bulyantrimmedmean.txt};
\addlegendentry{Bulyan\_Trimmed\_Mean}

\addplot+[
  magenta, dashed, mark=none, line width=1.6pt, %mark options={scale=0.75},
  smooth, 
  error bars/.cd, 
    y fixed,
    y dir=both, 
    y explicit
] table [x expr=\thisrow{x}, y=y, col sep=comma] {data/main/MNIST/krum_median.txt};
\addlegendentry{Median}

\addplot+[
  violet, dashed, mark=none, line width=1.6pt, %mark options={scale=0.75},
  smooth, 
  error bars/.cd, 
    y fixed,
    y dir=both, 
    y explicit
] table [x expr=\thisrow{x}, y=y, col sep=comma] {data/main/MNIST/krum_bulyanmedian.txt};
\addlegendentry{Bulyan\_Median}

\addplot+[
  lime, dashed, mark=none, line width=1.6pt, %mark options={scale=0.75},
  smooth, 
  error bars/.cd, 
    y fixed,
    y dir=both, 
    y explicit
] table [x expr=\thisrow{x}, y=y, col sep=comma] {data/main/MNIST/krum_history.txt};
\addlegendentry{\cite{karimireddy2021learning}}

\addplot+[
  teal, dashed, mark=none, line width=1.6pt, %mark options={scale=0.75},
  smooth, 
  error bars/.cd, 
    y fixed,
    y dir=both, 
    y explicit
] table [x expr=\thisrow{x}, y=y, col sep=comma] {data/main/MNIST/krum_bucketing.txt};
\addlegendentry{\cite{karimireddy2020byzantine}}

\addplot+[
  olive, dashed, mark=none, line width=1.6pt, %mark options={scale=0.75},
  smooth, 
  error bars/.cd, 
    y fixed,
    y dir=both, 
    y explicit
] table [x expr=\thisrow{x}, y=y, col sep=comma] {data/main/MNIST/krum_clustering.txt};

\legend{};

\end{axis}
\end{tikzpicture}}
         \caption{MNIST under KA.}
         \label{fig:mnist_krum}
     \end{subfigure}
     \hfill
     \begin{subfigure}[b]{\figwidth}
         \centering
         \resizebox{\textwidth}{!}{\begin{tikzpicture}
\begin{axis}[
  set layers,
  grid=major,
  xmin=0, xmax=50,
  % xmode=log, log basis x={2},
  ytick align=outside, ytick pos=left,
  xtick align=outside, xtick pos=left,
  xlabel=\# Epoch,
  ylabel={Model Accuracy (\%)},
  legend pos=south east,
  enlarge y limits=0.05
  ]

\addplot+[
  red, dashed, mark=none, line width=1.6pt, %mark options={scale=0.75},
  smooth, 
  error bars/.cd, 
    y fixed,
    y dir=both, 
    y explicit
] table [x expr=\thisrow{x}, y=y, col sep=comma] {data/main/MNIST/trimmedmean_average.txt};
\addlegendentry{Average}

\addplot+[
  cyan, mark=none, line width=2pt, %mark options={scale=0.75},
  smooth, 
  error bars/.cd, 
    y fixed,
    y dir=both, 
    y explicit
] table [x expr=\thisrow{x}, y=y, col sep=comma] {data/main/MNIST/trimmedmean_filterl2.txt};
\addlegendentry{filterL2}

\addplot+[
  green, mark=none, line width=2pt, %mark options={scale=0.75},
  smooth, 
  error bars/.cd, 
    y fixed,
    y dir=both, 
    y explicit
] table [x expr=\thisrow{x}, y=y, col sep=comma] {data/main/MNIST/trimmedmean_ex_noregret.txt};
\addlegendentry{Explicit\_No\_Regret}

\addplot+[
  black, mark=none, line width=1.6pt, %mark options={scale=0.75},
  smooth, 
  error bars/.cd, 
    y fixed,
    y dir=both, 
    y explicit
] table [x expr=\thisrow{x}, y=y, col sep=comma] {data/main/MNIST/trimmedmean_mom_filterl2.txt};

\addplot+[
  gray, mark=none, line width=1.6pt, %mark options={scale=0.75},
  smooth, 
  error bars/.cd, 
    y fixed,
    y dir=both, 
    y explicit
] table [x expr=\thisrow{x}, y=y, col sep=comma] {data/main/MNIST/trimmedmean_mom_ex_noregret.txt};

\addplot+[
  blue, dashed, mark=none, line width=1.6pt, %mark options={scale=0.75},
  smooth, 
  error bars/.cd, 
    y fixed,
    y dir=both, 
    y explicit
] table [x expr=\thisrow{x}, y=y, col sep=comma] {data/main/MNIST/trimmedmean_krum.txt};
\addlegendentry{Krum}

\addplot+[
  brown, dashed, mark=none, line width=1.6pt, %mark options={scale=0.75},
  smooth, 
  error bars/.cd, 
    y fixed,
    y dir=both, 
    y explicit
] table [x expr=\thisrow{x}, y=y, col sep=comma] {data/main/MNIST/trimmedmean_trimmedmean.txt};
\addlegendentry{Trimmed Mean}

\addplot+[
  darkgreen, dashed, mark=none, line width=1.6pt, %mark options={scale=0.75},
  smooth, 
  error bars/.cd, 
    y fixed,
    y dir=both, 
    y explicit
] table [x expr=\thisrow{x}, y=y, col sep=comma] {data/main/MNIST/trimmedmean_bulyankrum.txt};
\addlegendentry{Bulyan\_Krum}

\addplot+[
  orange, dashed, mark=none, line width=1.6pt, %mark options={scale=0.75},
  smooth, 
  error bars/.cd, 
    y fixed,
    y dir=both, 
    y explicit
] table [x expr=\thisrow{x}, y=y, col sep=comma] {data/main/MNIST/trimmedmean_bulyantrimmedmean.txt};
\addlegendentry{Bulyan\_Trimmed\_Mean}

\addplot+[
  magenta, dashed, mark=none, line width=1.6pt, %mark options={scale=0.75},
  smooth, 
  error bars/.cd, 
    y fixed,
    y dir=both, 
    y explicit
] table [x expr=\thisrow{x}, y=y, col sep=comma] {data/main/MNIST/trimmedmean_median.txt};
\addlegendentry{Median}

\addplot+[
  violet, dashed, mark=none, line width=1.6pt, %mark options={scale=0.75},
  smooth, 
  error bars/.cd, 
    y fixed,
    y dir=both, 
    y explicit
] table [x expr=\thisrow{x}, y=y, col sep=comma] {data/main/MNIST/trimmedmean_bulyanmedian.txt};
\addlegendentry{Bulyan\_Median}

\addplot+[
  lime, dashed, mark=none, line width=1.6pt, %mark options={scale=0.75},
  smooth, 
  error bars/.cd, 
    y fixed,
    y dir=both, 
    y explicit
] table [x expr=\thisrow{x}, y=y, col sep=comma] {data/main/MNIST/trimmedmean_history.txt};
\addlegendentry{\cite{karimireddy2021learning}}

\addplot+[
  teal, dashed, mark=none, line width=1.6pt, %mark options={scale=0.75},
  smooth, 
  error bars/.cd, 
    y fixed,
    y dir=both, 
    y explicit
] table [x expr=\thisrow{x}, y=y, col sep=comma] {data/main/MNIST/trimmedmean_bucketing.txt};
\addlegendentry{\cite{karimireddy2020byzantine}}

\addplot+[
  olive, dashed, mark=none, line width=1.6pt, %mark options={scale=0.75},
  smooth, 
  error bars/.cd, 
    y fixed,
    y dir=both, 
    y explicit
] table [x expr=\thisrow{x}, y=y, col sep=comma] {data/main/MNIST/trimmedmean_clustering.txt};

\legend{};

\end{axis}
\end{tikzpicture}}
         \caption{MNIST under TMA.}
         \label{fig:mnist_trimmedmean}
     \end{subfigure}
     \hfill
     \begin{subfigure}[b]{\figwidth}
         \centering
         \resizebox{\textwidth}{!}{\begin{tikzpicture}
\begin{axis}[
  set layers,
  grid=major,
  xmin=0, xmax=50,
  % xmode=log, log basis x={2},
  ytick align=outside, ytick pos=left,
  xtick align=outside, xtick pos=left,
  xlabel=\# Epoch,
  ylabel={Attack Success Rate (\%)},
  legend pos=south east,
  enlarge y limits=0.05
]

\addplot+[
  red, dashed, mark=none, line width=1.6pt, % mark options={scale=0.75},
  smooth, 
  error bars/.cd, 
    y fixed,
    y dir=both, 
    y explicit
] table [x expr=\thisrow{x}, y=asr, col sep=comma] {data/main/MNIST/modelpoisoning_average.txt};
\addlegendentry{Average}

\addplot+[
  cyan, mark=none, line width=2pt, %mark options={scale=0.75},
  smooth, 
  error bars/.cd, 
    y fixed,
    y dir=both, 
    y explicit
] table [x expr=\thisrow{x}, y=asr, col sep=comma] {data/main/MNIST/modelpoisoning_filterl2.txt};
\addlegendentry{Krum Attack + filterL2}

\addplot+[
  green, mark=none, line width=2pt, %mark options={scale=0.75},
  smooth, 
  error bars/.cd, 
    y fixed,
    y dir=both, 
    y explicit
] table [x expr=\thisrow{x}, y=asr, col sep=comma] {data/main/MNIST/modelpoisoning_ex_noregret.txt};
\addlegendentry{Explicit\_No\_Regret}

\addplot+[
  black, mark=none, line width=1.6pt, %mark options={scale=0.75},
  smooth, 
  error bars/.cd, 
    y fixed,
    y dir=both, 
    y explicit
] table [x expr=\thisrow{x}, y=asr, col sep=comma] {data/main/MNIST/modelpoisoning_mom_filterl2.txt};

\addplot+[
  gray, mark=none, line width=1.6pt, %mark options={scale=0.75},
  smooth, 
  error bars/.cd, 
    y fixed,
    y dir=both, 
    y explicit
] table [x expr=\thisrow{x}, y=asr, col sep=comma] {data/main/MNIST/modelpoisoning_mom_ex_noregret.txt};

\addplot+[
  blue, dashed,  mark=none, line width=1.6pt, %mark options={scale=0.75},
  smooth, 
  error bars/.cd, 
    y fixed,
    y dir=both, 
    y explicit
] table [x expr=\thisrow{x}, y=asr, col sep=comma] {data/main/MNIST/modelpoisoning_krum.txt};
\addlegendentry{Krum}

\addplot+[
  brown, dashed, mark=none, line width=1.6pt, %mark options={scale=0.75},
  smooth, 
  error bars/.cd, 
    y fixed,
    y dir=both, 
    y explicit
] table [x expr=\thisrow{x}, y=asr, col sep=comma] {data/main/MNIST/modelpoisoning_trimmedmean.txt};
\addlegendentry{Trimmed Mean}

\addplot+[
  darkgreen, dashed, mark=none, line width=1.6pt, %mark options={scale=0.75},
  smooth, 
  error bars/.cd, 
    y fixed,
    y dir=both, 
    y explicit
] table [x expr=\thisrow{x}, y=asr, col sep=comma] {data/main/MNIST/modelpoisoning_bulyankrum.txt};
\addlegendentry{Bulyan\_Krum}

\addplot+[
  orange, dashed, mark=none, line width=1.6pt, %mark options={scale=0.75},
  smooth, 
  error bars/.cd, 
    y fixed,
    y dir=both, 
    y explicit
] table [x expr=\thisrow{x}, y=asr, col sep=comma] {data/main/MNIST/modelpoisoning_bulyantrimmedmean.txt};
\addlegendentry{Bulyan\_Trimmed\_Mean}

\addplot+[
  magenta, dashed, mark=none, line width=1.6pt, %mark options={scale=0.75},
  smooth, 
  error bars/.cd, 
    y fixed,
    y dir=both, 
    y explicit
] table [x expr=\thisrow{x}, y=asr, col sep=comma] {data/main/MNIST/modelpoisoning_median.txt};
\addlegendentry{Median}

\addplot+[
  violet, dashed, mark=none, line width=1.6pt, %mark options={scale=0.75},
  smooth, 
  error bars/.cd, 
    y fixed,
    y dir=both, 
    y explicit
] table [x expr=\thisrow{x}, y=asr, col sep=comma] {data/main/MNIST/modelpoisoning_bulyanmedian.txt};
\addlegendentry{Bulyan\_Median}

\addplot+[
  lime, dashed, mark=none, line width=1.6pt, %mark options={scale=0.75},
  smooth, 
  error bars/.cd, 
    y fixed,
    y dir=both, 
    y explicit
] table [x expr=\thisrow{x}, y=asr, col sep=comma] {data/main/MNIST/modelpoisoning_history.txt};
\addlegendentry{\cite{karimireddy2021learning}}

\addplot+[
  teal, dashed, mark=none, line width=1.6pt, %mark options={scale=0.75},
  smooth, 
  error bars/.cd, 
    y fixed,
    y dir=both, 
    y explicit
] table [x expr=\thisrow{x}, y=asr, col sep=comma] {data/main/MNIST/modelpoisoning_bucketing.txt};
\addlegendentry{\cite{karimireddy2020byzantine}}

\addplot+[
  olive, dashed, mark=none, line width=1.6pt, %mark options={scale=0.75},
  smooth, 
  error bars/.cd, 
    y fixed,
    y dir=both, 
    y explicit
] table [x expr=\thisrow{x}, y=asr, col sep=comma] {data/main/MNIST/modelpoisoning_clustering.txt};

\legend{};

\end{axis}
\end{tikzpicture}}
         \caption{MNIST under MPA.}
         \label{fig:mnist_modelpoisoning_asr}
     \end{subfigure}
     
     \hfill
     \begin{subfigure}[b]{\figwidth}
         \centering
         \resizebox{\textwidth}{!}{\begin{tikzpicture}
\begin{axis}[
  set layers,
  grid=major,
  xmin=0, xmax=50,
  % xmode=log, log basis x={2},
  ytick align=outside, ytick pos=left,
  xtick align=outside, xtick pos=left,
  xlabel=\# Epoch,
  ylabel={Model Accuracy (\%)},
  legend pos=south east,
  enlarge y limits=0.05
]

\addplot+[
  red, dashed, mark=none, line width=1.6pt, % mark options={scale=0.75},
  smooth, 
  error bars/.cd, 
    y fixed,
    y dir=both, 
    y explicit
] table [x expr=\thisrow{x}, y=acc, col sep=comma] {data/main/MNIST/modelpoisoning_average.txt};
\addlegendentry{Average}

\addplot+[
  cyan, mark=none, line width=2pt, %mark options={scale=0.75},
  smooth, 
  error bars/.cd, 
    y fixed,
    y dir=both, 
    y explicit
] table [x expr=\thisrow{x}, y=acc, col sep=comma] {data/main/MNIST/modelpoisoning_filterl2.txt};
\addlegendentry{Krum Attack + filterL2}

\addplot+[
  green, mark=none, line width=2pt, %mark options={scale=0.75},
  smooth, 
  error bars/.cd, 
    y fixed,
    y dir=both, 
    y explicit
] table [x expr=\thisrow{x}, y=acc, col sep=comma] {data/main/MNIST/modelpoisoning_ex_noregret.txt};
\addlegendentry{Explicit\_No\_Regret}

\addplot+[
  black, mark=none, line width=1.6pt, %mark options={scale=0.75},
  smooth, 
  error bars/.cd, 
    y fixed,
    y dir=both, 
    y explicit
] table [x expr=\thisrow{x}, y=acc, col sep=comma] {data/main/MNIST/modelpoisoning_mom_filterl2.txt};

\addplot+[
  gray, mark=none, line width=1.6pt, %mark options={scale=0.75},
  smooth, 
  error bars/.cd, 
    y fixed,
    y dir=both, 
    y explicit
] table [x expr=\thisrow{x}, y=acc, col sep=comma] {data/main/MNIST/modelpoisoning_mom_ex_noregret.txt};

\addplot+[
  blue, dashed,  mark=none, line width=1.6pt, %mark options={scale=0.75},
  smooth, 
  error bars/.cd, 
    y fixed,
    y dir=both, 
    y explicit
] table [x expr=\thisrow{x}, y=acc, col sep=comma] {data/main/MNIST/modelpoisoning_krum.txt};
\addlegendentry{Krum}

\addplot+[
  brown, dashed, mark=none, line width=1.6pt, %mark options={scale=0.75},
  smooth, 
  error bars/.cd, 
    y fixed,
    y dir=both, 
    y explicit
] table [x expr=\thisrow{x}, y=acc, col sep=comma] {data/main/MNIST/modelpoisoning_trimmedmean.txt};
\addlegendentry{Trimmed Mean}

\addplot+[
  darkgreen, dashed, mark=none, line width=1.6pt, %mark options={scale=0.75},
  smooth, 
  error bars/.cd, 
    y fixed,
    y dir=both, 
    y explicit
] table [x expr=\thisrow{x}, y=acc, col sep=comma] {data/main/MNIST/modelpoisoning_bulyankrum.txt};
\addlegendentry{Bulyan\_Krum}

\addplot+[
  orange, dashed, mark=none, line width=1.6pt, %mark options={scale=0.75},
  smooth, 
  error bars/.cd, 
    y fixed,
    y dir=both, 
    y explicit
] table [x expr=\thisrow{x}, y=acc, col sep=comma] {data/main/MNIST/modelpoisoning_bulyantrimmedmean.txt};
\addlegendentry{Bulyan\_Trimmed\_Mean}

\addplot+[
  magenta, dashed, mark=none, line width=1.6pt, %mark options={scale=0.75},
  smooth, 
  error bars/.cd, 
    y fixed,
    y dir=both, 
    y explicit
] table [x expr=\thisrow{x}, y=acc, col sep=comma] {data/main/MNIST/modelpoisoning_median.txt};
\addlegendentry{Median}

\addplot+[
  violet, dashed, mark=none, line width=1.6pt, %mark options={scale=0.75},
  smooth, 
  error bars/.cd, 
    y fixed,
    y dir=both, 
    y explicit
] table [x expr=\thisrow{x}, y=acc, col sep=comma] {data/main/MNIST/modelpoisoning_bulyanmedian.txt};
\addlegendentry{Bulyan\_Median}

\addplot+[
  lime, dashed, mark=none, line width=1.6pt, %mark options={scale=0.75},
  smooth, 
  error bars/.cd, 
    y fixed,
    y dir=both, 
    y explicit
] table [x expr=\thisrow{x}, y=acc, col sep=comma] {data/main/MNIST/modelpoisoning_history.txt};
\addlegendentry{\cite{karimireddy2021learning}}

\addplot+[
  teal, dashed, mark=none, line width=1.6pt, %mark options={scale=0.75},
  smooth, 
  error bars/.cd, 
    y fixed,
    y dir=both, 
    y explicit
] table [x expr=\thisrow{x}, y=acc, col sep=comma] {data/main/MNIST/modelpoisoning_bucketing.txt};
\addlegendentry{\cite{karimireddy2020byzantine}}

\addplot+[
  olive, dashed, mark=none, line width=1.6pt, %mark options={scale=0.75},
  smooth, 
  error bars/.cd, 
    y fixed,
    y dir=both, 
    y explicit
] table [x expr=\thisrow{x}, y=acc, col sep=comma] {data/main/MNIST/modelpoisoning_clustering.txt};

\legend{};

\end{axis}
\end{tikzpicture}}
         \caption{MNIST under MPA.}
         \label{fig:mnist_modelpoisoning_acc}
     \end{subfigure}
     \hfill
     \begin{subfigure}[b]{\figwidth}
         \centering
         \resizebox{\textwidth}{!}{\begin{tikzpicture}
\begin{axis}[
  set layers,
  grid=major,
  xmin=0, xmax=50,
  % xmode=log, log basis x={2},
  ytick align=outside, ytick pos=left,
  xtick align=outside, xtick pos=left,
  xlabel=\# Epoch,
  ylabel={Attack Success Rate (\%)},
  legend pos=south east,
  enlarge y limits=0.05
]

\addplot+[
  red, dashed, mark=none, line width=1.6pt, % mark options={scale=0.75},
  smooth, 
  error bars/.cd, 
    y fixed,
    y dir=both, 
    y explicit
] table [x expr=\thisrow{x}, y=asr, col sep=comma] {data/main/MNIST/backdoor_average.txt};
\addlegendentry{Average}

\addplot+[
  cyan, mark=none, line width=2pt, %mark options={scale=0.75},
  smooth, 
  error bars/.cd, 
    y fixed,
    y dir=both, 
    y explicit
] table [x expr=\thisrow{x}, y=asr, col sep=comma] {data/main/MNIST/backdoor_filterl2.txt};
\addlegendentry{Krum Attack + filterL2}

\addplot+[
  green, mark=none, line width=2pt, %mark options={scale=0.75},
  smooth, 
  error bars/.cd, 
    y fixed,
    y dir=both, 
    y explicit
] table [x expr=\thisrow{x}, y=asr, col sep=comma] {data/main/MNIST/backdoor_ex_noregret.txt};
\addlegendentry{Explicit\_No\_Regret}

\addplot+[
  black, mark=none, line width=1.6pt, %mark options={scale=0.75},
  smooth, 
  error bars/.cd, 
    y fixed,
    y dir=both, 
    y explicit
] table [x expr=\thisrow{x}, y=asr, col sep=comma] {data/main/MNIST/backdoor_mom_filterl2.txt};

\addplot+[
  gray, mark=none, line width=1.6pt, %mark options={scale=0.75},
  smooth, 
  error bars/.cd, 
    y fixed,
    y dir=both, 
    y explicit
] table [x expr=\thisrow{x}, y=asr, col sep=comma] {data/main/MNIST/backdoor_mom_ex_noregret.txt};

\addplot+[
  blue, dashed,  mark=none, line width=1.6pt, %mark options={scale=0.75},
  smooth, 
  error bars/.cd, 
    y fixed,
    y dir=both, 
    y explicit
] table [x expr=\thisrow{x}, y=asr, col sep=comma] {data/main/MNIST/backdoor_krum.txt};
\addlegendentry{Krum}

\addplot+[
  brown, dashed, mark=none, line width=1.6pt, %mark options={scale=0.75},
  smooth, 
  error bars/.cd, 
    y fixed,
    y dir=both, 
    y explicit
] table [x expr=\thisrow{x}, y=asr, col sep=comma] {data/main/MNIST/backdoor_trimmedmean.txt};
\addlegendentry{Trimmed Mean}

\addplot+[
  darkgreen, dashed, mark=none, line width=1.6pt, %mark options={scale=0.75},
  smooth, 
  error bars/.cd, 
    y fixed,
    y dir=both, 
    y explicit
] table [x expr=\thisrow{x}, y=asr, col sep=comma] {data/main/MNIST/backdoor_bulyankrum.txt};
\addlegendentry{Bulyan\_Krum}

\addplot+[
  orange, dashed, mark=none, line width=1.6pt, %mark options={scale=0.75},
  smooth, 
  error bars/.cd, 
    y fixed,
    y dir=both, 
    y explicit
] table [x expr=\thisrow{x}, y=asr, col sep=comma] {data/main/MNIST/backdoor_bulyantrimmedmean.txt};
\addlegendentry{Bulyan\_Trimmed\_Mean}

\addplot+[
  magenta, dashed, mark=none, line width=1.6pt, %mark options={scale=0.75},
  smooth, 
  error bars/.cd, 
    y fixed,
    y dir=both, 
    y explicit
] table [x expr=\thisrow{x}, y=asr, col sep=comma] {data/main/MNIST/backdoor_median.txt};
\addlegendentry{Median}

\addplot+[
  violet, dashed, mark=none, line width=1.6pt, %mark options={scale=0.75},
  smooth, 
  error bars/.cd, 
    y fixed,
    y dir=both, 
    y explicit
] table [x expr=\thisrow{x}, y=asr, col sep=comma] {data/main/MNIST/backdoor_bulyanmedian.txt};
\addlegendentry{Bulyan\_Median}

\addplot+[
  lime, dashed, mark=none, line width=1.6pt, %mark options={scale=0.75},
  smooth, 
  error bars/.cd, 
    y fixed,
    y dir=both, 
    y explicit
] table [x expr=\thisrow{x}, y=asr, col sep=comma] {data/main/MNIST/backdoor_history.txt};
\addlegendentry{\cite{karimireddy2021learning}}

\addplot+[
  teal, dashed, mark=none, line width=1.6pt, %mark options={scale=0.75},
  smooth, 
  error bars/.cd, 
    y fixed,
    y dir=both, 
    y explicit
] table [x expr=\thisrow{x}, y=asr, col sep=comma] {data/main/MNIST/backdoor_bucketing.txt};
\addlegendentry{\cite{karimireddy2020byzantine}}

\addplot+[
  olive, dashed, mark=none, line width=1.6pt, %mark options={scale=0.75},
  smooth, 
  error bars/.cd, 
    y fixed,
    y dir=both, 
    y explicit
] table [x expr=\thisrow{x}, y=asr, col sep=comma] {data/main/MNIST/backdoor_clustering.txt};

\legend{};

\end{axis}
\end{tikzpicture}}
         \caption{MNIST under MRA.}
         \label{fig:mnist_backdoor_asr}
     \end{subfigure}
     \hfill
     \begin{subfigure}[b]{\figwidth}
         \centering
         \resizebox{\textwidth}{!}{\begin{tikzpicture}
\begin{axis}[
  set layers,
  grid=major,
  xmin=0, xmax=50,
  % xmode=log, log basis x={2},
  ytick align=outside, ytick pos=left,
  xtick align=outside, xtick pos=left,
  xlabel=\# Epoch,
  ylabel={Model Accuracy (\%)},
  legend pos=south east,
  enlarge y limits=0.05
]

\addplot+[
  red, dashed, mark=none, line width=1.6pt, % mark options={scale=0.75},
  smooth, 
  error bars/.cd, 
    y fixed,
    y dir=both, 
    y explicit
] table [x expr=\thisrow{x}, y=acc, col sep=comma] {data/main/MNIST/backdoor_average.txt};
\addlegendentry{Average}

\addplot+[
  cyan, mark=none, line width=2pt, %mark options={scale=0.75},
  smooth, 
  error bars/.cd, 
    y fixed,
    y dir=both, 
    y explicit
] table [x expr=\thisrow{x}, y=acc, col sep=comma] {data/main/MNIST/backdoor_filterl2.txt};
\addlegendentry{Krum Attack + filterL2}

\addplot+[
  green, mark=none, line width=2pt, %mark options={scale=0.75},
  smooth, 
  error bars/.cd, 
    y fixed,
    y dir=both, 
    y explicit
] table [x expr=\thisrow{x}, y=acc, col sep=comma] {data/main/MNIST/backdoor_ex_noregret.txt};
\addlegendentry{Explicit\_No\_Regret}

\addplot+[
  black, mark=none, line width=1.6pt, %mark options={scale=0.75},
  smooth, 
  error bars/.cd, 
    y fixed,
    y dir=both, 
    y explicit
] table [x expr=\thisrow{x}, y=acc, col sep=comma] {data/main/MNIST/backdoor_mom_filterl2.txt};

\addplot+[
  gray, mark=none, line width=1.6pt, %mark options={scale=0.75},
  smooth, 
  error bars/.cd, 
    y fixed,
    y dir=both, 
    y explicit
] table [x expr=\thisrow{x}, y=acc, col sep=comma] {data/main/MNIST/backdoor_mom_ex_noregret.txt};

\addplot+[
  blue, dashed,  mark=none, line width=1.6pt, %mark options={scale=0.75},
  smooth, 
  error bars/.cd, 
    y fixed,
    y dir=both, 
    y explicit
] table [x expr=\thisrow{x}, y=acc, col sep=comma] {data/main/MNIST/backdoor_krum.txt};
\addlegendentry{Krum}

\addplot+[
  brown, dashed, mark=none, line width=1.6pt, %mark options={scale=0.75},
  smooth, 
  error bars/.cd, 
    y fixed,
    y dir=both, 
    y explicit
] table [x expr=\thisrow{x}, y=acc, col sep=comma] {data/main/MNIST/backdoor_trimmedmean.txt};
\addlegendentry{Trimmed Mean}

\addplot+[
  darkgreen, dashed, mark=none, line width=1.6pt, %mark options={scale=0.75},
  smooth, 
  error bars/.cd, 
    y fixed,
    y dir=both, 
    y explicit
] table [x expr=\thisrow{x}, y=acc, col sep=comma] {data/main/MNIST/backdoor_bulyankrum.txt};
\addlegendentry{Bulyan\_Krum}

\addplot+[
  orange, dashed, mark=none, line width=1.6pt, %mark options={scale=0.75},
  smooth, 
  error bars/.cd, 
    y fixed,
    y dir=both, 
    y explicit
] table [x expr=\thisrow{x}, y=acc, col sep=comma] {data/main/MNIST/backdoor_bulyantrimmedmean.txt};
\addlegendentry{Bulyan\_Trimmed\_Mean}

\addplot+[
  magenta, dashed, mark=none, line width=1.6pt, %mark options={scale=0.75},
  smooth, 
  error bars/.cd, 
    y fixed,
    y dir=both, 
    y explicit
] table [x expr=\thisrow{x}, y=acc, col sep=comma] {data/main/MNIST/backdoor_median.txt};
\addlegendentry{Median}

\addplot+[
  violet, dashed, mark=none, line width=1.6pt, %mark options={scale=0.75},
  smooth, 
  error bars/.cd, 
    y fixed,
    y dir=both, 
    y explicit
] table [x expr=\thisrow{x}, y=acc, col sep=comma] {data/main/MNIST/backdoor_bulyanmedian.txt};
\addlegendentry{Bulyan\_Median}

\addplot+[
  lime, dashed, mark=none, line width=1.6pt, %mark options={scale=0.75},
  smooth, 
  error bars/.cd, 
    y fixed,
    y dir=both, 
    y explicit
] table [x expr=\thisrow{x}, y=acc, col sep=comma] {data/main/MNIST/backdoor_history.txt};
\addlegendentry{\cite{karimireddy2021learning}}

\addplot+[
  teal, dashed, mark=none, line width=1.6pt, %mark options={scale=0.75},
  smooth, 
  error bars/.cd, 
    y fixed,
    y dir=both, 
    y explicit
] table [x expr=\thisrow{x}, y=acc, col sep=comma] {data/main/MNIST/backdoor_bucketing.txt};
\addlegendentry{\cite{karimireddy2020byzantine}}

\addplot+[
  olive, dashed, mark=none, line width=1.6pt, %mark options={scale=0.75},
  smooth, 
  error bars/.cd, 
    y fixed,
    y dir=both, 
    y explicit
] table [x expr=\thisrow{x}, y=acc, col sep=comma] {data/main/MNIST/backdoor_clustering.txt};

\legend{};

\end{axis}
\end{tikzpicture}}
         \caption{MNIST under MRA.}
         \label{fig:mnist_backdoor_acc}
     \end{subfigure}
     \hfill
     \begin{subfigure}[b]{\figwidth}
         \centering
         \resizebox{\textwidth}{!}{\begin{tikzpicture}
\begin{axis}[
  set layers,
  grid=major,
  xmin=0, xmax=50,
  % xmode=log, log basis x={2},
  ytick align=outside, ytick pos=left,
  xtick align=outside, xtick pos=left,
  xlabel=\# Epoch,
  ylabel={Attack Success Rate (\%)},
  legend pos=south east,
  enlarge y limits=0.05
]

\addplot+[
  red, dashed, mark=none, line width=1.6pt, % mark options={scale=0.75},
  smooth, 
  error bars/.cd, 
    y fixed,
    y dir=both, 
    y explicit
] table [x expr=\thisrow{x}, y=asr, col sep=comma] {data/main/MNIST/dba_average.txt};
\addlegendentry{Average}

\addplot+[
  cyan, mark=none, line width=2pt, %mark options={scale=0.75},
  smooth, 
  error bars/.cd, 
    y fixed,
    y dir=both, 
    y explicit
] table [x expr=\thisrow{x}, y=asr, col sep=comma] {data/main/MNIST/dba_filterL2.txt};
\addlegendentry{FilterL2}

\addplot+[
  green, mark=none, line width=2pt, %mark options={scale=0.75},
  smooth, 
  error bars/.cd, 
    y fixed,
    y dir=both, 
    y explicit
] table [x expr=\thisrow{x}, y=asr, col sep=comma] {data/main/MNIST/dba_ex_noregret.txt};
\addlegendentry{Explicit\_No\_Regret}

\addplot+[
  black, mark=none, line width=1.6pt, %mark options={scale=0.75},
  smooth, 
  error bars/.cd, 
    y fixed,
    y dir=both, 
    y explicit
] table [x expr=\thisrow{x}, y=asr, col sep=comma] {data/main/MNIST/dba_mom_filterl2.txt};

\addplot+[
  gray, mark=none, line width=1.6pt, %mark options={scale=0.75},
  smooth, 
  error bars/.cd, 
    y fixed,
    y dir=both, 
    y explicit
] table [x expr=\thisrow{x}, y=asr, col sep=comma] {data/main/MNIST/dba_mom_ex_noregret.txt};

\addplot+[
  blue, dashed,  mark=none, line width=1.6pt, %mark options={scale=0.75},
  smooth, 
  error bars/.cd, 
    y fixed,
    y dir=both, 
    y explicit
] table [x expr=\thisrow{x}, y=asr, col sep=comma] {data/main/MNIST/dba_krum.txt};
\addlegendentry{Krum}

\addplot+[
  brown, dashed, mark=none, line width=1.6pt, %mark options={scale=0.75},
  smooth, 
  error bars/.cd, 
    y fixed,
    y dir=both, 
    y explicit
] table [x expr=\thisrow{x}, y=asr, col sep=comma] {data/main/MNIST/dba_trimmedmean.txt};
\addlegendentry{Trimmed Mean}

\addplot+[
  darkgreen, dashed, mark=none, line width=1.6pt, %mark options={scale=0.75},
  smooth, 
  error bars/.cd, 
    y fixed,
    y dir=both, 
    y explicit
] table [x expr=\thisrow{x}, y=asr, col sep=comma] {data/main/MNIST/dba_bulyankrum.txt};
\addlegendentry{Bulyan\_Krum}

\addplot+[
  orange, dashed, mark=none, line width=1.6pt, %mark options={scale=0.75},
  smooth, 
  error bars/.cd, 
    y fixed,
    y dir=both, 
    y explicit
] table [x expr=\thisrow{x}, y=asr, col sep=comma] {data/main/MNIST/dba_bulyantrm.txt};
\addlegendentry{Bulyan\_Trimmed\_Mean}

\addplot+[
  magenta, dashed, mark=none, line width=1.6pt, %mark options={scale=0.75},
  smooth, 
  error bars/.cd, 
    y fixed,
    y dir=both, 
    y explicit
] table [x expr=\thisrow{x}, y=asr, col sep=comma] {data/main/MNIST/dba_median.txt};
\addlegendentry{Median}

\addplot+[
  violet, dashed, mark=none, line width=1.6pt, %mark options={scale=0.75},
  smooth, 
  error bars/.cd, 
    y fixed,
    y dir=both, 
    y explicit
] table [x expr=\thisrow{x}, y=asr, col sep=comma] {data/main/MNIST/dba_bulyanmedian.txt};
\addlegendentry{Bulyan\_Median}

\addplot+[
  lime, dashed, mark=none, line width=1.6pt, %mark options={scale=0.75},
  smooth, 
  error bars/.cd, 
    y fixed,
    y dir=both, 
    y explicit
] table [x expr=\thisrow{x}, y=asr, col sep=comma] {data/main/MNIST/dba_history.txt};
\addlegendentry{\cite{karimireddy2021learning}}

\addplot+[
  teal, dashed, mark=none, line width=1.6pt, %mark options={scale=0.75},
  smooth, 
  error bars/.cd, 
    y fixed,
    y dir=both, 
    y explicit
] table [x expr=\thisrow{x}, y=asr, col sep=comma] {data/main/MNIST/dba_bucketing.txt};
\addlegendentry{\cite{karimireddy2020byzantine}}

\addplot+[
  olive, dashed, mark=none, line width=1.6pt, %mark options={scale=0.75},
  smooth, 
  error bars/.cd, 
    y fixed,
    y dir=both, 
    y explicit
] table [x expr=\thisrow{x}, y=asr, col sep=comma] {data/main/MNIST/dba_clustering.txt};

\legend{};

\end{axis}
\end{tikzpicture}}
         \caption{MNIST under DBA.}
         \label{fig:mnist_dba_asr}
     \end{subfigure}
     
     \hfill
     \begin{subfigure}[b]{\figwidth}
         \centering
         \resizebox{\textwidth}{!}{\begin{tikzpicture}
\begin{axis}[
  set layers,
  grid=major,
  xmin=0, xmax=50,
  % xmode=log, log basis x={2},
  ytick align=outside, ytick pos=left,
  xtick align=outside, xtick pos=left,
  xlabel=\# Epoch,
  ylabel={Model Accuracy (\%)},
  legend pos=south east,
  enlarge y limits=0.05
]

\addplot+[
  red, dashed, mark=none, line width=1.6pt, % mark options={scale=0.75},
  smooth, 
  error bars/.cd, 
    y fixed,
    y dir=both, 
    y explicit
] table [x expr=\thisrow{x}, y=acc, col sep=comma] {data/main/MNIST/dba_average.txt};
\addlegendentry{Average}

\addplot+[
  cyan, mark=none, line width=2pt, %mark options={scale=0.75},
  smooth, 
  error bars/.cd, 
    y fixed,
    y dir=both, 
    y explicit
] table [x expr=\thisrow{x}, y=acc, col sep=comma] {data/main/MNIST/dba_filterL2.txt};
\addlegendentry{FilterL2}

\addplot+[
  green, mark=none, line width=2pt, %mark options={scale=0.75},
  smooth, 
  error bars/.cd, 
    y fixed,
    y dir=both, 
    y explicit
] table [x expr=\thisrow{x}, y=acc, col sep=comma] {data/main/MNIST/dba_ex_noregret.txt};
\addlegendentry{Explicit\_No\_Regret}

\addplot+[
  black, mark=none, line width=1.6pt, %mark options={scale=0.75},
  smooth, 
  error bars/.cd, 
    y fixed,
    y dir=both, 
    y explicit
] table [x expr=\thisrow{x}, y=acc, col sep=comma] {data/main/MNIST/dba_mom_filterl2.txt};

\addplot+[
  gray, mark=none, line width=1.6pt, %mark options={scale=0.75},
  smooth, 
  error bars/.cd, 
    y fixed,
    y dir=both, 
    y explicit
] table [x expr=\thisrow{x}, y=acc, col sep=comma] {data/main/MNIST/dba_mom_ex_noregret.txt};

\addplot+[
  blue, dashed,  mark=none, line width=1.6pt, %mark options={scale=0.75},
  smooth, 
  error bars/.cd, 
    y fixed,
    y dir=both, 
    y explicit
] table [x expr=\thisrow{x}, y=acc, col sep=comma] {data/main/MNIST/dba_krum.txt};
\addlegendentry{Krum}

\addplot+[
  brown, dashed, mark=none, line width=1.6pt, %mark options={scale=0.75},
  smooth, 
  error bars/.cd, 
    y fixed,
    y dir=both, 
    y explicit
] table [x expr=\thisrow{x}, y=acc, col sep=comma] {data/main/MNIST/dba_trimmedmean.txt};
\addlegendentry{Trimmed Mean}

\addplot+[
  darkgreen, dashed, mark=none, line width=1.6pt, %mark options={scale=0.75},
  smooth, 
  error bars/.cd, 
    y fixed,
    y dir=both, 
    y explicit
] table [x expr=\thisrow{x}, y=acc, col sep=comma] {data/main/MNIST/dba_bulyankrum.txt};
\addlegendentry{Bulyan\_Krum}

\addplot+[
  orange, dashed, mark=none, line width=1.6pt, %mark options={scale=0.75},
  smooth, 
  error bars/.cd, 
    y fixed,
    y dir=both, 
    y explicit
] table [x expr=\thisrow{x}, y=acc, col sep=comma] {data/main/MNIST/dba_bulyantrm.txt};
\addlegendentry{Bulyan\_Trimmed\_Mean}

\addplot+[
  magenta, dashed, mark=none, line width=1.6pt, %mark options={scale=0.75},
  smooth, 
  error bars/.cd, 
    y fixed,
    y dir=both, 
    y explicit
] table [x expr=\thisrow{x}, y=acc, col sep=comma] {data/main/MNIST/dba_median.txt};
\addlegendentry{Median}

\addplot+[
  violet, dashed, mark=none, line width=1.6pt, %mark options={scale=0.75},
  smooth, 
  error bars/.cd, 
    y fixed,
    y dir=both, 
    y explicit
] table [x expr=\thisrow{x}, y=acc, col sep=comma] {data/main/MNIST/dba_bulyanmedian.txt};
\addlegendentry{Bulyan\_Median}

\addplot+[
  lime, dashed, mark=none, line width=1.6pt, %mark options={scale=0.75},
  smooth, 
  error bars/.cd, 
    y fixed,
    y dir=both, 
    y explicit
] table [x expr=\thisrow{x}, y=acc, col sep=comma] {data/main/MNIST/dba_history.txt};
\addlegendentry{\cite{karimireddy2021learning}}

\addplot+[
  teal, dashed, mark=none, line width=1.6pt, %mark options={scale=0.75},
  smooth, 
  error bars/.cd, 
    y fixed,
    y dir=both, 
    y explicit
] table [x expr=\thisrow{x}, y=acc, col sep=comma] {data/main/MNIST/dba_bucketing.txt};
\addlegendentry{\cite{karimireddy2020byzantine}}

\addplot+[
  olive, dashed, mark=none, line width=1.6pt, %mark options={scale=0.75},
  smooth, 
  error bars/.cd, 
    y fixed,
    y dir=both, 
    y explicit
] table [x expr=\thisrow{x}, y=acc, col sep=comma] {data/main/MNIST/dba_clustering.txt};

\legend{};

\end{axis}
\end{tikzpicture}}
         \caption{MNIST under DBA.}
         \label{fig:mnist_dba_acc}
     \end{subfigure}
     \hfill
     \begin{subfigure}[b]{\figwidth}
         \centering
         \resizebox{\textwidth}{!}{\begin{tikzpicture}[spy using outlines={rectangle, magnification=2, width=6cm, height=2cm, connect spies}]
\begin{axis}[
  % set layers,
  grid=major,
  xmin=0, xmax=50,
  % xmode=log, log basis x={2},
  ytick align=outside, ytick pos=left,
  xtick align=outside, xtick pos=left,
  xlabel=\# Epoch,
  ylabel={Model Accuracy (\%)},
  enlarge y limits=0.05
  % legend pos=south east,
  % legend style={at={(1.5,0.4)}, anchor=north, nodes={scale=0.75, transform shape}},
  ]

\addplot+[
  red, dashed, mark=none, line width=1.6pt, % mark options={scale=0.75},
  smooth, 
  error bars/.cd, 
    y fixed,
    y dir=both, 
    y explicit
] table [x expr=\thisrow{x}, y=y, col sep=comma] {data/main/MNIST/xie_average.txt};
\addlegendentry{Average}

\addplot+[
  cyan, mark=none, line width=2pt, %mark options={scale=0.75},
  smooth, 
  error bars/.cd, 
    y fixed,
    y dir=both, 
    y explicit
] table [x expr=\thisrow{x}, y=y, col sep=comma] {data/main/MNIST/xie_filterl2.txt};
\addlegendentry{filterL2}

\addplot+[
  green, mark=none, line width=2pt, %mark options={scale=0.75},
  smooth, 
  error bars/.cd, 
    y fixed,
    y dir=both, 
    y explicit
] table [x expr=\thisrow{x}, y=y, col sep=comma] {data/main/MNIST/xie_ex_noregret.txt};
\addlegendentry{Explicit\_No\_Regret}

\addplot+[
  black, mark=none, line width=1.6pt, %mark options={scale=0.75},
  smooth, 
  error bars/.cd, 
    y fixed,
    y dir=both, 
    y explicit
] table [x expr=\thisrow{x}, y=y, col sep=comma] {data/main/MNIST/xie_mom_filterl2.txt};

\addplot+[
  gray, mark=none, line width=1.6pt, %mark options={scale=0.75},
  smooth, 
  error bars/.cd, 
    y fixed,
    y dir=both, 
    y explicit
] table [x expr=\thisrow{x}, y=y, col sep=comma] {data/main/MNIST/xie_mom_ex_noregret.txt};

\addplot+[
  blue, dashed,  mark=none, line width=1.6pt, %mark options={scale=0.75},
  smooth, 
  error bars/.cd, 
    y fixed,
    y dir=both, 
    y explicit
] table [x expr=\thisrow{x}, y=y, col sep=comma] {data/main/MNIST/xie_krum.txt};
\addlegendentry{Krum}

\addplot+[
  brown, dashed, mark=none, line width=1.6pt, %mark options={scale=0.75},
  smooth, 
  error bars/.cd, 
    y fixed,
    y dir=both, 
    y explicit
] table [x expr=\thisrow{x}, y=y, col sep=comma] {data/main/MNIST/xie_trimmedmean.txt};
\addlegendentry{Trimmed Mean}

\addplot+[
  darkgreen, dashed, mark=none, line width=1.6pt, %mark options={scale=0.75},
  smooth, 
  error bars/.cd, 
    y fixed,
    y dir=both, 
    y explicit
] table [x expr=\thisrow{x}, y=y, col sep=comma] {data/main/MNIST/xie_bulyankrum.txt};
\addlegendentry{Bulyan\_Krum}

\addplot+[
  orange, dashed, mark=none, line width=1.6pt, %mark options={scale=0.75},
  smooth, 
  error bars/.cd, 
    y fixed,
    y dir=both, 
    y explicit
] table [x expr=\thisrow{x}, y=y, col sep=comma] {data/main/MNIST/xie_bulyantrimmedmean.txt};
\addlegendentry{Bulyan\_Trimmed\_Mean}

\addplot+[
  magenta, dashed, mark=none, line width=1.6pt, %mark options={scale=0.75},
  smooth, 
  error bars/.cd, 
    y fixed,
    y dir=both, 
    y explicit
] table [x expr=\thisrow{x}, y=y, col sep=comma] {data/main/MNIST/xie_median.txt};
\addlegendentry{Median}

\addplot+[
  violet, dashed, mark=none, line width=1.6pt, %mark options={scale=0.75},
  smooth, 
  error bars/.cd, 
    y fixed,
    y dir=both, 
    y explicit
] table [x expr=\thisrow{x}, y=y, col sep=comma] {data/main/MNIST/xie_bulyanmedian.txt};
\addlegendentry{Bulyan\_Median}

\addplot+[
  lime, dashed, mark=none, line width=1.6pt, %mark options={scale=0.75},
  smooth, 
  error bars/.cd, 
    y fixed,
    y dir=both, 
    y explicit
] table [x expr=\thisrow{x}, y=y, col sep=comma] {data/main/MNIST/xie_history.txt};
\addlegendentry{\cite{karimireddy2021learning}}

\addplot+[
  teal, dashed, mark=none, line width=1.6pt, %mark options={scale=0.75},
  smooth, 
  error bars/.cd, 
    y fixed,
    y dir=both, 
    y explicit
] table [x expr=\thisrow{x}, y=y, col sep=comma] {data/main/MNIST/xie_bucketing.txt};
\addlegendentry{\cite{karimireddy2020byzantine}}

\addplot+[
  olive, dashed, mark=none, line width=1.6pt, %mark options={scale=0.75},
  smooth, 
  error bars/.cd, 
    y fixed,
    y dir=both, 
    y explicit
] table [x expr=\thisrow{x}, y=y, col sep=comma] {data/main/MNIST/xie_clustering.txt};

\legend{};

% \spy[blue!70!black] on (5.25,5.2) in node [fill=white] at (3.75,1.5);

\end{axis}
\end{tikzpicture}}
         \caption{MNIST under IMA.}
         \label{fig:mnist_xie}
     \end{subfigure}
     \hfill
     \begin{subfigure}[b]{\figwidth}
         \centering
         \resizebox{\textwidth}{!}{\begin{tikzpicture}[spy using outlines={rectangle, magnification=2, width=6cm, height=2cm, connect spies}]
\begin{axis}[
  % set layers,
  grid=major,
  xmin=0, xmax=50,
  % xmode=log, log basis x={2},
  ytick align=outside, ytick pos=left,
  xtick align=outside, xtick pos=left,
  xlabel=\# Epoch,
  ylabel={Model Accuracy (\%)},
  enlarge y limits=0.05,
  % legend pos=south east,
  % legend style={at={(1.5,0.4)}, anchor=north, nodes={scale=0.75, transform shape}},
  ]

\addplot+[
  red, dashed, mark=none, line width=1.6pt, % mark options={scale=0.75},
  smooth, 
  error bars/.cd, 
    y fixed,
    y dir=both, 
    y explicit
] table [x expr=\thisrow{x}, y=y, col sep=comma] {data/main/fashionMNIST/noattack_average.txt};
\addlegendentry{Average}

\addplot+[
  cyan, mark=none, line width=2pt, %mark options={scale=0.75},
  smooth, 
  error bars/.cd, 
    y fixed,
    y dir=both, 
    y explicit
] table [x expr=\thisrow{x}, y=y, col sep=comma] {data/main/fashionMNIST/noattack_filterl2.txt};
\addlegendentry{filterL2}

\addplot+[
  green, mark=none, line width=2pt, %mark options={scale=0.75},
  smooth, 
  error bars/.cd, 
    y fixed,
    y dir=both, 
    y explicit
] table [x expr=\thisrow{x}, y=y, col sep=comma] {data/main/fashionMNIST/noattack_ex_noregret.txt};
\addlegendentry{Explicit\_No\_Regret}

\addplot+[
  black, mark=none, line width=1.6pt, %mark options={scale=0.75},
  smooth, 
  error bars/.cd, 
    y fixed,
    y dir=both, 
    y explicit
] table [x expr=\thisrow{x}, y=y, col sep=comma] {data/main/fashionMNIST/noattack_mom_filterl2.txt};

\addplot+[
  gray, mark=none, line width=1.6pt, %mark options={scale=0.75},
  smooth, 
  error bars/.cd, 
    y fixed,
    y dir=both, 
    y explicit
] table [x expr=\thisrow{x}, y=y, col sep=comma] {data/main/fashionMNIST/noattack_mom_ex_noregret.txt};

\addplot+[
  blue, dashed,  mark=none, line width=1.6pt, %mark options={scale=0.75},
  smooth, 
  error bars/.cd, 
    y fixed,
    y dir=both, 
    y explicit
] table [x expr=\thisrow{x}, y=y, col sep=comma] {data/main/fashionMNIST/noattack_krum.txt};
\addlegendentry{Krum}

\addplot+[
  brown, dashed, mark=none, line width=1.6pt, %mark options={scale=0.75},
  smooth, 
  error bars/.cd, 
    y fixed,
    y dir=both, 
    y explicit
] table [x expr=\thisrow{x}, y=y, col sep=comma] {data/main/fashionMNIST/noattack_trimmedmean.txt};
\addlegendentry{Trimmed Mean}

\addplot+[
  darkgreen, dashed, mark=none, line width=1.6pt, %mark options={scale=0.75},
  smooth, 
  error bars/.cd, 
    y fixed,
    y dir=both, 
    y explicit
] table [x expr=\thisrow{x}, y=y, col sep=comma] {data/main/fashionMNIST/noattack_bulyankrum.txt};
\addlegendentry{Bulyan\_Krum}

\addplot+[
  orange, dashed, mark=none, line width=1.6pt, %mark options={scale=0.75},
  smooth, 
  error bars/.cd, 
    y fixed,
    y dir=both, 
    y explicit
] table [x expr=\thisrow{x}, y=y, col sep=comma] {data/main/fashionMNIST/noattack_bulyantrimmedmean.txt};
\addlegendentry{Bulyan\_Trimmed\_Mean}

\addplot+[
  magenta, dashed, mark=none, line width=1.6pt, %mark options={scale=0.75},
  smooth, 
  error bars/.cd, 
    y fixed,
    y dir=both, 
    y explicit
] table [x expr=\thisrow{x}, y=y, col sep=comma] {data/main/fashionMNIST/noattack_median.txt};
\addlegendentry{Median}

\addplot+[
  violet, dashed, mark=none, line width=1.6pt, %mark options={scale=0.75},
  smooth, 
  error bars/.cd, 
    y fixed,
    y dir=both, 
    y explicit
] table [x expr=\thisrow{x}, y=y, col sep=comma] {data/main/fashionMNIST/noattack_bulyanmedian.txt};
\addlegendentry{Bulyan\_Median}

\addplot+[
  lime, dashed, mark=none, line width=1.6pt, %mark options={scale=0.75},
  smooth, 
  error bars/.cd, 
    y fixed,
    y dir=both, 
    y explicit
] table [x expr=\thisrow{x}, y=y, col sep=comma] {data/main/fashionMNIST/noattack_history.txt};
\addlegendentry{\cite{karimireddy2021learning}}

\addplot+[
  teal, dashed, mark=none, line width=1.6pt, %mark options={scale=0.75},
  smooth, 
  error bars/.cd, 
    y fixed,
    y dir=both, 
    y explicit
] table [x expr=\thisrow{x}, y=y, col sep=comma] {data/main/fashionMNIST/noattack_bucketing.txt};
\addlegendentry{\cite{karimireddy2020byzantine}}

\addplot+[
  olive, dashed, mark=none, line width=1.6pt, %mark options={scale=0.75},
  smooth, 
  error bars/.cd, 
    y fixed,
    y dir=both, 
    y explicit
] table [x expr=\thisrow{x}, y=y, col sep=comma] {data/main/fashionMNIST/noattack_clustering.txt};

\legend{};

% \spy[blue!70!black] on (5.25,5.2) in node [fill=white] at (3.75,1.5);

\end{axis}
\end{tikzpicture}}
         \caption{F-MNIST w/o attack.}
         \label{fig:fashionmnist_noattack}
     \end{subfigure}
     \hfill
     \begin{subfigure}[b]{\figwidth}
         \centering
         \resizebox{\textwidth}{!}{\begin{tikzpicture}
\begin{axis}[
  set layers,
  grid=major,
  xmin=0, xmax=50,
  % xmode=log, log basis x={2},
  ytick align=outside, ytick pos=left,
  xtick align=outside, xtick pos=left,
  xlabel=\# Epoch,
  ylabel={Model Accuracy (\%)},
  enlarge y limits=0.05,
  % legend pos=south east,
  % legend style={at={(1.5,0.4)}, anchor=north, nodes={scale=0.75, transform shape}},
  ]

\addplot+[
  red, dashed, mark=none, line width=1.6pt, % mark options={scale=0.75},
  smooth, 
  error bars/.cd, 
    y fixed,
    y dir=both, 
    y explicit
] table [x expr=\thisrow{x}, y=y, col sep=comma] {data/main/fashionMNIST/krum_average.txt};
\addlegendentry{Average}

\addplot+[
  cyan, mark=none, line width=2pt, %mark options={scale=0.75},
  smooth, 
  error bars/.cd, 
    y fixed,
    y dir=both, 
    y explicit
] table [x expr=\thisrow{x}, y=y, col sep=comma] {data/main/fashionMNIST/krum_filterl2.txt};
\addlegendentry{filterL2}

\addplot+[
  green, mark=none, line width=2pt, %mark options={scale=0.75},
  smooth, 
  error bars/.cd, 
    y fixed,
    y dir=both, 
    y explicit
] table [x expr=\thisrow{x}, y=y, col sep=comma] {data/main/fashionMNIST/krum_ex_noregret.txt};
\addlegendentry{Explicit\_No\_Regret}

\addplot+[
  black, mark=none, line width=1.6pt, %mark options={scale=0.75},
  smooth, 
  error bars/.cd, 
    y fixed,
    y dir=both, 
    y explicit
] table [x expr=\thisrow{x}, y=y, col sep=comma] {data/main/fashionMNIST/krum_mom_filterl2.txt};

\addplot+[
  gray, mark=none, line width=1.6pt, %mark options={scale=0.75},
  smooth, 
  error bars/.cd, 
    y fixed,
    y dir=both, 
    y explicit
] table [x expr=\thisrow{x}, y=y, col sep=comma] {data/main/fashionMNIST/krum_mom_ex_noregret.txt};

\addplot+[
  blue, dashed,  mark=none, line width=1.6pt, %mark options={scale=0.75},
  smooth, 
  error bars/.cd, 
    y fixed,
    y dir=both, 
    y explicit
] table [x expr=\thisrow{x}, y=y, col sep=comma] {data/main/fashionMNIST/krum_krum.txt};
\addlegendentry{Krum}

\addplot+[
  brown, dashed, mark=none, line width=1.6pt, %mark options={scale=0.75},
  smooth, 
  error bars/.cd, 
    y fixed,
    y dir=both, 
    y explicit
] table [x expr=\thisrow{x}, y=y, col sep=comma] {data/main/fashionMNIST/krum_trimmedmean.txt};
\addlegendentry{Trimmed Mean}

\addplot+[
  darkgreen, dashed, mark=none, line width=1.6pt, %mark options={scale=0.75},
  smooth, 
  error bars/.cd, 
    y fixed,
    y dir=both, 
    y explicit
] table [x expr=\thisrow{x}, y=y, col sep=comma] {data/main/fashionMNIST/krum_bulyankrum.txt};
\addlegendentry{Bulyan\_Krum}

\addplot+[
  orange, dashed, mark=none, line width=1.6pt, %mark options={scale=0.75},
  smooth, 
  error bars/.cd, 
    y fixed,
    y dir=both, 
    y explicit
] table [x expr=\thisrow{x}, y=y, col sep=comma] {data/main/fashionMNIST/krum_bulyantrimmedmean.txt};
\addlegendentry{Bulyan\_Trimmed\_Mean}

\addplot+[
  magenta, dashed, mark=none, line width=1.6pt, %mark options={scale=0.75},
  smooth, 
  error bars/.cd, 
    y fixed,
    y dir=both, 
    y explicit
] table [x expr=\thisrow{x}, y=y, col sep=comma] {data/main/fashionMNIST/krum_median.txt};
\addlegendentry{Median}

\addplot+[
  violet, dashed, mark=none, line width=1.6pt, %mark options={scale=0.75},
  smooth, 
  error bars/.cd, 
    y fixed,
    y dir=both, 
    y explicit
] table [x expr=\thisrow{x}, y=y, col sep=comma] {data/main/fashionMNIST/krum_bulyanmedian.txt};
\addlegendentry{Bulyan\_Median}

\addplot+[
  lime, dashed, mark=none, line width=1.6pt, %mark options={scale=0.75},
  smooth, 
  error bars/.cd, 
    y fixed,
    y dir=both, 
    y explicit
] table [x expr=\thisrow{x}, y=y, col sep=comma] {data/main/fashionMNIST/krum_history.txt};
\addlegendentry{\cite{karimireddy2021learning}}

\addplot+[
  teal, dashed, mark=none, line width=1.6pt, %mark options={scale=0.75},
  smooth, 
  error bars/.cd, 
    y fixed,
    y dir=both, 
    y explicit
] table [x expr=\thisrow{x}, y=y, col sep=comma] {data/main/fashionMNIST/krum_bucketing.txt};
\addlegendentry{\cite{karimireddy2020byzantine}}

\addplot+[
  olive, dashed, mark=none, line width=1.6pt, %mark options={scale=0.75},
  smooth, 
  error bars/.cd, 
    y fixed,
    y dir=both, 
    y explicit
] table [x expr=\thisrow{x}, y=y, col sep=comma] {data/main/fashionMNIST/krum_clustering.txt};

\legend{};

\end{axis}
\end{tikzpicture}}
         \caption{F-MNIST under KA.}
         \label{fig:fashionmnist_krum}
     \end{subfigure}
     
     \hfill
     \begin{subfigure}[b]{\figwidth}
         \centering
         \resizebox{\textwidth}{!}{\begin{tikzpicture}
\begin{axis}[
  set layers,
  grid=major,
  xmin=0, xmax=50,
  % xmode=log, log basis x={2},
  ytick align=outside, ytick pos=left,
  xtick align=outside, xtick pos=left,
  xlabel=\# Epoch,
  ylabel={Model Accuracy (\%)},
  legend pos=south east,
  enlarge y limits=0.05,
  ]

\addplot+[
  red, dashed, mark=none, line width=1.6pt, %mark options={scale=0.75},
  smooth, 
  error bars/.cd, 
    y fixed,
    y dir=both, 
    y explicit
] table [x expr=\thisrow{x}, y=y, col sep=comma] {data/main/fashionMNIST/trimmedmean_average.txt};
\addlegendentry{Average}

\addplot+[
  cyan, mark=none, line width=2pt, %mark options={scale=0.75},
  smooth, 
  error bars/.cd, 
    y fixed,
    y dir=both, 
    y explicit
] table [x expr=\thisrow{x}, y=y, col sep=comma] {data/main/fashionMNIST/trimmedmean_filterl2.txt};
\addlegendentry{filterL2}

\addplot+[
  green, mark=none, line width=2pt, %mark options={scale=0.75},
  smooth, 
  error bars/.cd, 
    y fixed,
    y dir=both, 
    y explicit
] table [x expr=\thisrow{x}, y=y, col sep=comma] {data/main/fashionMNIST/trimmedmean_ex_noregret.txt};
\addlegendentry{Explicit\_No\_Regret}

\addplot+[
  black, mark=none, line width=1.6pt, %mark options={scale=0.75},
  smooth, 
  error bars/.cd, 
    y fixed,
    y dir=both, 
    y explicit
] table [x expr=\thisrow{x}, y=y, col sep=comma] {data/main/fashionMNIST/trimmedmean_mom_filterl2.txt};

\addplot+[
  gray, mark=none, line width=1.6pt, %mark options={scale=0.75},
  smooth, 
  error bars/.cd, 
    y fixed,
    y dir=both, 
    y explicit
] table [x expr=\thisrow{x}, y=y, col sep=comma] {data/main/fashionMNIST/trimmedmean_mom_ex_noregret.txt};

\addplot+[
  blue, dashed, mark=none, line width=1.6pt, %mark options={scale=0.75},
  smooth, 
  error bars/.cd, 
    y fixed,
    y dir=both, 
    y explicit
] table [x expr=\thisrow{x}, y=y, col sep=comma] {data/main/fashionMNIST/trimmedmean_krum.txt};
\addlegendentry{Krum}

\addplot+[
  brown, dashed, mark=none, line width=1.6pt, %mark options={scale=0.75},
  smooth, 
  error bars/.cd, 
    y fixed,
    y dir=both, 
    y explicit
] table [x expr=\thisrow{x}, y=y, col sep=comma] {data/main/fashionMNIST/trimmedmean_trimmedmean.txt};
\addlegendentry{Trimmed Mean}

\addplot+[
  darkgreen, dashed, mark=none, line width=1.6pt, %mark options={scale=0.75},
  smooth, 
  error bars/.cd, 
    y fixed,
    y dir=both, 
    y explicit
] table [x expr=\thisrow{x}, y=y, col sep=comma] {data/main/fashionMNIST/trimmedmean_bulyankrum.txt};
\addlegendentry{Bulyan\_Krum}

\addplot+[
  orange, dashed, mark=none, line width=1.6pt, %mark options={scale=0.75},
  smooth, 
  error bars/.cd, 
    y fixed,
    y dir=both, 
    y explicit
] table [x expr=\thisrow{x}, y=y, col sep=comma] {data/main/fashionMNIST/trimmedmean_bulyantrimmedmean.txt};
\addlegendentry{Bulyan\_Trimmed\_Mean}

\addplot+[
  magenta, dashed, mark=none, line width=1.6pt, %mark options={scale=0.75},
  smooth, 
  error bars/.cd, 
    y fixed,
    y dir=both, 
    y explicit
] table [x expr=\thisrow{x}, y=y, col sep=comma] {data/main/fashionMNIST/trimmedmean_median.txt};
\addlegendentry{Median}

\addplot+[
  violet, dashed, mark=none, line width=1.6pt, %mark options={scale=0.75},
  smooth, 
  error bars/.cd, 
    y fixed,
    y dir=both, 
    y explicit
] table [x expr=\thisrow{x}, y=y, col sep=comma] {data/main/fashionMNIST/trimmedmean_bulyanmedian.txt};
\addlegendentry{Bulyan\_Median}

\addplot+[
  lime, dashed, mark=none, line width=1.6pt, %mark options={scale=0.75},
  smooth, 
  error bars/.cd, 
    y fixed,
    y dir=both, 
    y explicit
] table [x expr=\thisrow{x}, y=y, col sep=comma] {data/main/fashionMNIST/trimmedmean_history.txt};
\addlegendentry{\cite{karimireddy2021learning}}

\addplot+[
  teal, dashed, mark=none, line width=1.6pt, %mark options={scale=0.75},
  smooth, 
  error bars/.cd, 
    y fixed,
    y dir=both, 
    y explicit
] table [x expr=\thisrow{x}, y=y, col sep=comma] {data/main/fashionMNIST/trimmedmean_bucketing.txt};
\addlegendentry{\cite{karimireddy2020byzantine}}

\addplot+[
  olive, dashed, mark=none, line width=1.6pt, %mark options={scale=0.75},
  smooth, 
  error bars/.cd, 
    y fixed,
    y dir=both, 
    y explicit
] table [x expr=\thisrow{x}, y=y, col sep=comma] {data/main/fashionMNIST/trimmedmean_clustering.txt};

\legend{};

\end{axis}
\end{tikzpicture}}
         \caption{F-MNIST under TMA.}
         \label{fig:fashionmnist_trimmedmean}
     \end{subfigure}
     \hfill
     \begin{subfigure}[b]{\figwidth}
         \centering
         \resizebox{\textwidth}{!}{\begin{tikzpicture}
\begin{axis}[
  set layers,
  grid=major,
  xmin=0, xmax=50,
  % xmode=log, log basis x={2},
  ytick align=outside, ytick pos=left,
  xtick align=outside, xtick pos=left,
  xlabel=\# Epoch,
  ylabel={Attack Success Rate (\%)},
  enlarge y limits=0.05,
  % legend pos=south east,
  legend style={at={(1.5,0.4)}, anchor=north, nodes={scale=0.75, transform shape}}]

\addplot+[
  red, dashed, mark=none, line width=1.6pt, % mark options={scale=0.75},
  smooth, 
  error bars/.cd, 
    y fixed,
    y dir=both, 
    y explicit
] table [x expr=\thisrow{x}, y=asr, col sep=comma] {data/main/fashionMNIST/modelpoisoning_average.txt};
\addlegendentry{Average}

\addplot+[
  cyan, mark=none, line width=2pt, %mark options={scale=0.75},
  smooth, 
  error bars/.cd, 
    y fixed,
    y dir=both, 
    y explicit
] table [x expr=\thisrow{x}, y=asr, col sep=comma] {data/main/fashionMNIST/modelpoisoning_filterl2.txt};
\addlegendentry{Krum Attack + filterL2}

\addplot+[
  green, mark=none, line width=2pt, %mark options={scale=0.75},
  smooth, 
  error bars/.cd, 
    y fixed,
    y dir=both, 
    y explicit
] table [x expr=\thisrow{x}, y=asr, col sep=comma] {data/main/fashionMNIST/modelpoisoning_ex_noregret.txt};
\addlegendentry{Explicit\_No\_Regret}

\addplot+[
  black, mark=none, line width=1.6pt, %mark options={scale=0.75},
  smooth, 
  error bars/.cd, 
    y fixed,
    y dir=both, 
    y explicit
] table [x expr=\thisrow{x}, y=asr, col sep=comma] {data/main/fashionMNIST/modelpoisoning_mom_filterl2.txt};

\addplot+[
  gray, mark=none, line width=1.6pt, %mark options={scale=0.75},
  smooth, 
  error bars/.cd, 
    y fixed,
    y dir=both, 
    y explicit
] table [x expr=\thisrow{x}, y=asr, col sep=comma] {data/main/fashionMNIST/modelpoisoning_mom_ex_noregret.txt};

\addplot+[
  blue, dashed,  mark=none, line width=1.6pt, %mark options={scale=0.75},
  smooth, 
  error bars/.cd, 
    y fixed,
    y dir=both, 
    y explicit
] table [x expr=\thisrow{x}, y=asr, col sep=comma] {data/main/fashionMNIST/modelpoisoning_krum.txt};
\addlegendentry{Krum}

\addplot+[
  brown, dashed, mark=none, line width=1.6pt, %mark options={scale=0.75},
  smooth, 
  error bars/.cd, 
    y fixed,
    y dir=both, 
    y explicit
] table [x expr=\thisrow{x}, y=asr, col sep=comma] {data/main/fashionMNIST/modelpoisoning_trimmedmean.txt};
\addlegendentry{Trimmed Mean}

\addplot+[
  darkgreen, dashed, mark=none, line width=1.6pt, %mark options={scale=0.75},
  smooth, 
  error bars/.cd, 
    y fixed,
    y dir=both, 
    y explicit
] table [x expr=\thisrow{x}, y=asr, col sep=comma] {data/main/fashionMNIST/modelpoisoning_bulyankrum.txt};
\addlegendentry{Bulyan\_Krum}

\addplot+[
  orange, dashed, mark=none, line width=1.6pt, %mark options={scale=0.75},
  smooth, 
  error bars/.cd, 
    y fixed,
    y dir=both, 
    y explicit
] table [x expr=\thisrow{x}, y=asr, col sep=comma] {data/main/fashionMNIST/modelpoisoning_bulyantrimmedmean.txt};
\addlegendentry{Bulyan\_Trimmed\_Mean}

\addplot+[
  magenta, dashed, mark=none, line width=1.6pt, %mark options={scale=0.75},
  smooth, 
  error bars/.cd, 
    y fixed,
    y dir=both, 
    y explicit
] table [x expr=\thisrow{x}, y=asr, col sep=comma] {data/main/fashionMNIST/modelpoisoning_median.txt};
\addlegendentry{Median}

\addplot+[
  violet, dashed, mark=none, line width=1.6pt, %mark options={scale=0.75},
  smooth, 
  error bars/.cd, 
    y fixed,
    y dir=both, 
    y explicit
] table [x expr=\thisrow{x}, y=asr, col sep=comma] {data/main/fashionMNIST/modelpoisoning_bulyanmedian.txt};
\addlegendentry{Bulyan\_Median}

\addplot+[
  lime, dashed, mark=none, line width=1.6pt, %mark options={scale=0.75},
  smooth, 
  error bars/.cd, 
    y fixed,
    y dir=both, 
    y explicit
] table [x expr=\thisrow{x}, y=asr, col sep=comma] {data/main/fashionMNIST/modelpoisoning_history.txt};
\addlegendentry{\cite{karimireddy2021learning}}

\addplot+[
  teal, dashed, mark=none, line width=1.6pt, %mark options={scale=0.75},
  smooth, 
  error bars/.cd, 
    y fixed,
    y dir=both, 
    y explicit
] table [x expr=\thisrow{x}, y=asr, col sep=comma] {data/main/fashionMNIST/modelpoisoning_bucketing.txt};
\addlegendentry{\cite{karimireddy2020byzantine}}

\addplot+[
  olive, dashed, mark=none, line width=1.6pt, %mark options={scale=0.75},
  smooth, 
  error bars/.cd, 
    y fixed,
    y dir=both, 
    y explicit
] table [x expr=\thisrow{x}, y=asr, col sep=comma] {data/main/fashionMNIST/modelpoisoning_clustering.txt};

\legend{};

\end{axis}
\end{tikzpicture}}
         \caption{F-MNIST under MPA.}
         \label{fig:fashionmnist_modelpoisoning_asr}
     \end{subfigure}
     \hfill
     \begin{subfigure}[b]{\figwidth}
         \centering
         \resizebox{\textwidth}{!}{\begin{tikzpicture}
\begin{axis}[
  set layers,
  grid=major,
  xmin=0, xmax=50,
  % xmode=log, log basis x={2},
  ytick align=outside, ytick pos=left,
  xtick align=outside, xtick pos=left,
  xlabel=\# Epoch,
  ylabel={Model Accuracy (\%)},
  enlarge y limits=0.05,
  % legend pos=south east,
  legend style={at={(1.5,0.4)}, anchor=north, nodes={scale=0.75, transform shape}}]

\addplot+[
  red, dashed, mark=none, line width=1.6pt, % mark options={scale=0.75},
  smooth, 
  error bars/.cd, 
    y fixed,
    y dir=both, 
    y explicit
] table [x expr=\thisrow{x}, y=acc, col sep=comma] {data/main/fashionMNIST/modelpoisoning_average.txt};
\addlegendentry{Average}

\addplot+[
  cyan, mark=none, line width=2pt, %mark options={scale=0.75},
  smooth, 
  error bars/.cd, 
    y fixed,
    y dir=both, 
    y explicit
] table [x expr=\thisrow{x}, y=acc, col sep=comma] {data/main/fashionMNIST/modelpoisoning_filterl2.txt};
\addlegendentry{Krum Attack + filterL2}

\addplot+[
  green, mark=none, line width=2pt, %mark options={scale=0.75},
  smooth, 
  error bars/.cd, 
    y fixed,
    y dir=both, 
    y explicit
] table [x expr=\thisrow{x}, y=acc, col sep=comma] {data/main/fashionMNIST/modelpoisoning_ex_noregret.txt};
\addlegendentry{Explicit\_No\_Regret}

\addplot+[
  black, mark=none, line width=1.6pt, %mark options={scale=0.75},
  smooth, 
  error bars/.cd, 
    y fixed,
    y dir=both, 
    y explicit
] table [x expr=\thisrow{x}, y=acc, col sep=comma] {data/main/fashionMNIST/modelpoisoning_mom_filterl2.txt};

\addplot+[
  gray, mark=none, line width=1.6pt, %mark options={scale=0.75},
  smooth, 
  error bars/.cd, 
    y fixed,
    y dir=both, 
    y explicit
] table [x expr=\thisrow{x}, y=acc, col sep=comma] {data/main/fashionMNIST/modelpoisoning_mom_ex_noregret.txt};

\addplot+[
  blue, dashed,  mark=none, line width=1.6pt, %mark options={scale=0.75},
  smooth, 
  error bars/.cd, 
    y fixed,
    y dir=both, 
    y explicit
] table [x expr=\thisrow{x}, y=acc, col sep=comma] {data/main/fashionMNIST/modelpoisoning_krum.txt};
\addlegendentry{Krum}

\addplot+[
  brown, dashed, mark=none, line width=1.6pt, %mark options={scale=0.75},
  smooth, 
  error bars/.cd, 
    y fixed,
    y dir=both, 
    y explicit
] table [x expr=\thisrow{x}, y=acc, col sep=comma] {data/main/fashionMNIST/modelpoisoning_trimmedmean.txt};
\addlegendentry{Trimmed Mean}

\addplot+[
  darkgreen, dashed, mark=none, line width=1.6pt, %mark options={scale=0.75},
  smooth, 
  error bars/.cd, 
    y fixed,
    y dir=both, 
    y explicit
] table [x expr=\thisrow{x}, y=acc, col sep=comma] {data/main/fashionMNIST/modelpoisoning_bulyankrum.txt};
\addlegendentry{Bulyan\_Krum}

\addplot+[
  orange, dashed, mark=none, line width=1.6pt, %mark options={scale=0.75},
  smooth, 
  error bars/.cd, 
    y fixed,
    y dir=both, 
    y explicit
] table [x expr=\thisrow{x}, y=acc, col sep=comma] {data/main/fashionMNIST/modelpoisoning_bulyantrimmedmean.txt};
\addlegendentry{Bulyan\_Trimmed\_Mean}

\addplot+[
  magenta, dashed, mark=none, line width=1.6pt, %mark options={scale=0.75},
  smooth, 
  error bars/.cd, 
    y fixed,
    y dir=both, 
    y explicit
] table [x expr=\thisrow{x}, y=acc, col sep=comma] {data/main/fashionMNIST/modelpoisoning_median.txt};
\addlegendentry{Median}

\addplot+[
  violet, dashed, mark=none, line width=1.6pt, %mark options={scale=0.75},
  smooth, 
  error bars/.cd, 
    y fixed,
    y dir=both, 
    y explicit
] table [x expr=\thisrow{x}, y=acc, col sep=comma] {data/main/fashionMNIST/modelpoisoning_bulyanmedian.txt};
\addlegendentry{Bulyan\_Median}

\addplot+[
  lime, dashed, mark=none, line width=1.6pt, %mark options={scale=0.75},
  smooth, 
  error bars/.cd, 
    y fixed,
    y dir=both, 
    y explicit
] table [x expr=\thisrow{x}, y=acc, col sep=comma] {data/main/fashionMNIST/modelpoisoning_history.txt};
\addlegendentry{\cite{karimireddy2021learning}}

\addplot+[
  teal, dashed, mark=none, line width=1.6pt, %mark options={scale=0.75},
  smooth, 
  error bars/.cd, 
    y fixed,
    y dir=both, 
    y explicit
] table [x expr=\thisrow{x}, y=acc, col sep=comma] {data/main/fashionMNIST/modelpoisoning_bucketing.txt};
\addlegendentry{\cite{karimireddy2020byzantine}}

\addplot+[
  olive, dashed, mark=none, line width=1.6pt, %mark options={scale=0.75},
  smooth, 
  error bars/.cd, 
    y fixed,
    y dir=both, 
    y explicit
] table [x expr=\thisrow{x}, y=acc, col sep=comma] {data/main/fashionMNIST/modelpoisoning_clustering.txt};

\legend{};

\end{axis}
\end{tikzpicture}}
         \caption{F-MNIST under MPA.}
         \label{fig:fashionmnist_modelpoisoning_acc}
     \end{subfigure}
     \hfill
     \begin{subfigure}[b]{\figwidth}
         \centering
         \resizebox{\textwidth}{!}{\begin{tikzpicture}
\begin{axis}[
  set layers,
  grid=major,
  xmin=0, xmax=50,
  % xmode=log, log basis x={2},
  ytick align=outside, ytick pos=left,
  xtick align=outside, xtick pos=left,
  xlabel=\# Epoch,
  ylabel={Attack Success Rate (\%)},
  enlarge y limits=0.05,
  % legend pos=south east,
  legend style={at={(1.5,0.4)}, anchor=north, nodes={scale=0.75, transform shape}}]

\addplot+[
  red, dashed, mark=none, line width=1.6pt, % mark options={scale=0.75},
  smooth, 
  error bars/.cd, 
    y fixed,
    y dir=both, 
    y explicit
] table [x expr=\thisrow{x}, y=asr, col sep=comma] {data/main/fashionMNIST/backdoor_average.txt};
\addlegendentry{Average}

\addplot+[
  cyan, mark=none, line width=2pt, %mark options={scale=0.75},
  smooth, 
  error bars/.cd, 
    y fixed,
    y dir=both, 
    y explicit
] table [x expr=\thisrow{x}, y=asr, col sep=comma] {data/main/fashionMNIST/backdoor_filterl2.txt};
\addlegendentry{Krum Attack + filterL2}

\addplot+[
  green, mark=none, line width=2pt, %mark options={scale=0.75},
  smooth, 
  error bars/.cd, 
    y fixed,
    y dir=both, 
    y explicit
] table [x expr=\thisrow{x}, y=asr, col sep=comma] {data/main/fashionMNIST/backdoor_ex_noregret.txt};
\addlegendentry{Explicit\_No\_Regret}

\addplot+[
  black, mark=none, line width=1.6pt, %mark options={scale=0.75},
  smooth, 
  error bars/.cd, 
    y fixed,
    y dir=both, 
    y explicit
] table [x expr=\thisrow{x}, y=asr, col sep=comma] {data/main/fashionMNIST/backdoor_mom_filterl2.txt};

\addplot+[
  gray, mark=none, line width=1.6pt, %mark options={scale=0.75},
  smooth, 
  error bars/.cd, 
    y fixed,
    y dir=both, 
    y explicit
] table [x expr=\thisrow{x}, y=asr, col sep=comma] {data/main/fashionMNIST/backdoor_mom_ex_noregret.txt};

\addplot+[
  blue, dashed,  mark=none, line width=1.6pt, %mark options={scale=0.75},
  smooth, 
  error bars/.cd, 
    y fixed,
    y dir=both, 
    y explicit
] table [x expr=\thisrow{x}, y=asr, col sep=comma] {data/main/fashionMNIST/backdoor_krum.txt};
\addlegendentry{Krum}

\addplot+[
  brown, dashed, mark=none, line width=1.6pt, %mark options={scale=0.75},
  smooth, 
  error bars/.cd, 
    y fixed,
    y dir=both, 
    y explicit
] table [x expr=\thisrow{x}, y=asr, col sep=comma] {data/main/fashionMNIST/backdoor_trimmedmean.txt};
\addlegendentry{Trimmed Mean}

\addplot+[
  darkgreen, dashed, mark=none, line width=1.6pt, %mark options={scale=0.75},
  smooth, 
  error bars/.cd, 
    y fixed,
    y dir=both, 
    y explicit
] table [x expr=\thisrow{x}, y=asr, col sep=comma] {data/main/fashionMNIST/backdoor_bulyankrum.txt};
\addlegendentry{Bulyan\_Krum}

\addplot+[
  orange, dashed, mark=none, line width=1.6pt, %mark options={scale=0.75},
  smooth, 
  error bars/.cd, 
    y fixed,
    y dir=both, 
    y explicit
] table [x expr=\thisrow{x}, y=asr, col sep=comma] {data/main/fashionMNIST/backdoor_bulyantrimmedmean.txt};
\addlegendentry{Bulyan\_Trimmed\_Mean}

\addplot+[
  magenta, dashed, mark=none, line width=1.6pt, %mark options={scale=0.75},
  smooth, 
  error bars/.cd, 
    y fixed,
    y dir=both, 
    y explicit
] table [x expr=\thisrow{x}, y=asr, col sep=comma] {data/main/fashionMNIST/backdoor_median.txt};
\addlegendentry{Median}

\addplot+[
  violet, dashed, mark=none, line width=1.6pt, %mark options={scale=0.75},
  smooth, 
  error bars/.cd, 
    y fixed,
    y dir=both, 
    y explicit
] table [x expr=\thisrow{x}, y=asr, col sep=comma] {data/main/fashionMNIST/backdoor_bulyanmedian.txt};
\addlegendentry{Bulyan\_Median}

\addplot+[
  lime, dashed, mark=none, line width=1.6pt, %mark options={scale=0.75},
  smooth, 
  error bars/.cd, 
    y fixed,
    y dir=both, 
    y explicit
] table [x expr=\thisrow{x}, y=asr, col sep=comma] {data/main/fashionMNIST/backdoor_history.txt};
\addlegendentry{\cite{karimireddy2021learning}}

\addplot+[
  teal, dashed, mark=none, line width=1.6pt, %mark options={scale=0.75},
  smooth, 
  error bars/.cd, 
    y fixed,
    y dir=both, 
    y explicit
] table [x expr=\thisrow{x}, y=asr, col sep=comma] {data/main/fashionMNIST/backdoor_bucketing.txt};
\addlegendentry{\cite{karimireddy2020byzantine}}

\addplot+[
  olive, dashed, mark=none, line width=1.6pt, %mark options={scale=0.75},
  smooth, 
  error bars/.cd, 
    y fixed,
    y dir=both, 
    y explicit
] table [x expr=\thisrow{x}, y=asr, col sep=comma] {data/main/fashionMNIST/backdoor_clustering.txt};

\legend{};

\end{axis}
\end{tikzpicture}}
         \caption{F-MNIST under MRA.}
         \label{fig:fashionmnist_backdoor_asr}
     \end{subfigure}
     
     \hfill
     \begin{subfigure}[b]{\figwidth}
         \centering
         \resizebox{\textwidth}{!}{\begin{tikzpicture}
\begin{axis}[
  set layers,
  grid=major,
  xmin=0, xmax=50,
  % xmode=log, log basis x={2},
  ytick align=outside, ytick pos=left,
  xtick align=outside, xtick pos=left,
  xlabel=\# Epoch,
  ylabel={Model Accuracy (\%)},
  enlarge y limits=0.05,
  % legend pos=south east,
  legend style={at={(1.5,0.4)}, anchor=north, nodes={scale=0.75, transform shape}}]

\addplot+[
  red, dashed, mark=none, line width=1.6pt, % mark options={scale=0.75},
  smooth, 
  error bars/.cd, 
    y fixed,
    y dir=both, 
    y explicit
] table [x expr=\thisrow{x}, y=acc, col sep=comma] {data/main/fashionMNIST/backdoor_average.txt};
\addlegendentry{Average}

\addplot+[
  cyan, mark=none, line width=2pt, %mark options={scale=0.75},
  smooth, 
  error bars/.cd, 
    y fixed,
    y dir=both, 
    y explicit
] table [x expr=\thisrow{x}, y=acc, col sep=comma] {data/main/fashionMNIST/backdoor_filterl2.txt};
\addlegendentry{Krum Attack + filterL2}

\addplot+[
  green, mark=none, line width=2pt, %mark options={scale=0.75},
  smooth, 
  error bars/.cd, 
    y fixed,
    y dir=both, 
    y explicit
] table [x expr=\thisrow{x}, y=acc, col sep=comma] {data/main/fashionMNIST/backdoor_ex_noregret.txt};
\addlegendentry{Explicit\_No\_Regret}

\addplot+[
  black, mark=none, line width=1.6pt, %mark options={scale=0.75},
  smooth, 
  error bars/.cd, 
    y fixed,
    y dir=both, 
    y explicit
] table [x expr=\thisrow{x}, y=acc, col sep=comma] {data/main/fashionMNIST/backdoor_mom_filterl2.txt};

\addplot+[
  gray, mark=none, line width=1.6pt, %mark options={scale=0.75},
  smooth, 
  error bars/.cd, 
    y fixed,
    y dir=both, 
    y explicit
] table [x expr=\thisrow{x}, y=acc, col sep=comma] {data/main/fashionMNIST/backdoor_mom_ex_noregret.txt};

\addplot+[
  blue, dashed,  mark=none, line width=1.6pt, %mark options={scale=0.75},
  smooth, 
  error bars/.cd, 
    y fixed,
    y dir=both, 
    y explicit
] table [x expr=\thisrow{x}, y=acc, col sep=comma] {data/main/fashionMNIST/backdoor_krum.txt};
\addlegendentry{Krum}

\addplot+[
  brown, dashed, mark=none, line width=1.6pt, %mark options={scale=0.75},
  smooth, 
  error bars/.cd, 
    y fixed,
    y dir=both, 
    y explicit
] table [x expr=\thisrow{x}, y=acc, col sep=comma] {data/main/fashionMNIST/backdoor_trimmedmean.txt};
\addlegendentry{Trimmed Mean}

\addplot+[
  darkgreen, dashed, mark=none, line width=1.6pt, %mark options={scale=0.75},
  smooth, 
  error bars/.cd, 
    y fixed,
    y dir=both, 
    y explicit
] table [x expr=\thisrow{x}, y=acc, col sep=comma] {data/main/fashionMNIST/backdoor_bulyankrum.txt};
\addlegendentry{Bulyan\_Krum}

\addplot+[
  orange, dashed, mark=none, line width=1.6pt, %mark options={scale=0.75},
  smooth, 
  error bars/.cd, 
    y fixed,
    y dir=both, 
    y explicit
] table [x expr=\thisrow{x}, y=acc, col sep=comma] {data/main/fashionMNIST/backdoor_bulyantrimmedmean.txt};
\addlegendentry{Bulyan\_Trimmed\_Mean}

\addplot+[
  magenta, dashed, mark=none, line width=1.6pt, %mark options={scale=0.75},
  smooth, 
  error bars/.cd, 
    y fixed,
    y dir=both, 
    y explicit
] table [x expr=\thisrow{x}, y=acc, col sep=comma] {data/main/fashionMNIST/backdoor_median.txt};
\addlegendentry{Median}

\addplot+[
  violet, dashed, mark=none, line width=1.6pt, %mark options={scale=0.75},
  smooth, 
  error bars/.cd, 
    y fixed,
    y dir=both, 
    y explicit
] table [x expr=\thisrow{x}, y=acc, col sep=comma] {data/main/fashionMNIST/backdoor_bulyanmedian.txt};
\addlegendentry{Bulyan\_Median}

\addplot+[
  lime, dashed, mark=none, line width=1.6pt, %mark options={scale=0.75},
  smooth, 
  error bars/.cd, 
    y fixed,
    y dir=both, 
    y explicit
] table [x expr=\thisrow{x}, y=acc, col sep=comma] {data/main/fashionMNIST/backdoor_history.txt};
\addlegendentry{\cite{karimireddy2021learning}}

\addplot+[
  teal, dashed, mark=none, line width=1.6pt, %mark options={scale=0.75},
  smooth, 
  error bars/.cd, 
    y fixed,
    y dir=both, 
    y explicit
] table [x expr=\thisrow{x}, y=acc, col sep=comma] {data/main/fashionMNIST/backdoor_bucketing.txt};
\addlegendentry{\cite{karimireddy2020byzantine}}

\addplot+[
  olive, dashed, mark=none, line width=1.6pt, %mark options={scale=0.75},
  smooth, 
  error bars/.cd, 
    y fixed,
    y dir=both, 
    y explicit
] table [x expr=\thisrow{x}, y=acc, col sep=comma] {data/main/fashionMNIST/backdoor_clustering.txt};

\legend{};

\end{axis}
\end{tikzpicture}}
         \caption{F-MNIST under MRA.}
         \label{fig:fashionmnist_backdoor_acc}
     \end{subfigure}
     \hfill
     \begin{subfigure}[b]{\figwidth}
         \centering
         \resizebox{\textwidth}{!}{\begin{tikzpicture}
\begin{axis}[
  set layers,
  grid=major,
  xmin=0, xmax=50,
  % xmode=log, log basis x={2},
  ytick align=outside, ytick pos=left,
  xtick align=outside, xtick pos=left,
  xlabel=\# Epoch,
  ylabel={Attack Success Rate (\%)},
  enlarge y limits=0.05,
  % legend pos=south east,
  legend style={at={(1.5,0.4)}, anchor=north, nodes={scale=0.75, transform shape}}]

\addplot+[
  red, dashed, mark=none, line width=1.6pt, % mark options={scale=0.75},
  smooth, 
  error bars/.cd, 
    y fixed,
    y dir=both, 
    y explicit
] table [x expr=\thisrow{x}, y=asr, col sep=comma] {data/main/fashionMNIST/dba_average.txt};
\addlegendentry{Average}

\addplot+[
  cyan, mark=none, line width=2pt, %mark options={scale=0.75},
  smooth, 
  error bars/.cd, 
    y fixed,
    y dir=both, 
    y explicit
] table [x expr=\thisrow{x}, y=asr, col sep=comma] {data/main/fashionMNIST/dba_filterL2.txt};
\addlegendentry{FilterL2}

\addplot+[
  green, mark=none, line width=2pt, %mark options={scale=0.75},
  smooth, 
  error bars/.cd, 
    y fixed,
    y dir=both, 
    y explicit
] table [x expr=\thisrow{x}, y=asr, col sep=comma] {data/main/fashionMNIST/dba_ex_noregret.txt};
\addlegendentry{Explicit\_No\_Regret}

\addplot+[
  black, mark=none, line width=1.6pt, %mark options={scale=0.75},
  smooth, 
  error bars/.cd, 
    y fixed,
    y dir=both, 
    y explicit
] table [x expr=\thisrow{x}, y=asr, col sep=comma] {data/main/fashionMNIST/dba_mom_filterl2.txt};

\addplot+[
  gray, mark=none, line width=1.6pt, %mark options={scale=0.75},
  smooth, 
  error bars/.cd, 
    y fixed,
    y dir=both, 
    y explicit
] table [x expr=\thisrow{x}, y=asr, col sep=comma] {data/main/fashionMNIST/dba_mom_ex_noregret.txt};

\addplot+[
  blue, dashed,  mark=none, line width=1.6pt, %mark options={scale=0.75},
  smooth, 
  error bars/.cd, 
    y fixed,
    y dir=both, 
    y explicit
] table [x expr=\thisrow{x}, y=asr, col sep=comma] {data/main/fashionMNIST/dba_krum.txt};
\addlegendentry{Krum}

\addplot+[
  brown, dashed, mark=none, line width=1.6pt, %mark options={scale=0.75},
  smooth, 
  error bars/.cd, 
    y fixed,
    y dir=both, 
    y explicit
] table [x expr=\thisrow{x}, y=asr, col sep=comma] {data/main/fashionMNIST/dba_trimmedmean.txt};
\addlegendentry{Trimmed Mean}

\addplot+[
  darkgreen, dashed, mark=none, line width=1.6pt, %mark options={scale=0.75},
  smooth, 
  error bars/.cd, 
    y fixed,
    y dir=both, 
    y explicit
] table [x expr=\thisrow{x}, y=asr, col sep=comma] {data/main/fashionMNIST/dba_bulyankrum.txt};
\addlegendentry{Bulyan\_Krum}

\addplot+[
  orange, dashed, mark=none, line width=1.6pt, %mark options={scale=0.75},
  smooth, 
  error bars/.cd, 
    y fixed,
    y dir=both, 
    y explicit
] table [x expr=\thisrow{x}, y=asr, col sep=comma] {data/main/fashionMNIST/dba_bulyantrm.txt};
\addlegendentry{Bulyan\_Trimmed\_Mean}

\addplot+[
  magenta, dashed, mark=none, line width=1.6pt, %mark options={scale=0.75},
  smooth, 
  error bars/.cd, 
    y fixed,
    y dir=both, 
    y explicit
] table [x expr=\thisrow{x}, y=asr, col sep=comma] {data/main/fashionMNIST/dba_median.txt};
\addlegendentry{Median}

\addplot+[
  violet, dashed, mark=none, line width=1.6pt, %mark options={scale=0.75},
  smooth, 
  error bars/.cd, 
    y fixed,
    y dir=both, 
    y explicit
] table [x expr=\thisrow{x}, y=asr, col sep=comma] {data/main/fashionMNIST/dba_bulyanmedian.txt};
\addlegendentry{Bulyan\_Median}

\addplot+[
  lime, dashed, mark=none, line width=1.6pt, %mark options={scale=0.75},
  smooth, 
  error bars/.cd, 
    y fixed,
    y dir=both, 
    y explicit
] table [x expr=\thisrow{x}, y=asr, col sep=comma] {data/main/fashionMNIST/dba_history.txt};
\addlegendentry{\cite{karimireddy2021learning}}

\addplot+[
  teal, dashed, mark=none, line width=1.6pt, %mark options={scale=0.75},
  smooth, 
  error bars/.cd, 
    y fixed,
    y dir=both, 
    y explicit
] table [x expr=\thisrow{x}, y=asr, col sep=comma] {data/main/fashionMNIST/dba_bucketing.txt};
\addlegendentry{\cite{karimireddy2020byzantine}}

\addplot+[
  olive, dashed, mark=none, line width=1.6pt, %mark options={scale=0.75},
  smooth, 
  error bars/.cd, 
    y fixed,
    y dir=both, 
    y explicit
] table [x expr=\thisrow{x}, y=asr, col sep=comma] {data/main/fashionMNIST/dba_clustering.txt};

\legend{};

\end{axis}
\end{tikzpicture}}
         \caption{F-MNIST under DBA.}
         \label{fig:fashionmnist_dba_asr}
     \end{subfigure}
     \hfill
     \begin{subfigure}[b]{\figwidth}
         \centering
         \resizebox{\textwidth}{!}{\begin{tikzpicture}
\begin{axis}[
  set layers,
  grid=major,
  xmin=0, xmax=50,
  % xmode=log, log basis x={2},
  ytick align=outside, ytick pos=left,
  xtick align=outside, xtick pos=left,
  xlabel=\# Epoch,
  ylabel={Model Accuracy (\%)},
  enlarge y limits=0.05,
  % legend pos=south east,
  legend style={at={(1.5,0.4)}, anchor=north, nodes={scale=0.75, transform shape}}]

\addplot+[
  red, dashed, mark=none, line width=1.6pt, % mark options={scale=0.75},
  smooth, 
  error bars/.cd, 
    y fixed,
    y dir=both, 
    y explicit
] table [x expr=\thisrow{x}, y=acc, col sep=comma] {data/main/fashionMNIST/dba_average.txt};
\addlegendentry{Average}

\addplot+[
  cyan, mark=none, line width=2pt, %mark options={scale=0.75},
  smooth, 
  error bars/.cd, 
    y fixed,
    y dir=both, 
    y explicit
] table [x expr=\thisrow{x}, y=acc, col sep=comma] {data/main/fashionMNIST/dba_filterL2.txt};
\addlegendentry{FilterL2}

\addplot+[
  green, mark=none, line width=2pt, %mark options={scale=0.75},
  smooth, 
  error bars/.cd, 
    y fixed,
    y dir=both, 
    y explicit
] table [x expr=\thisrow{x}, y=acc, col sep=comma] {data/main/fashionMNIST/dba_ex_noregret.txt};
\addlegendentry{Explicit\_No\_Regret}

\addplot+[
  black, mark=none, line width=1.6pt, %mark options={scale=0.75},
  smooth, 
  error bars/.cd, 
    y fixed,
    y dir=both, 
    y explicit
] table [x expr=\thisrow{x}, y=acc, col sep=comma] {data/main/fashionMNIST/dba_mom_filterl2.txt};

\addplot+[
  gray, mark=none, line width=1.6pt, %mark options={scale=0.75},
  smooth, 
  error bars/.cd, 
    y fixed,
    y dir=both, 
    y explicit
] table [x expr=\thisrow{x}, y=acc, col sep=comma] {data/main/fashionMNIST/dba_mom_ex_noregret.txt};

\addplot+[
  blue, dashed,  mark=none, line width=1.6pt, %mark options={scale=0.75},
  smooth, 
  error bars/.cd, 
    y fixed,
    y dir=both, 
    y explicit
] table [x expr=\thisrow{x}, y=acc, col sep=comma] {data/main/fashionMNIST/dba_krum.txt};
\addlegendentry{Krum}

\addplot+[
  brown, dashed, mark=none, line width=1.6pt, %mark options={scale=0.75},
  smooth, 
  error bars/.cd, 
    y fixed,
    y dir=both, 
    y explicit
] table [x expr=\thisrow{x}, y=acc, col sep=comma] {data/main/fashionMNIST/dba_trimmedmean.txt};
\addlegendentry{Trimmed Mean}

\addplot+[
  darkgreen, dashed, mark=none, line width=1.6pt, %mark options={scale=0.75},
  smooth, 
  error bars/.cd, 
    y fixed,
    y dir=both, 
    y explicit
] table [x expr=\thisrow{x}, y=acc, col sep=comma] {data/main/fashionMNIST/dba_bulyankrum.txt};
\addlegendentry{Bulyan\_Krum}

\addplot+[
  orange, dashed, mark=none, line width=1.6pt, %mark options={scale=0.75},
  smooth, 
  error bars/.cd, 
    y fixed,
    y dir=both, 
    y explicit
] table [x expr=\thisrow{x}, y=acc, col sep=comma] {data/main/fashionMNIST/dba_bulyantrm.txt};
\addlegendentry{Bulyan\_Trimmed\_Mean}

\addplot+[
  magenta, dashed, mark=none, line width=1.6pt, %mark options={scale=0.75},
  smooth, 
  error bars/.cd, 
    y fixed,
    y dir=both, 
    y explicit
] table [x expr=\thisrow{x}, y=acc, col sep=comma] {data/main/fashionMNIST/dba_median.txt};
\addlegendentry{Bulyan\_Median}

\addplot+[
  violet, dashed, mark=none, line width=1.6pt, %mark options={scale=0.75},
  smooth, 
  error bars/.cd, 
    y fixed,
    y dir=both, 
    y explicit
] table [x expr=\thisrow{x}, y=acc, col sep=comma] {data/main/fashionMNIST/dba_bulyanmedian.txt};
\addlegendentry{Bulyan\_Median}

\addplot+[
  lime, dashed, mark=none, line width=1.6pt, %mark options={scale=0.75},
  smooth, 
  error bars/.cd, 
    y fixed,
    y dir=both, 
    y explicit
] table [x expr=\thisrow{x}, y=acc, col sep=comma] {data/main/fashionMNIST/dba_history.txt};
\addlegendentry{\cite{karimireddy2021learning}}

\addplot+[
  teal, dashed, mark=none, line width=1.6pt, %mark options={scale=0.75},
  smooth, 
  error bars/.cd, 
    y fixed,
    y dir=both, 
    y explicit
] table [x expr=\thisrow{x}, y=acc, col sep=comma] {data/main/fashionMNIST/dba_bucketing.txt};
\addlegendentry{\cite{karimireddy2020byzantine}}

\addplot+[
  olive, dashed, mark=none, line width=1.6pt, %mark options={scale=0.75},
  smooth, 
  error bars/.cd, 
    y fixed,
    y dir=both, 
    y explicit
] table [x expr=\thisrow{x}, y=acc, col sep=comma] {data/main/fashionMNIST/dba_clustering.txt};

\legend{};

\end{axis}
\end{tikzpicture}}
         \caption{F-MNIST under DBA.}
         \label{fig:fashionmnist_dba_acc}
     \end{subfigure}
    \hfill
    \begin{subfigure}[b]{\figwidth}
         \centering
         \resizebox{\textwidth}{!}{\begin{tikzpicture}[spy using outlines={rectangle, magnification=2, width=6cm, height=2cm, connect spies}]
\begin{axis}[
  % set layers,
  grid=major,
  xmin=0, xmax=50,
  % xmode=log, log basis x={2},
  ytick align=outside, ytick pos=left,
  xtick align=outside, xtick pos=left,
  xlabel=\# Epoch,
  ylabel={Model Accuracy (\%)},
  enlarge y limits=0.05
  % legend pos=south east,
  % legend style={at={(1.5,0.4)}, anchor=north, nodes={scale=0.75, transform shape}},
  ]

\addplot+[
  red, dashed, mark=none, line width=1.6pt, % mark options={scale=0.75},
  smooth, 
  error bars/.cd, 
    y fixed,
    y dir=both, 
    y explicit
] table [x expr=\thisrow{x}, y=y, col sep=comma] {data/main/fashionMNIST/xie_average.txt};
\addlegendentry{Average}

\addplot+[
  cyan, mark=none, line width=2pt, %mark options={scale=0.75},
  smooth, 
  error bars/.cd, 
    y fixed,
    y dir=both, 
    y explicit
] table [x expr=\thisrow{x}, y=y, col sep=comma] {data/main/fashionMNIST/xie_filterl2.txt};
\addlegendentry{filterL2}

\addplot+[
  green, mark=none, line width=2pt, %mark options={scale=0.75},
  smooth, 
  error bars/.cd, 
    y fixed,
    y dir=both, 
    y explicit
] table [x expr=\thisrow{x}, y=y, col sep=comma] {data/main/fashionMNIST/xie_ex_noregret.txt};
\addlegendentry{Explicit\_No\_Regret}

\addplot+[
  black, mark=none, line width=1.6pt, %mark options={scale=0.75},
  smooth, 
  error bars/.cd, 
    y fixed,
    y dir=both, 
    y explicit
] table [x expr=\thisrow{x}, y=y, col sep=comma] {data/main/fashionMNIST/xie_mom_filterl2.txt};

\addplot+[
  gray, mark=none, line width=1.6pt, %mark options={scale=0.75},
  smooth, 
  error bars/.cd, 
    y fixed,
    y dir=both, 
    y explicit
] table [x expr=\thisrow{x}, y=y, col sep=comma] {data/main/fashionMNIST/xie_mom_ex_noregret.txt};

\addplot+[
  blue, dashed,  mark=none, line width=1.6pt, %mark options={scale=0.75},
  smooth, 
  error bars/.cd, 
    y fixed,
    y dir=both, 
    y explicit
] table [x expr=\thisrow{x}, y=y, col sep=comma] {data/main/fashionMNIST/xie_krum.txt};
\addlegendentry{Krum}

\addplot+[
  brown, dashed, mark=none, line width=1.6pt, %mark options={scale=0.75},
  smooth, 
  error bars/.cd, 
    y fixed,
    y dir=both, 
    y explicit
] table [x expr=\thisrow{x}, y=y, col sep=comma] {data/main/fashionMNIST/xie_trimmedmean.txt};
\addlegendentry{Trimmed Mean}

\addplot+[
  darkgreen, dashed, mark=none, line width=1.6pt, %mark options={scale=0.75},
  smooth, 
  error bars/.cd, 
    y fixed,
    y dir=both, 
    y explicit
] table [x expr=\thisrow{x}, y=y, col sep=comma] {data/main/fashionMNIST/xie_bulyankrum.txt};
\addlegendentry{Bulyan\_Krum}

\addplot+[
  orange, dashed, mark=none, line width=1.6pt, %mark options={scale=0.75},
  smooth, 
  error bars/.cd, 
    y fixed,
    y dir=both, 
    y explicit
] table [x expr=\thisrow{x}, y=y, col sep=comma] {data/main/fashionMNIST/xie_bulyantrimmedmean.txt};
\addlegendentry{Bulyan\_Trimmed\_Mean}

\addplot+[
  magenta, dashed, mark=none, line width=1.6pt, %mark options={scale=0.75},
  smooth, 
  error bars/.cd, 
    y fixed,
    y dir=both, 
    y explicit
] table [x expr=\thisrow{x}, y=y, col sep=comma] {data/main/fashionMNIST/xie_median.txt};
\addlegendentry{Median}

\addplot+[
  violet, dashed, mark=none, line width=1.6pt, %mark options={scale=0.75},
  smooth, 
  error bars/.cd, 
    y fixed,
    y dir=both, 
    y explicit
] table [x expr=\thisrow{x}, y=y, col sep=comma] {data/main/fashionMNIST/xie_bulyanmedian.txt};
\addlegendentry{Bulyan\_Median}

\addplot+[
  lime, dashed, mark=none, line width=1.6pt, %mark options={scale=0.75},
  smooth, 
  error bars/.cd, 
    y fixed,
    y dir=both, 
    y explicit
] table [x expr=\thisrow{x}, y=y, col sep=comma] {data/main/fashionMNIST/xie_history.txt};
\addlegendentry{\cite{karimireddy2021learning}}

\addplot+[
  teal, dashed, mark=none, line width=1.6pt, %mark options={scale=0.75},
  smooth, 
  error bars/.cd, 
    y fixed,
    y dir=both, 
    y explicit
] table [x expr=\thisrow{x}, y=y, col sep=comma] {data/main/fashionMNIST/xie_bucketing.txt};
\addlegendentry{\cite{karimireddy2020byzantine}}

\addplot+[
  olive, dashed, mark=none, line width=1.6pt, %mark options={scale=0.75},
  smooth, 
  error bars/.cd, 
    y fixed,
    y dir=both, 
    y explicit
] table [x expr=\thisrow{x}, y=y, col sep=comma] {data/main/fashionMNIST/xie_clustering.txt};

\legend{};

% \spy[blue!70!black] on (5.25,5.2) in node [fill=white] at (3.75,1.5);

\end{axis}
\end{tikzpicture}}
         \caption{F-MNIST under IMA.}
         \label{fig:fashionmnist_xie}
     \end{subfigure}
    
    \begin{subfigure}[t]{0.9\textwidth}
        \centering
        \resizebox{\textwidth}{!}{\newenvironment{customlegend}[1][]{%
    \begingroup
    \csname pgfplots@init@cleared@structures\endcsname
    \pgfplotsset{#1}%
}{%
    \csname pgfplots@createlegend\endcsname
    \endgroup
}%
\def\addlegendimage{\csname pgfplots@addlegendimage\endcsname}

\begin{tikzpicture}
\begin{customlegend}[
legend columns=3,
legend style={align=left,draw=none,column sep=8ex, font=\footnotesize},
legend entries={
    \textsc{Filtering}, 
    \textsc{No-Regret}, 
    \textsc{Bucketing-Filtering}, 
    \textsc{Bucketing-No-Regret}, 
    \textsc{Average}, 
    \textsc{Krum}, 
    \textsc{Trimmed Mean}, 
    \textsc{Median}, 
    \textsc{Bulyan Krum}, 
    \textsc{Bulyan Trimmed Mean}, 
    \textsc{Bulyan Median}, 
    \textsc{\citet{karimireddy2021learning}}, 
    \textsc{\citet{karimireddy2020byzantine}}, 
    \textsc{\citet{velicheti2021secure}},}]
        \addlegendimage{mark=none,cyan,line width=1pt}
        \addlegendimage{mark=none,green,line width=1pt}
        \addlegendimage{mark=none,black,line width=1pt}
        \addlegendimage{mark=none,gray,line width=1pt}
        \addlegendimage{mark=none,dashed,red,line width=1pt}
        \addlegendimage{mark=none,dashed,blue,line width=1pt}
        \addlegendimage{mark=none,dashed,brown,line width=1pt}
        \addlegendimage{mark=none,dashed,magenta,line width=1pt}
        \addlegendimage{mark=none,dashed,darkgreen,line width=1pt}
        \addlegendimage{mark=none,dashed,orange,line width=1pt}
        \addlegendimage{mark=none,dashed,violet,line width=1pt}
        \addlegendimage{mark=none,dashed,lime,line width=1pt}
        \addlegendimage{mark=none,dashed,teal,line width=1pt}
        \addlegendimage{mark=none,dashed,olive,line width=1pt}

\end{customlegend}
\end{tikzpicture}}
    \end{subfigure}
    
    \caption{Robust estimators' performance under attacks.}
    \label{fig:main_evaluation}
\end{figure*}

\subsection{Experiment Setup}
\label{subsec:eval-setup}

\noindent\textbf{Datasets, Models \& Runtime.} We evaluated the robust aggregators on two datasets, MNIST under CC BY-SA 3.0 License~\cite{lecun2010mnist} and F-MNIST under MIT License~\cite{xiao2017fashion}. The training and testing is by default as implemented in PyTorch.
% 
% We emphasize that for each experiment, we use the same random seed to split or distribute the dataset for fair comparison.
% 
% \noindent\textbf{Models.} 
% 
The models used in most experiments are a convolutional network with two convolutional layers followed by two fully connected layers with ReLU.
In each round, each client runs 5 local epochs with batch size 10 before submitting the updated local model to the server.
% The only exceptions are the models used in experiments with the DBA attack.
% %
% As DBA is harder be succeed in simple models, we follow~\citet{xie2019dba} to use ResNet18.
% % 
% For DBA, we use the default parameters used by~\citet{xie2019dba}.
% 
All the experiments were conducted on Ubuntu18.04 LTS servers with 56 2.6GHz Intel Xeon CPUs, 252G RAM and 8
Geforce GTX 1080 Ti.
The code for evaluation is provided in \url{https://github.com/wanglun1996/secure-robust-federated-learning}.

\noindent\textbf{Attacks.} We chose the following attacks to evaluate the robust estimators: Krum Attack (KA)~\cite{fang2020local}, Trimmed Mean Attack (TMA)~\cite{fang2020local}, Model Poisoning Attack (MPA)~\cite{bhagoji2019analyzing}, Model Replacement Attack (MRA)~\cite{bagdasaryan2020backdoor}, Distributed Backdoor Attack (DBA)~\cite{xie2019dba} and Inner-Production Manipulation Attack (IMA)~\cite{xie2020fall}.
We report model accuracy for all attacks and also report attack success rate for backdoor attacks (\emph{i.e.} MPA, MRA and DBA.)
% Please refer to Appendix~\ref{sec:append-eval-attack} for the details.

\noindent\textbf{Robust Aggregators.} We also chose nine other Byzantine-robust FL protocols as baselines: (1) Krum~\cite{blanchard2017machine}; (2) Trimmed Mean~\cite{yin2018byzantine}; (3) Median~\cite{yin2018byzantine}; (4) Bulyan Krum~\cite{mhamdi2018hidden}; (5) Bulyan Trimmed Mean~\cite{mhamdi2018hidden}; (6) Bulyan Median~\cite{mhamdi2018hidden}; (7) protocol in~\citet{karimireddy2021learning}; (8) protocol in~\citet{karimireddy2020byzantine}; (9)  protocol in~\citet{velicheti2021secure}.
As Sever~\cite{diakonikolas2019sever} requires much more rounds than other aggregators, we defer the evaluation of Sever to Appendix~\ref{sec:append-sever}.
For \textsc{Filtering}, \textsc{No-regret}, and the bucketing variants, it is computationally expensive to calculate the operator norm of the covariance matrix of the whole model.
Hence, we cut each layer of the model into several intervals and run the above robust aggregators on each interval.
For bucketing variants, we set the bucket size to 2.
% Note that Bulyan acts as a wrapper for other robust estimators. Therefore, in the evaluation, we have two versions of Bulyan: Bulyan Krum and Bulyan Trimmed Mean.
% We run all the protocols on the two datasets and present the protocols' performance under different attacks. Performance is measured differently according to different attack targets. For KA and TMA, we use model accuracy to quantify attack performance. Increased model accuracy indicates increased robustness. For MPA, MRA, and DBA, we assess the percentage of the remembered backdoors to demonstrate the attack performance. The fewer backdoors remembered, the more robust the estimator is. Please refer to Appendix~\ref{sec:eval-setup-append} for more details.

% \noindent\textbf{Runtime.} All the experiments were conducted on Ubuntu18.04 LTS servers with 56 2.6GHz Intel Xeon CPUs, 252G RAM and 8
% Geforce GTX 1080 Ti.

\subsection{Evaluation Results}

\noindent\textbf{\textsc{Filtering}, \textsc{No-regret} and \textsc{Bucketing} Variants.} In this part, the data is randomly distributed to the $100$ federated clients, out of which $20$ are malicious.
For all attacks except DBA, we set the learning rate to $10^{-3}$ and the bounded variance as $10^{-5}$.
For DBA, we set the learning rate to $0.1$ and the bounded variance to $10^{-4}$ because DBA is launched on a pre-trained model.
For the proposed algorithms, the parameters are cut into intervals of size 1000.

\tikzset{font={\fontsize{15pt}{12}\selectfont}}
% \captionsetup{font={footnotesize},skip=2pt}
\captionsetup[sub]{skip=2pt}
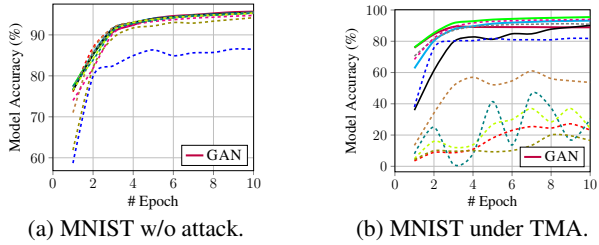
\begin{figure}[!h]
     \centering
    %  \hfill
     \begin{subfigure}[t]{0.21\textwidth}
         \centering
         \resizebox{\textwidth}{!}{\begin{tikzpicture}
\begin{axis}[
  % set layers,
  grid=major,
  xmin=0, xmax=10,
  % xmode=log, log basis x={2},
  ytick align=outside, ytick pos=left,
  xtick align=outside, xtick pos=left,
  xlabel=\# Epoch,
  ylabel={Model Accuracy (\%)},
  enlarge y limits=0.05,
  legend pos=south east,
  ]

\addplot+[
  purple, mark=none, line width=2pt, %mark options={scale=0.75},
  smooth, 
  error bars/.cd, 
    y fixed,
    y dir=both, 
    y explicit
] table [x expr=\thisrow{x}, y=y, col sep=comma] {data/gan/noattack_gan.txt};
\addlegendentry{\textsc{GAN}}

\addplot+[
  red, dashed, mark=none, line width=1.6pt, % mark options={scale=0.75},
  smooth, 
  error bars/.cd, 
    y fixed,
    y dir=both, 
    y explicit
] table [x expr=\thisrow{x}, y=y, col sep=comma] {data/gan/noattack_average.txt};
% \addlegendentry{Average}

\addplot+[
  cyan, mark=none, line width=2pt, %mark options={scale=0.75},
  smooth, 
  error bars/.cd, 
    y fixed,
    y dir=both, 
    y explicit
] table [x expr=\thisrow{x}, y=y, col sep=comma] {data/gan/noattack_filterl2.txt};
% \addlegendentry{filterL2}

\addplot+[
  green, mark=none, line width=2pt, %mark options={scale=0.75},
  smooth, 
  error bars/.cd, 
    y fixed,
    y dir=both, 
    y explicit
] table [x expr=\thisrow{x}, y=y, col sep=comma] {data/gan/noattack_ex_noregret.txt};
% \addlegendentry{Explicit\_No\_Regret}

\addplot+[
  black, mark=none, line width=1.6pt, %mark options={scale=0.75},
  smooth, 
  error bars/.cd, 
    y fixed,
    y dir=both, 
    y explicit
] table [x expr=\thisrow{x}, y=y, col sep=comma] {data/gan/noattack_mom_filterl2.txt};

\addplot+[
  gray, mark=none, line width=1.6pt, %mark options={scale=0.75},
  smooth, 
  error bars/.cd, 
    y fixed,
    y dir=both, 
    y explicit
] table [x expr=\thisrow{x}, y=y, col sep=comma] {data/gan/noattack_mom_ex_noregret.txt};

\addplot+[
  blue, dashed,  mark=none, line width=1.6pt, %mark options={scale=0.75},
  smooth, 
  error bars/.cd, 
    y fixed,
    y dir=both, 
    y explicit
] table [x expr=\thisrow{x}, y=y, col sep=comma] {data/gan/noattack_krum.txt};
% \addlegendentry{Krum}

\addplot+[
  brown, dashed, mark=none, line width=1.6pt, %mark options={scale=0.75},
  smooth, 
  error bars/.cd, 
    y fixed,
    y dir=both, 
    y explicit
] table [x expr=\thisrow{x}, y=y, col sep=comma] {data/gan/noattack_trimmedmean.txt};
% \addlegendentry{Trimmed Mean}

\addplot+[
  magenta, dashed, mark=none, line width=1.6pt, %mark options={scale=0.75},
  smooth, 
  error bars/.cd, 
    y fixed,
    y dir=both, 
    y explicit
] table [x expr=\thisrow{x}, y=y, col sep=comma] {data/gan/noattack_median.txt};
% \addlegendentry{Median}

\addplot+[
  lime, dashed, mark=none, line width=1.6pt, %mark options={scale=0.75},
  smooth, 
  error bars/.cd, 
    y fixed,
    y dir=both, 
    y explicit
] table [x expr=\thisrow{x}, y=y, col sep=comma] {data/gan/noattack_history.txt};
% \addlegendentry{\cite{karimireddy2021learning}}

\addplot+[
  teal, dashed, mark=none, line width=1.6pt, %mark options={scale=0.75},
  smooth, 
  error bars/.cd, 
    y fixed,
    y dir=both, 
    y explicit
] table [x expr=\thisrow{x}, y=y, col sep=comma] {data/gan/noattack_bucketing.txt};
% \addlegendentry{\cite{karimireddy2020byzantine}}

\addplot+[
  olive, dashed, mark=none, line width=1.6pt, %mark options={scale=0.75},
  smooth, 
  error bars/.cd, 
    y fixed,
    y dir=both, 
    y explicit
] table [x expr=\thisrow{x}, y=y, col sep=comma] {data/gan/noattack_clustering.txt};

% \legend{};

\end{axis}
\end{tikzpicture}}
         \caption{MNIST w/o attack.}
         \label{fig:gan_noattack}
     \end{subfigure}
     \hfill
     \begin{subfigure}[t]{0.21\textwidth}
         \centering
         \resizebox{\textwidth}{!}{\begin{tikzpicture}
\begin{axis}[
  set layers,
  grid=major,
  xmin=0, xmax=10,
  % xmode=log, log basis x={2},
  ytick align=outside, ytick pos=left,
  xtick align=outside, xtick pos=left,
  xlabel=\# Epoch,
  ylabel={Model Accuracy (\%)},
  legend pos=south east,
  enlarge y limits=0.05
  ]

\addplot+[
  purple, mark=none, line width=2pt, %mark options={scale=0.75},
  smooth, 
  error bars/.cd, 
    y fixed,
    y dir=both, 
    y explicit
] table [x expr=\thisrow{x}, y=y, col sep=comma] {data/gan/trimmedmean_gan.txt};
\addlegendentry{\textsc{GAN}}

\addplot+[
  red, dashed, mark=none, line width=1.6pt, %mark options={scale=0.75},
  smooth, 
  error bars/.cd, 
    y fixed,
    y dir=both, 
    y explicit
] table [x expr=\thisrow{x}, y=y, col sep=comma] {data/gan/trimmedmean_average.txt};
% \addlegendentry{Average}

\addplot+[
  cyan, mark=none, line width=2pt, %mark options={scale=0.75},
  smooth, 
  error bars/.cd, 
    y fixed,
    y dir=both, 
    y explicit
] table [x expr=\thisrow{x}, y=y, col sep=comma] {data/gan/trimmedmean_filterl2.txt};
% \addlegendentry{filterL2}

\addplot+[
  green, mark=none, line width=2pt, %mark options={scale=0.75},
  smooth, 
  error bars/.cd, 
    y fixed,
    y dir=both, 
    y explicit
] table [x expr=\thisrow{x}, y=y, col sep=comma] {data/gan/trimmedmean_ex_noregret.txt};
% \addlegendentry{Explicit\_No\_Regret}

\addplot+[
  black, mark=none, line width=1.6pt, %mark options={scale=0.75},
  smooth, 
  error bars/.cd, 
    y fixed,
    y dir=both, 
    y explicit
] table [x expr=\thisrow{x}, y=y, col sep=comma] {data/gan/trimmedmean_mom_filterl2.txt};

\addplot+[
  gray, mark=none, line width=1.6pt, %mark options={scale=0.75},
  smooth, 
  error bars/.cd, 
    y fixed,
    y dir=both, 
    y explicit
] table [x expr=\thisrow{x}, y=y, col sep=comma] {data/gan/trimmedmean_mom_ex_noregret.txt};

\addplot+[
  blue, dashed, mark=none, line width=1.6pt, %mark options={scale=0.75},
  smooth, 
  error bars/.cd, 
    y fixed,
    y dir=both, 
    y explicit
] table [x expr=\thisrow{x}, y=y, col sep=comma] {data/gan/trimmedmean_krum.txt};
% \addlegendentry{Krum}

\addplot+[
  brown, dashed, mark=none, line width=1.6pt, %mark options={scale=0.75},
  smooth, 
  error bars/.cd, 
    y fixed,
    y dir=both, 
    y explicit
] table [x expr=\thisrow{x}, y=y, col sep=comma] {data/gan/trimmedmean_trimmedmean.txt};
% \addlegendentry{Trimmed Mean}

\addplot+[
  magenta, dashed, mark=none, line width=1.6pt, %mark options={scale=0.75},
  smooth, 
  error bars/.cd, 
    y fixed,
    y dir=both, 
    y explicit
] table [x expr=\thisrow{x}, y=y, col sep=comma] {data/gan/trimmedmean_median.txt};
% \addlegendentry{Median}

\addplot+[
  lime, dashed, mark=none, line width=1.6pt, %mark options={scale=0.75},
  smooth, 
  error bars/.cd, 
    y fixed,
    y dir=both, 
    y explicit
] table [x expr=\thisrow{x}, y=y, col sep=comma] {data/gan/trimmedmean_history.txt};
% \addlegendentry{\cite{karimireddy2021learning}}

\addplot+[
  teal, dashed, mark=none, line width=1.6pt, %mark options={scale=0.75},
  smooth, 
  error bars/.cd, 
    y fixed,
    y dir=both, 
    y explicit
] table [x expr=\thisrow{x}, y=y, col sep=comma] {data/gan/trimmedmean_bucketing.txt};
% \addlegendentry{\cite{karimireddy2020byzantine}}

\addplot+[
  olive, dashed, mark=none, line width=1.6pt, %mark options={scale=0.75},
  smooth, 
  error bars/.cd, 
    y fixed,
    y dir=both, 
    y explicit
] table [x expr=\thisrow{x}, y=y, col sep=comma] {data/gan/trimmedmean_clustering.txt};

% \legend{};

\end{axis}
\end{tikzpicture}}
         \caption{MNIST under TMA.}
         \label{fig:gan_krum}
     \end{subfigure}
    % \hfill
    % \begin{subfigure}[t]{0.2\textwidth}
    %     \centering
    %     \resizebox{\textwidth}{!}{\input{figs/gan/legend}}
    % \end{subfigure}
    \caption{\textsc{GAN}'s performance under attack. Please refer to \F~\ref{fig:main_evaluation} for the complete legend.}
    \vspace{-15pt}
    \label{fig:gan}
\end{figure}

As shown in \F~\ref{fig:main_evaluation},
%reports promising results that the proposed algorithms achieve consistently good performance under a variety of attacks.
%
we observe that \textsc{Filtering} and \textsc{No-regret} achieve consistently good performance under all the chosen attacks, while all the other aggregators fail on at least one attack.
For instance, Krum and Bulyan Krum's accuracy drops on KA and typically cannot achieve as good accuracy as other estimators; Trimmed Mean fails on TMA and MPA; Median, Bulyan Trimmed Mean and Bulyan Median fail on MPA.
We also observe that the Bucketing variants do not perform as well as \textsc{Filtering} and \textsc{No-regret}.
We conjecture this is due to their low breaking point.

% For MNIST, under KA and TMA, \agg separately achieves 96.85\% and 96.15\%
% accuracy within 30 epochs, respectively, comparable to the non-malicious setting's 
% 97.48\% accuracy.
% %
% Under MPA, MRA, and DBA, \agg separately reduces the attack success rate to
% 0.00\%, 17.53\%, and 1.62\%.
% %
% For Fashion-MNIST, under KA and TMA, \agg separately achieves 84.87\% and
% 84.75\% accuracy, slightly better than the non-malicious setting with an accuracy of
% 84.66\%.
% %
% Under MPA, MRA, and DBA, \agg successfully suppresses the attack success rates to 20.00\%, 14.18\%, and 7.38\%.
% %
% Specifically, \agg is the only aggregator that consistently performs well under all five attacks.

\noindent\textbf{\textsc{GAN}.}
% \lun{Qi, please check the correctness of the following statements.}
In this part, we evaluate the performance of the \textsc{GAN}.
Because \textsc{GAN} needs a large amount of training data, we increase the number of clients to $1000$, out of which $200$ are malicious.
The learning rate is $0.1$.
For the proposed algorithms, we set the bounded variance to $10^{-3}$ and the 
parameters are cut into intervals of size 100.
We also do not evaluate the Bulyan aggregators because of their impractically long running time (\emph{e.g.} $\sim$ 200 hours per round for Bulyan Krum).

As shown in \F~\ref{fig:gan}, \textsc{GAN} does not achieve as good performance as \textsc{Filtering} and \textsc{No-regret}.
During the experiments, we observe that although \textsc{GAN} is usually faster in running time than \textsc{Filtering} and \textsc{No-regret}, it can be sensitive to hyper-parameters. This makes it hard to deploy in real-world applications compared to other proposed methods.
Consistent with \F~\ref{fig:main_evaluation}, \textsc{Krum}, \citet{karimireddy2020byzantine} and \citet{karimireddy2021learning} cannot achieve as good performance as other aggregators when there is no attack, and this phenomenon becomes more significant when the client number is larger.
\vspace{-5pt}
\section{Conclusion}
\vspace{-5pt}

In this paper, we propose Byzantine-robust FL protocols with optimal statistical rate and privacy guarantees.
Furthermore, we show on benchmark data that the proposed protocols achieve consistently good performance under different attacks.
% 
% Both the theoretical analysis and the empirical evaluation validate the superiority of the proposed protocols.
%
This work leaves two main open questions: Can one design simple and practically implementable Byzantine-robust FL protocol with optimal statistical rate and  linear computational complexity? 
Besides these two properties, can a Byzantine-robust FL protocol also achieve high breakdown point simultaneously? 

\section{Acknowledgements}
Banghua Zhu and Jiantao Jiao were partially supported by NSF Grants IIS-1901252 and CCF-1909499, Michael I. Jordan was partially supported by NSF Grants IIS-1901252.
Qi Pang and Shuai Wang were supported in part by RGC RMGS under the contract RMGS20CR18 and RMGS20CR19.
Lun Wang and Dawn Song were partially supported by DARPA contract \#N66001-15-C-4066, the Center for LongTerm Cybersecurity, and Berkeley Deep Drive. 
Any opinions, findings, conclusions, or recommendations expressed in this material are those of the authors, and do not necessarily reflect the views of the sponsors.

\bibliographystyle{plainnat}
\bibliography{di}

\newpage
\appendix
\onecolumn
\aistatstitle{Supplementary Material for Byzantine-Robust Federated Learning with Optimal Statistical Rates}
\section{Related Work}
\label{sec:related_work}

We now review the client-side privacy vulnerabilities and Byzantine-malicious attacks in FL.
Existing defenses and their limitations are also discussed to motivate the present work.
Other attacks and defenses in FL are covered in this comprehensive surveys~\cite{kairouz2019advances}.

% \vspace{-0.3em}

% To protect the privacy of clients, \cite{bonawitz2017practical} propose secure
% aggregation, which provides security guarantee against a semi-honest server.
% %
% Also, by utilizing secure multi-party computation (MPC),
% \cite{mohassel2017secureml} present a framework for training a global model on
% the clients' encrypted data between two non-colluding servers.
% % 
% However, these existing defense mechanisms are based on the assumption of benign
% clients, which is not often the case in practice.

% \vspace{-0.3em}
\smallskip
\noindent \textbf{Byzantine Malicious Clients.}~Byzantine-robust aggregation has
drawn enormous attention over the past few years due to the emergence of various
distributed attacks in FL.
\citet{fang2020local} formalize the attack as an optimization problem and
successfully migrate the data poisoning attack to FL.
The proposed attacks even work under Byzantine-robust FL.
\citet{sun2021data} manage to launch data poisoning attacks on the multi-task
FL framework. 
\citet{bhagoji2019analyzing} and~\citet{bagdasaryan2020backdoor} manage to
insert backdoors into the model via local model poisoning or local model
replacement.
\citet{xie2019dba} propose to segment one backdoor into several parts and
insert it into the global model.
\citet{chen2020backdoor} and~\citet{zizhanzhenglearning} migrate backdoor
attacks to federated meta-learning and federated reinforcement learning,
respectively.
Meanwhile, \citet{sun2019can} show that norm clipping and ``weak'' differential
privacy mitigate backdoor attacks in FL without impairing overall performance.
\citet{wang2020attack} refute this claim and illustrate that
robustness to backdoors requires model robustness to adversarial examples, an
open problem widely regarded to be difficult.

% \vspace{-0.3em}
\smallskip
\noindent \textbf{Byzantine-Robust Protocols.}~A variety of Byzantine-robust FL
protocols are proposed to defend against malicious clients.
Krum~\cite{blanchard2017machine} selects and averages the subset of updates
that have a sufficient number of close neighbors.
\citet{yin2018byzantine} uses robust estimators like trimmed mean or median to estimate the gradient, and claims to  achieve order-optimal statistical error rates.
\citet{fung2018mitigating, alistarh2018byzantine} proposes a similar robust estimator relying on a
robust secure aggregation oracle based on the geometric median.
\citet{yin2019defending} proposes to use robust mean estimators to defend against
saddle point attacks.
\citet{fung2020limitations} studies Sybil attacks in FL and propose a defense based on the diversity
of client updates. 
\citet{ozdayipreventing} designs a defense against
backdoor attacks in FL by adjusting the server-side learning rate.
\citet{mhamdi2018hidden} points out that Krum, trimmed mean, and median all suffer
from $\mathcal{O}(\sqrt{d})$ ($d$ is the model size) estimation error and
propose a general framework Bulyan for reducing the error to $\mathcal{O}(1)$.

\smallskip
\noindent \textbf{Client Privacy Leakage and Mitigation.}~Inference attacks
against centralized learning~\cite{shokri2017membership}
aim to infer the private information of the model training data.
\cite{wang2019beyond} explore the feasibility of recovering user privacy from a
malicious server in FL.
\cite{nasr2019comprehensive} show that a malicious server can perform highly
accurate membership inference attacks against clients.

\noindent\textbf{Orchestrating Robustness and Privacy.} Several recent works~\cite{so2020byzantine,he2020secure} attempt to address the challenge through the use of secure multi-party computation (MPC).
\cite{so2020byzantine} propose to run multi-party distance-based filtering
among clients to remove potentially malicious updates.
This, however, needs clients to be online consistently, which is impractical for cross-device FL and also incurs significant communication cost.
\cite{he2020secure} propose a two-server protocol to provide bi-directional
protection, but in real-world use scenarios, two non-colluding servers are
uncommon.
Due to the lack of universal and effective two-way protection, user trust in FL systems is significantly eroded, preventing them from being employed in a wide variety of security-related applications such as home monitoring and autonomous driving.

\subsection{Comparison with existing Byzantine-Robust Protocols}
\smallskip
\noindent \textbf{Comparison with~\citet{yin2018byzantine}.}
\citet{yin2018byzantine} first analyzes the statistical rate for Byzantine-robust distributed learning when using coordinate-wise median or trimmed-mean as robust gradient estimators. They show that under the assumption of bounded variance $\mathbb{E}_{p^\star}[\|\vecg - \mathbb{E}[\vecg]\|_2^2]\leq \tilde \sigma^2$ and coordinate-wise bounded skewness (or coordinate-wise sub-exponential) on the gradient distribution, one can achieve a statistical error of $\bigo(\tilde \sigma (\epsilon/\sqrt{n} + \sqrt{d/mn}))$. Notably, under our assumption of bounded spectral norm of covariance in Assumption~\ref{asm:bounded-variance}, one has $\tilde \sigma^2 = \sigma^2 d$ in the worst case, and their rate becomes $\bigo(\sigma (\epsilon\sqrt{d} + \sqrt{d^2/mn}))$, which is $\sqrt{d}$ worse than the lower bound. Although the rate achieved is better than ours in terms of the dependence on $\epsilon$, this is from the strong assumption of coordinate-wise bounded skewness (or coordinate-wise sub-exponential). With only bounded covariance assumption as in our paper, coordinate-wise median or coordinate-wise trimmed mean must suffer a rate of $\sigma\sqrt{\epsilon d}$ (see e.g.~\cite{steinhardt2019lecture}), and thus is  far from the optimal  rate $\sigma\sqrt{\epsilon}$ achieved by the robust estimators proposed in this paper.

\smallskip
\noindent \textbf{Comparison with~\citet{karimireddy2020byzantine} and~\citet{karimireddy2021learning}}

Similar to \citet{yin2018byzantine}, the analysis of \citet{karimireddy2020byzantine} and~\citet{karimireddy2021learning} are based on the assumption of bounded trace of the covariance, leading to an extra $\sqrt{d}$ factor under our spectral norm bound assumption. \citet{karimireddy2020byzantine} has shown that the dimension dependence  of coordinate-wise median combined with bucketing is worse than that of Krum and geometric median. However, as we have discussed in the introduction, coordinate-wise median, coordinate-wise trimmed mean and geometric median all have an error of at least $\Omega(\sigma\sqrt{\epsilon d})$ under our setting. This motivates the application of modern high dimensional robust estimators, as we proposed in this paper. 

On the other hand, the robust aggregation algorithm with momentum is shown to achieve a rate $\mathcal{O}(\sqrt{\frac{d\sigma^2}{T}\cdot 
(\frac{1}{m}+\epsilon)}$, which seems to contradict our lower bound in Appendix~\ref{sec:lower_bound} since the lower bound suggests that the rate does not vanish as $T$ goes to infinity. The reason for this inconsistency is that ~\citet{karimireddy2021learning} assumes that the gradient in each round is mutually independent and the malicious clients are fixed. However, this assumption does not  hold in our setting when each client has $n$ fixed samples
and computes gradients out of (a mini-batch of) the samples. Intuitively, the assumption of independent gradient holds when each client gets fresh sample in a new iteration, thus can be approximately viewed as our parameter $n$. Furthermore, the assumption of independent gradient avoids the union bound argument in our analysis and~\citet{yin2018byzantine}, thus an extra $\sqrt{d}$ factor is avoided in the rate of~\citet{karimireddy2021learning}.

Furthermore, ~\citet{karimireddy2020byzantine} also proposes a similar idea of bucketing as pre-processing from a different motivation. In our analysis, bucketing is necessary to remove the logarithmic factor and achieve exactly tight rate in homogeneous setting. In~\citet{karimireddy2020byzantine}, it is mainly motivated by reducing the variance in heterogeneous data. 

\smallskip
\noindent \textbf{Comparison with ByzantineSGD~\cite{alistarh2018byzantine} and SafeguardSGD~\cite{allen2020byzantine}}
% \vspace{-0.3em}

ByzantineSGD~\cite{alistarh2018byzantine} and SafeguardSGD~\cite{allen2020byzantine} analyze the rate of convergence for non-strongly convex and non-convex objectives via maintaining a set of good machines and detecting malicious machines in each round. However, their results are based on a stronger assumption that the norm of the difference between the gradient and its mean $\|\vecg - \mathbb{E}[\vecg]\|$ is always bounded. 
Thus the result is not comparable to our setting, which only assumes the second central moment is bounded.

\smallskip
\noindent \textbf{Comparison with Draco~\cite{chen2018draco}, BULYAN~\cite{mhamdi2018hidden} and Krum~\cite{blanchard2017machine}}

Draco~\cite{chen2018draco} and BULYAN~\cite{mhamdi2018hidden} state to
provide dimension-free estimation error for Byzantine-Robust Federated Learning.
However, Draco is incompatible with
FL as it requires redundant updates from each worker.
Bulyan
are based on much stronger assumptions than other contemporary works.
When the assumptions are
relaxed to the common case, Bulyan estimation errors still
scale up with the square root of the model size.
In particular, Bulyan assumes that the expectation of the distance between two benign updates is bounded by a constant $\sigma_1$, while Krum assumes that the distance is bounded by $\sigma_2\sqrt{d}$. We can easily see that if $\sigma_1=\sigma_2\sqrt{d}$, Bulyan falls back to the same order of estimation error as Krum. 
As a result,  the protocols fail to achieve near-optimal statistical rate under the only assumption of bounded operator norm of covariance  that can be used to mitigate
Byzantine adversaries. %
The proposed protocols overcomes limitations of existing Byzantine-robust FL protocols by
employing and calibrating well-established robust mean estimators~\cite{diakonikolas2016robust,diakonikolas2017being, diakonikolas2019nearly, diakonikolas2020outlier, steinhardt2017certified,  steinhardt2018resilience,  zhu2019deconstructing, zhu2019generalized,  zhu2022robust} 
 in FL scenarios.

\smallskip
\noindent \textbf{Comparison with~\citet{velicheti2021secure}}

The protocol in~\citet{velicheti2021secure} studies the combination of secure aggregation and Byzantine-robust estimators, similar to our bucketing idea. However, for the theoretical analysis, the authors make the assumption of bounded dissimilarity of the true gradient and bounded trace norm of the covariance of the gradients, which is the same as that in~\cite{yin2018byzantine} and thus stronger than our spectral norm bound. The paper also assumes a performance bound on the performance of aggregation algorithm, which does not have dependence on the sample size. Thus there is no sample complexity analyzed, but only the convergence rate.

% In this paper, we leverage a robust mean estimator called \agg~\cite{steinhardt2018robust} to get rid of the dimension-dependent factor in the error rate.
% %
% Actually high-dimensional robust statistics estimation is a well-exploited area in statistics.
% %
% Diakonikolas et al. \cite{diakonikolas2019robust} first study the problem high-dimensional distribution learning in an agnostic setting where an adversary is allowed to arbitrarily corrupt an $\epsilon$-fraction of the samples and obtain the computationally efficient algorithms with dimension-independent
% error guarantees for several fundamental classes of high-dimensional distributions. 
% %
% Charikar et al.~\cite{charikar2017learning} propose list-decodable learning and semi-verified learning to deal with the presence of
% large amounts of biased or malicious data.
% %
% To accelerate the robust estimation, Cheng et al.~\cite{cheng2019high} propose a robust mean estimator with nearly-linear time.
% %
% Dong et al.~\cite{dong2019quantum} independently propose another robust mean estimator with nearly-linear time based on quantum entropy regularization.

\section{Robust Estimation Subroutines}\label{app:robust_guarantee}

We begin with introducing necessary notations and settings in robust mean estimation. Assume that  we observe $m$ samples from true distribution $\vecx_i\sim p^\star, i\in[m]$. Denote the true empirical distribution as $\hat p_m^\star = \frac{1}{m} \sum_{i\in[m]}\delta_{\vecx_i}$. The adversary observes the samples and may add, delete, or modify at most $\epsilon m$ samples, resulting in a corrupted dataset $\mathcal{D}_m$. We let the resulting empirical distribution be $\hat p_m$, which satisfies that $\TV(\hat p_m^\star, \hat p_m)\leq \epsilon$. Assume that the true distribution satisfies $\|\Cov_{p^\star}(\vecx)\|_2 \leq \sigma^2$. 
Let $\mu_q = \mathbb{E}_q[\vecx]$ be the mean of distribution $q$. We define  the quasi-gradient for $\|\Cov_q(\vecx)\|$ with respect to distribution $q$ as $g(q; \vecx) = (\vecv^\top(\vecx-\mu_q))^2$, where $\vecv\in\argmax_{\|v\|\leq 1} \bE_q[(\vecv^\top(\vecx-\mu_q))^2]$ is any of the supremum-achieving direction. 

\subsection{Algorithm Descriptions}\label{subsec:alg_description}

Now we are ready to present the three algorithms we used in the paper, namely \textsc{no-regret algorithm}~\cite{hopkins2020robust, zhu2021robust}, \textsc{filtering algorithm}~\cite{ diakonikolas2017being, li2018principled, steinhardt2018robust,  zhu2021robust} and \textsc{GAN} based algorithm~\cite{gao2018robust,gao2020generative, zhu2022robust}.
\begin{algorithm}[!htbp]
\centering
\caption{\textsc{No-regret algorithm} ($ \mathcal{D}_m, \epsilon, \sigma^2, \xi, \eta$)}
        \begin{algorithmic}
\STATE Input: corrupted dataset $\mathcal{D}_m = \{\vecx_1, \vecx_2,\cdots, \vecx_m\}$, fraction of corrupted samples $\epsilon$, spectral norm bound of covariance $\sigma^2$, threshold for termination $\xi$, step size constant $\eta\in(0, 1)$.
\STATE Set $d_{i, j}=\|\vecx_i-\vecx_j\|_2$ for all $i, j\in[m]$, remove all $\vecx_i$ with $|\{j\in[m]:d_{i,j}>\Omega(\sqrt{d \log(m)})\}|>2\epsilon m$. Let $\vecx_1', \vecx_2', \cdots, \vecx_{m'}'$ be the remaining samples.
Initialize uniform distribution on the remaining samples $q_{i}^{(0)} = 1/m'$, $i \in [m']$.
\FOR{$k = 0, 1, \dots$}
\IF {$\|\Cov_{q^{(k)}}(\vecx)\|_2\leq   \xi$}%\Big(\frac{2\eta+7}{3(1 - (6+2\eta)\epsilon)}\Big)^2\cdot (1+\frac{d\log(d)}{m\epsilon})\cdot \sigma^2$}
    \STATE Return $\mathbb{E}_{q^{(k)}}[\vecx] = \sum_{i=1}^n q_i^{(k)}\vecx_i$
    \ELSE
    \STATE Compute $g_i^{(k)} = g(q^{(k)};\vecx_i)$,  $\tilde q_{i}^{(k+1)} = q_{i}^{(k)} \cdot (1- \frac{\eta\epsilon}{2\sigma^2d}\cdot g_i^{(k)} )$ for all $i\in[m]$
    \STATE Update $q^{(k+1)} = \argmin_{q\in\Delta_{m', \epsilon}} \KL(q || \tilde q^{(k+1)})$, where $\Delta_{m',\epsilon}=\{q\mid \sum q_i=1, q_i\leq \frac{1}{(1-\epsilon)m'}\}$. %\mathsf{Proj}^{KL}_{\Delta_{m', \epsilon}}(\tilde q^{(k+1)})  $. 
    \ENDIF
\ENDFOR
\end{algorithmic}
\label{algo:explicit_lowregret}
\end{algorithm}

\begin{algorithm}[!t]
\centering
\caption{\textsc{Filtering algorithm} ($\mathcal{D}_m, \xi$)}
        \begin{algorithmic}
\STATE Input: corrupted dataset $\mathcal{D}_m = \{\vecx_1, \vecx_2,\cdots, \vecx_m\}$, threshold for termination $\xi$.
\STATE Initialize $q_{i}^{(0)} = 1/m$, $i \in [m]$.
\FOR{$k = 0, 1, \dots$}
\IF {$\|\Cov_{q^{(k)}}(\vecx)\|_2\leq  \xi$}
    \STATE Return $\mathbb{E}_{q^{(k)}}[\vecx] = \sum_{i=1}^n q_i^{(k)}\vecx_i$
    \ELSE
    \STATE Compute $g_i^{(k)} = g(q^{(k)};\vecx_i)$,  $\tilde q_{i}^{(k+1)} = q_{i}^{(k)} \cdot (1- \frac{g_i^{(k)}}{\max_{j\in[m]}g_j^{(k)}} )$ for all $i\in[m]$.
    \STATE Update $q^{(k+1)} = \mathsf{Proj}^{KL}_{\Delta_{m}}(\tilde q^{(k+1)}) = \tilde q^{(k+1)}/ \sum_{i\in [m]} \tilde q^{(k+1)}_i $. 
    \STATE Remove samples with $q^{(k+1)}=0$.
    \ENDIF
\ENDFOR

\end{algorithmic}
\label{algo:filtering}
\end{algorithm}

\begin{algorithm}[!t]
\centering
\caption{\textsc{GAN} ($\mathcal{D}_m, \sigma^2$)}
        \begin{algorithmic}
\STATE Input: corrupted dataset $\mathcal{D}_m = \{\vecx_1, \vecx_2,\cdots, \vecx_m\}$, spectral norm bound of covariance $\sigma^2$.
\STATE Let $\hat p_m = \frac{1}{m}\sum_{i=1}^m \delta_{\vecx_i}$ be the empirical distribution of dataset $\mathcal{D}_m$.
\STATE Find $q = \argmin_{q\in\mathcal{G}(\sigma^2)} A(q, p_m)$, where  $\mathcal{G}(\sigma^2)=\{q\mid \|\Sigma_q\|_2\leq \sigma^2\}$, and 
\[   A(p, q) =&  \sup_{\|w\|_1\leq 1, \|v_j\|_2\leq 1, t_j\in\mathbb{R}} \Bigg|\mathbb{E}_{p}\left[ \mathsf{sigmoid}\left(\sum_{j\leq l} \vecw_j\mathsf{sigmoid}(\vecv_j^\top \vecx+t)\right)\right] \nonumber \\ 
    &-  \mathbb{E}_{q}\left[ \mathsf{sigmoid}\left(\sum_{j\leq l} \vecw_j\mathsf{sigmoid}(\vecv_j^\top \vecx+t)\right)\right]\Bigg|\]
\end{algorithmic}
\label{algo:GAN}
\end{algorithm}

% Besides the algorithms listed here, 
We remark here that in Algorithm~\ref{algo:explicit_lowregret}, one needs to first naively filter some samples so that the maximum distances between samples are bounded, and then proceed with the multiplicative weights update method where learning rate is inverse proportional to the square of  maximum distances between samples. This naive filtering procedure can be replaced with any heuristic-based methods, as long as the maximum distances between samples are controlled. The projection step in Algorithm~\ref{algo:explicit_lowregret} can be done within $O(m)$ time. For Algorithm~\ref{algo:GAN}, the distance $A(p, q)$ can be optimized by a Generative Adversarial Network. One can replace the discriminator with any deep neural network with sigmoid activation. In order to guarantee that $q\in\mathcal{G}=\{q \mid \|\Sigma_q\|_2\leq \sigma^2\}$, one may either add a regularization term in the discriminator, or constrain the norm of parameters in the generator. 

\subsection{Guarantees}
For all the three robust estimators provided in the paper (Algorithm~\ref{algo:explicit_lowregret},~\ref{algo:filtering} and~\ref{algo:GAN}), the resulting guarantee is that one can find an estimator with estimation error controlled by $O(\sqrt{\epsilon}+\sqrt{d/m})$ (or $O(\sqrt{\epsilon+\sqrt{d/m}})$ for Algorithm~\ref{algo:GAN}).

\begin{theorem}\label{thm:robust_guarantee}
Assume that $\|\Cov_{p^\star}(\vecx)\|\leq \sigma^2$ and that we are given a corrupted dataset $\mathcal{D}_m$ from where at most $\epsilon$ fraction of samples are corrupted by adversary, the rest of samples follow the distribution of $p^\star$. Let 
\begin{align}
    \mu_1 &= \textsc{No}\text{-}\textsc{regret}\left(\mathcal{D}_m, \epsilon,  \sigma^2,  \Big(\frac{2\eta+7}{3(1 - (6+2\eta)\epsilon)}\Big)^2\cdot (1+\frac{d\log(d/\delta)}{m\epsilon})\cdot \sigma^2, \eta\right)\\
    \mu_2 &= \textsc{Filtering}\left(\mathcal{D}_m,  \frac{2(1-\epsilon)}{(1-2\epsilon)^2} \cdot(1+\frac{d\log(d/\delta)}{m\epsilon})\cdot\sigma^2\right) \\
\mu_3 &= \textsc{GAN}\left(\mathcal{D}_m, \sigma^2\right) \\
    \mu_4 &= \textsc{No}\text{-}\textsc{regret}\left(\mathcal{D}_m, \sigma^2, \eta, \frac{C_1}{(1-C_2(\epsilon+\log(1/\delta)/n)^2)}\cdot\left(1+\frac{d\log(d)+\log(1/\delta)}{m\epsilon}\right)\cdot\sigma^2\right) \\
    \mu_5 &= \textsc{Filtering}\left(\mathcal{D}_m,  \frac{C_3}{(1-C_4(\epsilon+\log(1/\delta)/n)^2)} \cdot\left(1+\frac{d\log(d)+\log(1/\delta)}{m\epsilon}\right)\cdot\sigma^2\right)   
\end{align}
We know that each of the following holds with probability at least $1-\delta$:
\begin{align*}
    \|\mu_{1}-\mu_{p^\star}\|&\leq \bigo\left(\sigma\cdot \left(\frac{\sqrt    \epsilon}{1-(6+2\eta)\epsilon} + \sqrt{\frac{d\log(d/\delta)}{m}}\right)\right), \\
    \|\mu_{2}-\mu_{p^\star}\|&\leq \bigo\left(\sigma\cdot\left(\frac{\sqrt{\epsilon}}{1-2\epsilon} + \sqrt{\frac{d\log(d/\delta)}{m}}\right)\right), \\
    \|\mu_{3}-\mu_{p^\star}\|&\leq \bigo\left(\frac{\sigma}{1-C_5\epsilon}\cdot \left({\sqrt{\epsilon+\sqrt{(d+\log(1/\delta))/m}}} \right)\right), \\
     \|\mu_{4}-\mu_{p^\star}\|&\leq \bigo\left(\frac{\sigma}{1-C_6(\epsilon+\log(1/\delta)/m)}\cdot \left({\sqrt{\epsilon}}+ \sqrt{\frac{d\log(d)+\log(1/\delta)}{m}}\right)\right), \\
         \|\mu_{5}-\mu_{p^\star}\|&\leq \bigo\left(\frac{\sigma}{1-C_7(\epsilon+\log(1/\delta)/m)}\cdot \left({\sqrt{\epsilon}}+ \sqrt{\frac{d\log(d)+\log(1/\delta)}{m}}\right)\right), 
\end{align*}
where $C_i>2$ are some universal constants.
\end{theorem}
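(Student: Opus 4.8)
\textbf{Proof plan for Theorem~\ref{thm:robust_guarantee}.}
The plan is to treat each of the five estimators as an instance of one of three master templates (\textsc{No-regret}, \textsc{Filtering}, \textsc{GAN}) and to reduce every bound to a single deterministic ``resilience'' inequality about the uncorrupted empirical distribution $\hat p_m^\star$, plus a concentration step controlling $\|\Cov_{\hat p_m^\star}(\vecx) - \Cov_{p^\star}(\vecx)\|_2$ and $\|\mu_{\hat p_m^\star} - \mu_{p^\star}\|$. First I would record the standard fact that, since $\|\Cov_{p^\star}(\vecx)\|_2 \le \sigma^2$, with probability $\ge 1-\delta$ the empirical covariance over $m$ i.i.d.\ samples satisfies $\|\Cov_{\hat p_m^\star}(\vecx)\|_2 \le \sigma^2(1 + O(d\log(d/\delta)/m))$ and $\|\mu_{\hat p_m^\star}-\mu_{p^\star}\|_2 \le O(\sigma\sqrt{d/m} + \sigma\sqrt{\log(1/\delta)/m})$; this is where the $(1 + \frac{d\log(d/\delta)}{m\epsilon})$ factors baked into the thresholds $\xi$ come from, and it is the reason the thresholds are chosen exactly as in the statement — they are the smallest values guaranteed to upper bound the true empirical second moment in every one-dimensional projection. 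The corruption model gives $\TV(\hat p_m^\star, \hat p_m) \le \epsilon$, so the analysis is then purely about the fixed pair $(\hat p_m^\star, \hat p_m)$.

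Next, for the \textsc{No-regret} estimator ($\mu_1, \mu_4$) I would invoke the convergence analysis of the regret-based / multiplicative-weights procedure on the simplex $\Delta_{m',\epsilon}$: the naive preprocessing step bounds the per-sample quasi-gradients $g_i^{(k)} = (\vecv^\top(\vecx_i - \mu_{q^{(k)}}))^2$ by $O(\sigma^2 d)$ (up to the $\log m$ factor), which makes the chosen step size $\eta\epsilon/(2\sigma^2 d)$ legitimate; online-learning regret then guarantees that after finitely many iterations the algorithm reaches a distribution $q$ with $\|\Cov_q(\vecx)\|_2 \le \xi$, and the resilience/stability lemma for bounded-covariance distributions (as in Zhu--Jiao--Steinhardt / Diakonikolas et al.) converts this spectral bound, together with $q_i \le 1/((1-\epsilon)m')$, into $\|\mu_q - \mu_{\hat p_m^\star}\| = O(\sqrt{\epsilon}\cdot\sqrt{\xi})$; plugging in $\xi$ and combining with the concentration step yields the stated bound. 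The dependence $\frac{1}{1-(6+2\eta)\epsilon}$ (resp.\ $\frac{1}{1-C_2(\epsilon+\log(1/\delta)/n)^2}$ for $\mu_4$) comes from carefully tracking the constants in the resilience inequality when $\epsilon$ is not assumed small. For the \textsc{Filtering} estimator ($\mu_2,\mu_5$) the argument is the same skeleton but with a soft-removal potential argument: one shows that at each filtering step the (weighted) mass removed from corrupted points dominates the mass removed from clean points, so the process terminates with $\|\Cov_q(\vecx)\|_2 \le \xi$ while having removed total weight $O(\epsilon)$, and the same resilience lemma closes it. For the \textsc{GAN} estimator ($\mu_3$) I would instead use the fact that the sigmoid-network discriminator class realizes (up to constants) the TV-projection distance restricted to spectral-norm balls, invoke the structural result that projecting $\hat p_m$ onto $\mathcal{G}(\sigma^2)$ in this IPM gives a distribution whose mean is $O(\sigma\sqrt{\epsilon + \sqrt{(d+\log(1/\delta))/m}})$ from $\mu_{p^\star}$ — the worse $(d/m)^{1/4}$-type rate is intrinsic to the generative/IPM approach because the discriminator only controls moments weakly; the extra $\sqrt{d + \log(1/\delta)}$ inside comes from VC/Rademacher control of the sigmoid network class.

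The main obstacle I expect is the deterministic resilience/stability lemma together with getting the constants right for the regime where $\epsilon$ is a constant bounded away from the breakdown point rather than infinitesimal: one must show that any distribution $q$ obtained from $\hat p_m$ by down-weighting at most an $\epsilon$-fraction of mass and having $\|\Cov_q\|_2 \le \xi$ is close in mean to $\hat p_m^\star$, and the constant in front of $\sqrt{\epsilon\xi}$ must be traced through to produce the $(6+2\eta)$, $C_1,\dots,C_7$ factors; the $\log d$ improvement in $\mu_4,\mu_5$ versus $\mu_1,\mu_2$ is a subtle point about whether one uses a union bound over an $\epsilon$-net of directions at scale $\sqrt{d\log(1/\delta)}$ or at scale $\sqrt{d}$, which is exactly the covering-argument subtlety flagged in the main text. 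The remaining pieces — matrix Bernstein for covariance concentration, regret bounds for multiplicative weights, the soft-filtering potential argument, and uniform convergence for the sigmoid class — are standard and I would cite them rather than reprove them.
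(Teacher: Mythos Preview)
Your proposal is correct in spirit and in fact goes substantially further than the paper's own ``proof,'' which consists entirely of citations: the paper simply attributes the bound for $\mu_1$ to the statistical analysis in \cite{zhu2019generalized,diakonikolas2017being,steinhardt2017certified} plus the computational analysis in \cite[Theorem~4.1]{zhu2021robust}, the bound for $\mu_2$ to the same statistical analysis combined with \cite[Theorem~4.2]{zhu2021robust}, the bound for $\mu_3$ to \cite[Corollary~1]{zhu2022robust}, and the bounds for $\mu_4,\mu_5$ to \cite[Proposition~1.6]{diakonikolas2020outlier}. There is no argument given in the paper beyond these pointers.

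What you have sketched --- the resilience/stability lemma converting a spectral-norm bound on $\Cov_q$ into a mean-closeness bound, the regret analysis for the multiplicative-weights step in \textsc{No-regret}, the soft-removal potential argument for \textsc{Filtering}, and the IPM/VC analysis for the sigmoid discriminator in \textsc{GAN} --- is precisely the content of those cited works, so your plan is an accurate unpacking of what the paper is importing wholesale. The one place to be careful is your explanation of the $\log d$ improvement in $\mu_4,\mu_5$: in the paper's framing this comes not from a different covering argument that you would carry out yourself but from invoking \cite[Proposition~1.6]{diakonikolas2020outlier} as a black box (which internally uses a more refined stability argument). If your intent is to match the paper, you should cite rather than re-derive; if your intent is a self-contained proof, your outline is the right one but the constant-tracking you flag as the ``main obstacle'' is indeed nontrivial and is the bulk of the work in the cited references.
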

\begin{proof}
The guarantee for $\mu_1$ comes from a combination of the statistical analysis (see e.g. \citet[Theorem G.3]{zhu2019generalized}, \citet{diakonikolas2017being}, \cite{steinhardt2017certified}) and computational analysis (see e.g.~\citet[Theorem 4.1]{zhu2021robust}). The guarantee for $\mu_1$ comes from a combination of the same statistical analysis and a different computational analysis (see~\citet[Theorem 4.2]{zhu2021robust}). The guarantee for $\mu_3$ comes from~\citet[Corollary 1]{zhu2022robust}. The guarantee for $\mu_4$ and $\mu_5$ comes from~\citet[Proposition 1.6]{diakonikolas2020outlier}.
\end{proof}

\revision{As a sharp contrast, \textsc{coordinate-wise median}, \textsc{coordinate-wise trimmed mean} and \textsc{geometric median} have an error of at least $\Omega(\sigma\sqrt{\epsilon d})$ under the same assumption, which can be highly sub-optimal when the dimension is high. Following our lower bound analysis in the next section, the lower bound for robust mean estimation also directly results in a lower bound of $\Omega({\sigma\sqrt{\epsilon d/ n}})$ for the performance of the FL protocols using either of the three algorithms as aggregation rules.  }
% We provide more discussion in Appendix~\ref{sec:related_work}.

\section{Lower Bound}\label{sec:lower_bound}

Now we present a formal lower bound for the Federated Learning setting using the lower bound for robust mean estimation. Let $\mathcal{F}$ be the sets of loss functions $\ell$ and distribution $p^\star$ that satisfies Assumption~\ref{asm:smoothness}, ~\ref{asm:bounded-variance} and ~\ref{asm:convexity}(a). Let $P: \mathcal{W}\times \mathbb{R}^{m\times d} \mapsto \mathcal{W}$ be any FL protocol which  receive the current parameter $\vecw\in\mathcal{W}$ and corresponding gradient information for $m$ clients under the parameter $\vecw$, and outputs an updated  parameter $\vecw'\in\mathcal{W}$ according to the observation. We denote the set of such protocols as $\mathcal{P}$. 
For any $(f, p^\star)\in\mathcal{F}$, we let $\vecw^\star(f, p^\star) = \argmin_{\vecw\in\mathcal{W}} \mathbb{E}_{\vecz\sim p^\star}[f(\vecw;\vecz)]$ be the minimizer of the loss function. Let $\mathcal{D}_{m, n, T}({P})$ be the set of all possible distributions of observations we may see when executing $P\in \mathcal{P}$ after $T$ rounds. At round $t$, 
conditioned on the current parameter $\vecw^t$, the distribution of observed distribution is within $\epsilon$-$\TV$ distance to the true distribution of $\gradF_i(\vecw)$ due to possible corruptions by Byzantine workers. Let $\hat \vecw^T: \mathcal{P}\times \mathcal{D}_{m, n, T}(\cdot)\mapsto \mathcal{W}$ be the output of the protocol $P\in\mathcal{P}$ given all the observations from $\mathcal{D}_{m, n, T}({P})$. 
\begin{theorem}[Lower Bound]\label{thm:lower_bound}
For any  protocol with infinite computation power, there exists some  loss function $f$ and sample distribution $p^\star$ that satisfies Assumption~\ref{asm:smoothness}, ~\ref{asm:bounded-variance} and ~\ref{asm:convexity}(a), such that  the output of the protocol $\hat \vecw$ incurs a loss that is at least $ \Omega(\sqrt{\frac{\epsilon}{n}+\frac{d}{mn}})$. Formally, we have for any $T>0$, with constant probability,
\begin{align*}
    \inf_{P\in\mathcal{P}}\sup_{(f, p^\star)\in\mathcal{F}, \hat p\in\mathcal{D}_{m, n, T}({P})} \|\vecw^\star(f, p^\star)-\hat \vecw^T(P, \hat p)\|_2 = \Omega\left(\sigma\sqrt{\frac{\epsilon}{n}+\frac{d}{mn}}\right).
\end{align*}
\end{theorem}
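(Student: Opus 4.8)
The plan is to reduce the Federated Learning lower bound to a combination of two classical lower bounds: (i) the standard statistical lower bound for mean estimation from $mn$ i.i.d.\ samples, which gives the $\sqrt{d/(mn)}$ term, and (ii) the robust mean estimation lower bound under $\epsilon$-$\TV$ corruption with bounded covariance, which gives the $\sqrt{\epsilon/n}$ term. The key observation is that one can embed a hard instance of robust mean estimation into a quadratic loss so that parameter estimation is exactly mean estimation. First I would construct the loss family: take $f(\vecw;\vecz) = \frac{1}{2}\|\vecw - \vecz\|_2^2$ (or a rescaled version $\frac{\lambda}{2}\|\vecw-\vecz\|^2$ to control strong convexity, plus a smoothness-controlling modification on the boundary of $\mathcal{W}$), so that $\gradf(\vecw;\vecz) = \lambda(\vecw - \vecz)$, $F(\vecw) = \frac{\lambda}{2}\Expect_{p^\star}\|\vecw-\vecz\|^2$ up to constants, and $\vecw^\star(f,p^\star) = \Expect_{p^\star}[\vecz]$. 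The covariance assumption $\|\Cov_{p^\star}(\gradf(\vecw;\vecz))\|_2 \le \sigma^2$ becomes $\lambda^2\|\Cov_{p^\star}(\vecz)\|_2 \le \sigma^2$, so choosing the data distribution with covariance bounded by $\sigma^2/\lambda^2$ suffices. In this reduction, estimating $\vecw^\star$ to accuracy $\Delta$ is literally estimating the mean of $p^\star$ to accuracy $\Delta$, where the protocol sees, at each of the $T$ rounds and for each of the $m$ clients, a quantity whose distribution is within $\epsilon$-$\TV$ of the empirical-gradient distribution built from that client's $n$ samples.

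The second step is to analyze the information available to the protocol. Even with infinite computation and over $T\to\infty$ rounds, the only stochasticity is in the $mn$ underlying samples (fixed across rounds per client) and in the adversary's corruptions; the protocol's queries at different $\vecw$ give affine re-parametrizations of the same gradient information because $\gradf$ is affine in $\vecw$. Hence the effective observation is: for each client $i$, an $\epsilon$-corrupted version of $\{\vecz^{i,j}\}_{j=1}^n$ (equivalently, a distribution within $\epsilon$-$\TV$ of $\hat p_n^{\star,i}$), and the protocol must output an estimate of $\Expect_{p^\star}[\vecz]$. I would then invoke two standard hard instances. For the non-robust term, take $\epsilon = 0$ and let $p^\star$ be either $N(0, (\sigma^2/\lambda^2) I_d)$ (or a bounded analogue supported in $\mathcal{W}$) so that Le Cam / Fano over a packing of means at scale $\sigma\sqrt{d/(mn)}/\lambda$ forces error $\Omega(\sigma\sqrt{d/(mn)})$ (after rescaling back through $\lambda$, which I'd choose to be a constant, say $\lambda = L/2$, so it does not affect the rate). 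For the robust term, use the classical one-dimensional construction: two distributions $p_0, p_1$ with means differing by $\Theta(\sigma\sqrt{\epsilon}/\lambda)$, each with variance $\le \sigma^2/\lambda^2$, such that an $\epsilon$-fraction corruption can make each client's corrupted sample-distribution identical in law — this is the content of the $\Omega(\sigma\sqrt{\epsilon})$ robust mean estimation lower bound cited in the excerpt (e.g.\ via the resilience/modulus-of-continuity argument). Since each client independently has $n$ samples and the per-client empirical mean concentrates at rate $\sigma/(\lambda\sqrt n)$, the gap $\sigma\sqrt{\epsilon}/\lambda$ at the population level translates to an indistinguishability gap of order $\sigma\sqrt{\epsilon/n}/\lambda$ in the parameter: with probability bounded below by a constant, no protocol can tell $p_0^{\otimes n}$-corrupted from $p_1^{\otimes n}$-corrupted, yet the true means are $\Omega(\sigma\sqrt{\epsilon/n})$ apart. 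Combining the two instances (they are in $\mathcal{F}$ because both satisfy smoothness, bounded covariance, and strong convexity of $F$) and using $\max\{a,b\}\ge \frac12(a+b)$ gives the claimed $\Omega(\sigma\sqrt{\epsilon/n + d/(mn)})$.

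The third step is bookkeeping to make the instances genuinely lie in $\mathcal{F}$: $F$ must be $\lambda$-strongly convex with $\gradF(\vecw^\star)=0$ and each $f(\cdot;\vecz)$ must be $L$-smooth on the compact convex $\mathcal{W}$ of diameter $D$. Choosing $f(\vecw;\vecz)=\frac{\lambda}{2}\|\vecw-\vecz\|^2$ gives $\lambda$-strong convexity and $\lambda$-smoothness directly; I would take $\lambda \le L$, place all relevant means well inside $\mathcal{W}$ (shrinking $\sigma$ relative to $D$ if necessary, which only weakens nothing since the rate is stated in terms of $\sigma$), and note $\gradF(\vecw^\star)=\lambda(\vecw^\star - \Expect[\vecz])=0$ automatically. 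One subtlety is that the adversary in the FL model can corrupt at each round and adaptively; but since the gradient map is affine and the per-round observation is an $\epsilon$-$\TV$ ball around a fixed distribution, the worst-case adversary over $T$ rounds is no stronger than a single-round one choosing the indistinguishable corruption — I would state this as a short coupling argument. The main obstacle I anticipate is exactly this: carefully arguing that multiple rounds and adaptivity do not help, i.e.\ that the whole transcript over $T$ rounds is a (randomized) function of a fixed $\epsilon$-corrupted sample and hence the two hard instances remain statistically indistinguishable for all $T$; everything else is a standard two-point / Fano computation that I would not grind through here.
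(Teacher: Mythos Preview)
Your overall strategy matches the paper's: reduce to mean estimation via the quadratic loss $f(\vecw;\vecz)=\tfrac12\|\vecw-\vecz\|^2$, observe that the gradient is affine in $\vecw$ so multiple rounds give no extra information, and then split the lower bound into a statistical term $\sqrt{d/(mn)}$ and a corruption term $\sqrt{\epsilon/n}$. The $d/(mn)$ part and the adaptivity/multi-round collapsing argument are fine.

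The gap is in your handling of the $\sqrt{\epsilon/n}$ term. First, your description of the effective observation (``for each client $i$, an $\epsilon$-corrupted version of $\{\vecz^{i,j}\}_{j=1}^n$'') is not the threat model: the adversary controls $\epsilon m$ clients \emph{entirely}, not an $\epsilon$-fraction of samples within every client. Second, and more importantly, you invoke the classical construction with population means $\Theta(\sigma\sqrt{\epsilon})$ apart and $\TV(p_0,p_1)\le\epsilon$, then assert this ``translates to $\sigma\sqrt{\epsilon/n}$'' via concentration. That step does not work: the population means do not shrink under averaging, and if $\TV(p_0,p_1)=\epsilon$ then the $\TV$ between the $n$-sample-mean laws is typically \emph{much larger} than $\epsilon$, so the adversary cannot disguise one as the other by corrupting only an $\epsilon$-fraction of clients. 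The paper instead builds the underlying distributions so that the indistinguishability holds at the client-mean level: take $p_1^\star=\delta_0$ and $p_0^\star$ placing mass $\epsilon'=1-(1-\epsilon)^{1/n}\asymp \epsilon/n$ at $\tfrac{\sigma}{3}\sqrt{(1-\epsilon')/\epsilon'}$ and mass $1-\epsilon'$ at $0$. Both have covariance $\le\sigma^2$, the $n$-sample-mean laws satisfy $\TV(p^\star_{0,n},p^\star_{1,n})\le\epsilon$ (since $p^\star_{0,n}$ puts mass $(1-\epsilon')^n=1-\epsilon$ at $0$), and the population means differ by $\Theta(\sigma\sqrt{\epsilon'})=\Theta(\sigma\sqrt{\epsilon/n})$. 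The modulus-of-continuity argument then gives the bound. Your intuition (``effective variance $\sigma^2/n$ at the client level'') points in the right direction, but because the constraint is on $\Cov_{p^\star}$ rather than on $\Cov_{p^\star_n}$, and not every bounded-covariance distribution arises as an $n$-fold average, you really do need the explicit construction.
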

\begin{proof}
Consider the set of problems of mean estimation under $\ell_2$ norm, where we fix $f(\vecw; \vecz) = \|\vecw-\vecz\|_2^2$, and let $p^\star$ to be any distribution with bounded covariance, i.e. we take $\mathcal{F}_{\sigma^2}=\{(f, p^\star) \mid f(\vecw; \vecz) = \|\vecw-\vecz\|_2^2, \|\Cov_{p^\star}(\vecz)\|_2\leq \sigma^2\}$. In this case, we know that $\vecw^\star(f, p^\star) = \mathbb{E}_{p^\star}[\vecz]$, and the gradient from each client is $\gradF_i(\vecw) = \frac{2\sum_{i=1}^n (\vecw-\vecz_i)}{n} = 2(\vecw - \frac{\sum_{j=1}^n \vecz_{ij}}{n})$. Consider another protocol $\mathcal{P}'$ where the server can directly query the mean $\frac{\sum_{j=1}^n \vecz_{ij}}{n} $ from each honest client. One can see that for the fixed squared loss $f$, any protocol in $\mathcal{P}$ can be converted into a protocol in $\mathcal{P}'$ since the mean $\frac{\sum_{j=1}^n \vecz_{ij}}{n} $ can be  directly computed from the gradient $\gradF_i(\vecw)$ on any given $\vecw$  (and thus any multiple-round local model updates in FL can be also directly computed from the mean). Similarly, we can also convert any protocol in $\mathcal{P}'$ to a protocol in $\mathcal{P}$. Thus the two protocols are equivalent, and we have
\begin{align*}
    \inf_{P\in\mathcal{P}}\sup_{\substack{(f, p^\star)\in\mathcal{F}_{\sigma^2},\\ \hat p\in\mathcal{D}_{m, n, T}({P})}} \|\vecw^\star(f, p^\star)-\hat \vecw^T(P, \hat p)\|_2  =  \inf_{P\in\mathcal{P}'}\sup_{\substack{(f, p^\star)\in\mathcal{F}_{\sigma^2}, \\\hat p\in\mathcal{D}_{m, n, T}({P})}} \|\vecw^\star(f, p^\star)-\hat \vecw^T(P, \hat p)\|_2. 
\end{align*}
In this case, we can see that for honest clients, the output in each round will always be the mean $\frac{\sum_{j=1}^n \vecz_{ij}}{n} $. We further restrict the behavior of adversary by considering a fixed set of adversary, and force the adversary to output the same mean in each round. Thus we know that this minimax rate can be further lower bounded by that of single-round robust mean estimation problem where the queried sample from honest client is an average of $n$ samples from the true distribution $p^\star$. Formally, let $p^\star_n$ be the distribution of $\frac{\sum_{j=1}^n \vecz_{ij}}{n} $ where each $\vecz_{ij}\sim p^\star$, we have
\begin{align*}
    \inf_{P\in\mathcal{P}}\sup_{\substack{(f, p^\star)\in\mathcal{F}_{\sigma^2},\\ \hat p\in\mathcal{D}_{m, n, T}({P})}} \|\vecw^\star(f, p^\star) -\hat \vecw^T(P, \hat p)\|_2 \geq \inf_{P\in\mathcal{P}}\sup_{(p^\star, \hat p): \|\Cov_{p^\star}(\vecz)\|\leq \sigma^2, \TV(p^\star_n, \hat p)\leq \epsilon} \|\mathbb{E}_{p^\star}[\vecx]-\hat \vecw^T(P, \hat p)\|_2.
\end{align*}
From the argument of modulus of continuity (see e.g. Theorem 3.2 in~\cite{chen2016general}, Lemma D.4 in \cite{zhu2019generalized}), we know that with constant probability, 
\begin{align*}
  \inf_{P\in\mathcal{P}}\sup_{\substack{(p^\star, \hat p): \|\Cov_{p^\star}(\vecz)\|\leq \sigma^2,\\ \TV(p^\star_n, \hat p)\leq \epsilon}} \|\mathbb{E}_{p^\star}[\vecx] -\hat \vecw^T(P, \hat p)\|_2 \geq \sup_{\substack{(p^\star, {p^\star}'): \|\Cov_{p^\star}(\vecz)\|\leq \sigma^2,\\ \|\Cov_{{p^\star}'}(\vecz)\|\leq \sigma^2, \TV(p^\star_n, {p^\star_n}')\leq \epsilon}} \|\mathbb{E}_{p^\star}[\vecx]-\mathbb{E}_{{p^\star}'}[\vecx]\|_2  + \sigma\sqrt{\frac{d}{mn}}.
\end{align*}
We further lower bound the first term. Let $\epsilon' = 1-(1-\epsilon)^{1/n}$. We construct two one-dimensional distributions $p^\star$, ${p^\star}'$ as follows:
\begin{align*}
    \mathbb{P}_{p^\star}(x) &= \begin{cases} \epsilon', & x=\frac{\sigma}{3} \sqrt{\frac{1-\epsilon'}{\epsilon'}},
    \\
    1-\epsilon', & x = 0,
    \end{cases} \\
    \mathbb{P}_{{p^\star}'}(0) &=
    1.
\end{align*}
One can verify that both $p^\star$ and ${p^\star}'$ has spectral norm of its covariance bounded by $\sigma^2$, and furthermore, $\TV(p^\star_n, {p^\star_n}') = (1-\epsilon')^n=\epsilon$. In the meantime, one has
\begin{align*}
   \|\mathbb{E}_{p^\star}[\vecx]-\mathbb{E}_{{p^\star}'}[\vecx]\|_2 \gtrsim \sigma\sqrt{\epsilon'}\gtrsim \sigma\sqrt{\frac{\epsilon}{n}},
\end{align*}
where the last inequality uses the fact that $1-(1-\epsilon)^{1/n}\gtrsim \epsilon/n$.
\end{proof}

\section{Proof of Theorem~\ref{thm:main-gd-sc}}\label{prf:main-gd-sc}
Since $H=1$, we have $\vecw^t = \vecw_i^t$ for all $i\in[m]$. 
Given that each machine has $n$ samples, the variance of the returned local update $\vecg^i(\vecw^t) = -\eta_t\gradF_i(\vecw^t)$ satisfies
\begin{align*}
    \|\Cov_{p^\star}(\vecg^i(\vecw^t)) \| \leq \frac{\eta_t^2\sigma^2}{n}.
\end{align*}
Let $\mu_{p^\star}(\vecw), \mu_{i}(\vecw)$ denote the true mean of local updates under parameter $\vecw$ and the output of the $i$-th algorithm in~\eqref{eq:robust_subroutine1}, ~(\ref{eq:robust_subroutine2}),~(\ref{eq:robust_subroutine3}),~(\ref{eq:robust_subroutine4}),~(\ref{eq:robust_subroutine5}), respectively. 
We first show that the three robust estimators are Lipschitz with respect to $\vecw$ in the following lemma.
\begin{lemma}\label{lem:lipschitz}
For any $\vecw_1, \vecw_2\in\cW$ with $\|\vecw_1-\vecw_2\|_2\leq \gamma$,  one has for any $i\in\{1, 2,3, 4,5\}$, there exists some universal constants $C_j$ such that 
\begin{align*}
  \|\mu_{1}(\vecw_1)-\mu_{1}(\vecw_2)\|_2&\leq \frac{C_1\eta_t\sqrt{\epsilon}}{1-3\epsilon}\left({\frac{\sigma}{\sqrt{n}}}+L\gamma\right) + \frac{C_1\eta_t\sigma}{1-3\epsilon}\cdot \sqrt{\frac{d\log(d/\delta)}{mn}}  + \eta_tL \gamma,
\\ 
  \|\mu_{2}(\vecw_1)-\mu_{2}(\vecw_2)\|_2&\leq \frac{C_2\eta_t\sqrt{\epsilon}}{1-2\epsilon}\left({\frac{\sigma}{\sqrt{n}}}+L\gamma\right) + \frac{C_2\eta_t\sigma}{1-2\epsilon}\cdot \sqrt{\frac{d\log(d/\delta)}{mn}}  + \eta_tL \gamma,
  \\
  \|\mu_{3}(\vecw_1)-\mu_{3}(\vecw_2)\|_2&\leq \frac{C_3\eta_t\sqrt{\epsilon+\sqrt{d+\log(1/\delta)/n}}}{1-C_4(\epsilon+\sqrt{d+\log(1/\delta)/n})}\left({\frac{\sigma}{\sqrt{n}}}+L\gamma\right) + \eta_tL \gamma.\\ 
  \|\mu_{4}(\vecw_1)-\mu_{4}(\vecw_2)\|_2&\leq \frac{C_5\eta_t\sqrt{\epsilon}}{1-C_6\epsilon}\left({\frac{\sigma}{\sqrt{n}}}+L\gamma\right) + \frac{C_5\eta_t\sigma}{1-C_6\epsilon}\cdot \sqrt{\frac{d\log(d)+\log(1/\delta)}{mn}}+ \eta_tL \gamma,\\ 
  \|\mu_{5}(\vecw_1)-\mu_{5}(\vecw_2)\|_2&\leq \frac{C_7\eta_t\sqrt{\epsilon}}{1-C_8\epsilon}\left({\frac{\sigma}{\sqrt{n}}}+L\gamma\right) + \frac{C_7\eta_t\sigma}{1-C_8\epsilon}\cdot \sqrt{\frac{d\log(d)+\log(1/\delta)}{mn}}+ \eta_tL \gamma.
\end{align*}
\end{lemma}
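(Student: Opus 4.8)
The plan is to establish the Lipschitz-type bound for each robust estimator by combining the one-shot estimation guarantee from Theorem~\ref{thm:robust_guarantee} with the smoothness assumption on $f$. The key observation is that the quantity $\|\mu_i(\vecw_1) - \mu_i(\vecw_2)\|_2$ can be controlled by passing through the \emph{true} means. Specifically, write
\begin{align*}
\|\mu_i(\vecw_1) - \mu_i(\vecw_2)\|_2 \le \|\mu_i(\vecw_1) - \mu_{p^\star}(\vecw_1)\|_2 + \|\mu_{p^\star}(\vecw_1) - \mu_{p^\star}(\vecw_2)\|_2 + \|\mu_{p^\star}(\vecw_2) - \mu_i(\vecw_2)\|_2,
\end{align*}
where $\mu_{p^\star}(\vecw) = -\eta_t \gradF(\vecw)$ is the population local update. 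By Assumption~\ref{asm:smoothness}, $F$ is $L$-smooth, so the middle term is at most $\eta_t L \gamma$, which accounts for the standalone $\eta_t L\gamma$ summand in each bound. The first and third terms are exactly the robust mean estimation errors at $\vecw_1$ and $\vecw_2$, but the subtlety is that the corruption level seen by the estimator is not simply $\epsilon$.

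First I would make precise the effective corruption fraction for the bucketed/unbucketed dataset $\mathcal{D}_m$. Since at most $\epsilon m$ clients are Byzantine, the empirical distribution of the $m$ local updates is within TV distance $\epsilon$ of the true empirical distribution of honest updates, which in turn concentrates around $p^\star_n$ (the distribution of an $n$-sample average) with the extra sampling error. I would invoke Theorem~\ref{thm:robust_guarantee} with $\sigma^2$ replaced by $\eta_t^2\sigma^2/n$ (the covariance bound on the rescaled local update $\vecg^i(\vecw^t)$ established just before Lemma~\ref{lem:lipschitz}), yielding for $\mu_1$ and $\mu_2$ a bound of order $\eta_t\sigma/\sqrt{n}\cdot(\sqrt\epsilon/(1-c\epsilon) + \sqrt{d\log(d/\delta)/m})$, and analogously for $\mu_3,\mu_4,\mu_5$. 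This directly gives the $\sqrt\epsilon\cdot\sigma/\sqrt{n}$ and $\sigma\sqrt{d\log(d/\delta)/(mn)}$ pieces (and the $(d/m)^{1/4}$-type piece for $\mu_3$).

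The remaining ingredient — and what I expect to be the main obstacle — is the $L\gamma$ term that multiplies $\sqrt\epsilon/(1-c\epsilon)$ in each bound. This arises because the robust estimation error is bounded by $\sqrt\epsilon$ times the \emph{spread} of the honest data, and when we are comparing estimators at two nearby parameters $\vecw_1,\vecw_2$, one should think of the honest gradients at $\vecw_1$ versus $\vecw_2$ as a single corrupted dataset in which a $\sqrt\epsilon$-weighted mass of points has been displaced by at most $\eta_t L\gamma$ (by $L$-smoothness of each $f(\cdot;\vecz)$). Formally, I would couple the two honest datasets sample-by-sample: $\gradf(\vecw_1;\vecz^{i,j})$ and $\gradf(\vecw_2;\vecz^{i,j})$ differ by at most $L\|\vecw_1-\vecw_2\|_2 \le L\gamma$, and then the filtering/no-regret analysis (resilience of the reweighted distribution, e.g.\ the bound $\|\mu_q - \mu_{p^\star}\| \lesssim \sqrt{\epsilon\|\Cov_q\|}$) shows that a $O(\epsilon)$ reweighting can only move the output by $O(\sqrt\epsilon)\cdot(\text{spread})$, where the spread now includes the $L\gamma$ displacement. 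Carefully, this is a stability/perturbation statement about the output of Algorithm~\ref{algo:explicit_lowregret} and Algorithm~\ref{algo:filtering} under a bounded-norm perturbation of a $\sqrt\epsilon$-fraction of inputs; I would derive it from the resilience property underlying Theorem~\ref{thm:robust_guarantee} rather than re-running the algorithms, treating $f(\vecw_1;\cdot)$ and $f(\vecw_2;\cdot)$ gradients as two draws that an adversary of budget $\epsilon$ could interpolate between. Collecting the $\eta_t L\gamma$ from the population smoothness step, the $C\sqrt\epsilon/(1-c\epsilon)\cdot(\sigma/\sqrt n + L\gamma)$ from the robust-error-plus-perturbation step, and the $C\sigma/(1-c\epsilon)\sqrt{d\log(d/\delta)/(mn)}$ sampling term then gives exactly the five claimed inequalities, with the $\log(d)$ replaced by $\log(d)$ versus $d\log(d/\delta)$ depending on whether the subroutine in~\eqref{eq:robust_subroutine4}--\eqref{eq:robust_subroutine5} or~\eqref{eq:robust_subroutine1}--\eqref{eq:robust_subroutine2} is used.
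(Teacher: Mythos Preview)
Your main decomposition --- triangle inequality through the population mean $\mu_{p^\star}(\vecw_j)=-\eta_t\nabla F(\vecw_j)$ and then invoking Theorem~\ref{thm:robust_guarantee} to bound $\|\mu_i(\vecw_j)-\mu_{p^\star}(\vecw_j)\|$ --- is circular in the context where the lemma is used. Theorem~\ref{thm:robust_guarantee} is a high-probability statement for a \emph{fixed} parameter; the whole point of Lemma~\ref{lem:lipschitz} in the paper is to supply a \emph{deterministic} Lipschitz-type control so that the accuracy bound, first obtained on a finite cover $\mathcal{W}_\gamma$ by union bound, can be extended to every $\vecw\in\mathcal{W}$. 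Your bound on $\|\mu_i(\vecw_2)-\mu_{p^\star}(\vecw_2)\|$ already presumes the uniform accuracy you are trying to establish. (Relatedly, the $\sqrt{\epsilon}\,L\gamma$ term you flag as the ``main obstacle'' is not an obstacle at all: if your triangle argument were valid it would produce a \emph{stronger} inequality without that term. Its presence is a symptom of the different route the paper takes.)

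The paper's argument never touches $\mu_{p^\star}$. It uses only properties of the algorithm that hold deterministically on every input: the No-regret/Filtering output at $\vecw_1$ is $\mathbb{E}_{q_1}[\vecg]$ for some $q_1\in\Delta_{m,2\epsilon}$ with $\|\Cov_{q_1}(\vecg)\|\le\xi$ (the termination threshold), and likewise $\mu_i(\vecw_2)=\mathbb{E}_{q_1'}[\vecg']$ with $q_1'\in\Delta_{m,2\epsilon}$ and $\|\Cov_{q_1'}(\vecg')\|\le\xi$. Since honest gradients satisfy $\|\vecg_j-\vecg_j'\|\le\eta_t L\gamma$ by smoothness, the covariance of $\vecg$ under $q_1'$ is at most $\xi+2\eta_t^2L^2\gamma^2$. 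Both $q_1,q_1'$ lying in the constrained simplex forces $\TV(q_1,q_1')=O(\epsilon/(1-3\epsilon))$, and then the resilience inequality (two bounded-covariance distributions with small TV distance have close means) gives $\|\mathbb{E}_{q_1}[\vecg]-\mathbb{E}_{q_1'}[\vecg]\|\lesssim\sqrt{\epsilon}\,(\sqrt{\xi}+\eta_t L\gamma)$; this is where both the $\sqrt{d\log(d/\delta)/(mn)}$ piece (from $\sqrt{\xi}$) and the $\sqrt{\epsilon}\,L\gamma$ piece arise. The remaining $\eta_t L\gamma$ is just $\|\mathbb{E}_{q_1'}[\vecg]-\mathbb{E}_{q_1'}[\vecg']\|$. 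Your second paragraph gestures toward a perturbation argument of this kind, but the key ingredients you are missing are (i) the deterministic covariance bound from the termination condition, and (ii) the fact that $q_1$ and $q_1'$ are both in $\Delta_{m,2\epsilon}$, so their TV distance is controlled without any reference to $p^\star$.
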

\begin{proof}
We first prove the result for Algorithm~\ref{algo:explicit_lowregret} in Equation~(\ref{eq:robust_subroutine1}). 
From the smoothness assumption in Assumption~\ref{asm:smoothness}, we know that $\|\vecw_1-\vecw_2\|_2\leq \gamma$ implies that for any $\vecz$,
\begin{align}
    \|\nabla f(\vecw_1; \vecz)-\nabla f(\vecw_2; \vecz)\|_2\leq L\gamma.\label{} 
\end{align}
Assume without loss of generality that the first $(1-\epsilon)m$ clients are honest while the rest $\epsilon m$ clients are adversarial. Denote the local gradient $-\eta_t\nabla f(\vecw_1; \vecz)$ for $i$-th client as $\vecg_i$, and $-\eta_t\nabla f(\vecw_2; \vecz)$ for $i$-th client as $\vecg_i'$. 
Since $\mu_{1}(\vecw_1),\mu_{1}(\vecw_2)$ are the output of the algorithm in Equation~(\ref{eq:robust_subroutine1}), we know that for any $\|\vecg_1-\vecg_1'\|\leq \eta_tL\gamma, \|\vecg_2-\vecg_2'\|\leq \eta_tL\gamma, \cdots, \|\vecg_{(1-\epsilon)m}-\vecg_{(1-\epsilon)m}'\|\leq \eta_tL\gamma, \vecg_{(1-\epsilon)m+1}=\vecg_{(1-\epsilon)m+1}', \cdots, \vecg_m=\vecg_m'$, one can find some distribution $q_1, q_1'\in \Delta_{m, {2\epsilon}}$ such that $\mu_{1}(\vecw_1) = \mathbb{E}_{q_1}[\vecg]$, $\mu_{1}(\vecw_2) = \mathbb{E}_{q_1'}[\vecg']$, and for some universal constant $C$,
\begin{align*}
   \|\Cov_{q_1}(\vecg)\|_2 =  \sup_{\|v\|_2\leq 1} \mathbb{E}_{q_1}[(v^\top (\vecg-\mu_{1}(\vecw_1)))^2] \leq  \Big(\frac{C}{1 - 3\epsilon}\Big)^2\cdot \left(1+\frac{d\log(d/\delta)}{m\epsilon}\right)\cdot \frac{\eta_t^2\sigma^2}{n},\\
      \|\Cov_{q_1'}(\vecg')\|_2 =  \sup_{\|v\|_2\leq 1} \mathbb{E}_{q_1'}[(v^\top (\vecg'-\mu_{2}(\vecw_2)))^2] \leq  \Big(\frac{C}{1 - 3\epsilon}\Big)^2\cdot \left(1+\frac{d\log(d/\delta)}{m\epsilon}\right)\cdot \frac{\eta_t^2\sigma^2}{n}.
\end{align*}
This implies that 
\begin{align*}
      \|\Cov_{q_1'}(\vecg)\|_2 & = \sup_{\|v\|_2\leq 1} \mathbb{E}_{q_1'}[(v^\top (\vecg-\mathbb{E}_{q_1'}[\vecg]))^2]  
      \\
    & \leq  \sup_{\|v\|_2\leq 1} \mathbb{E}_{q_1'}[(v^\top (\vecg'-\mathbb{E}_{q_1'}[\vecg']))^2]   + 2\eta_t^2L^2\gamma^2 \\
    & \leq  \Big(\frac{C}{1 - 3\epsilon}\Big)^2\cdot \left(1+\frac{d\log(d/\delta)}{m\epsilon}\right)\cdot \frac{\eta_t^2\sigma^2}{n} + 2\eta_t^2L^2\gamma^2.
\end{align*}
Since $\TV(q_1, q_1')\leq \frac{2\epsilon}{1-3\epsilon}$ and $\epsilon<1/3$, from e.g.~\citet[Proposition 2.3]{steinhardt2018robust} and~\citet[Lemma E.2]{zhu2019generalized} we know that for some different universal constant $C$,
\begin{align*}
   \| \mathbb{E}_{q_1}[\vecg]-\mathbb{E}_{q_1'}[\vecg]\|_2& \leq \frac{C\eta_t\sqrt{\epsilon}}{1-3\epsilon}\left(\sqrt{\frac{1}{n}+\frac{d\log(d/\delta)}{mn\epsilon}}\sigma+L\gamma\right)
   \\
  & = \frac{C\eta_t\sqrt{\epsilon}}{1-3\epsilon}\left({\frac{\sigma}{\sqrt{n}}}+L\gamma\right) + \frac{C\eta_t\sigma}{1-3\epsilon}\cdot \sqrt{\frac{d\log(d/\delta)}{mn}}.
\end{align*}
And thus 
\begin{align*}
   \| \mu_{1}(\vecw_1)-\mu_{1}(\vecw_2)\|_2& = 
     \| \mathbb{E}_{q_1}[\vecg]-\mathbb{E}_{q_1'}[\vecg']\|_2
   \\&\leq  \|\mathbb{E}_{q_1}[\vecg]-\mathbb{E}_{q_1'}[\vecg]\|_2 + \|\mathbb{E}_{q_1'}[\vecg]-\mathbb{E}_{q_1'}[\vecg']\|_2
   \\
  & = \frac{C\eta_t\sqrt{\epsilon}}{1-3\epsilon}\left({\frac{\sigma}{\sqrt{n}}}+L\gamma\right) + \frac{C\eta_t\sigma}{1-3\epsilon}\cdot \sqrt{\frac{d\log(d/\delta)}{mn}} + \eta_tL \gamma.
\end{align*}
The results for other algorithms follows the same line of argument.
\end{proof}
Consider a $\gamma$-covering in $\ell_2$ distance for the set $\mathcal{W}$, denoted as $\mathcal{W}_\gamma$. From~\cite{vershynin2010introduction}, we know that $|\mathcal{W}_\gamma|\leq (1+\frac{D}{\gamma})^d$. 
By union bound and Theorem~\ref{thm:robust_guarantee}, we know that each of the following holds with probability at least $1-(1+D/\gamma)^d\delta$, 
\begin{align*}
    \sup_{\vecw_\gamma\in\mathcal{W}_\gamma} \|-\mu_{1}(\vecw_\gamma) - \eta_t\nabla F(\vecw_\gamma)\|_2&\leq \bigo\left(\frac{\eta_t\sigma}{\sqrt{n}}\cdot \left(\frac{\sqrt    \epsilon}{1-3\epsilon} + \sqrt{\frac{d\log(d/\delta)}{m}}\right)\right), \\
       \sup_{\vecw_\gamma\in\mathcal{W}_\gamma} \|-\mu_{2}(\vecw_\gamma) - \eta_t\nabla F(\vecw_\gamma)\|_2&\leq \bigo\left(\frac{\eta_t\sigma}{\sqrt{n}}\cdot \left(\frac{\sqrt{\epsilon}}{1-2\epsilon} + \sqrt{\frac{d\log(d/\delta)}{m}}\right)\right), \\
      \sup_{\vecw_\gamma\in\mathcal{W}_\gamma} \|-\mu_{3}(\vecw_\gamma) - \eta_t\nabla F(\vecw_\gamma)\|_2&\leq \bigo\left(\frac{\eta_t\sigma}{\sqrt{n}}\cdot \left(\frac{\sqrt{\epsilon+\sqrt{(d+\log(1/\delta))/m}}}{1-C\epsilon}  \right)\right), \\
      \sup_{\vecw_\gamma\in\mathcal{W}_\gamma} \|-\mu_{4}(\vecw_\gamma) - \eta_t\nabla F(\vecw_\gamma)\|_2&\leq \bigo\left(\frac{\eta_t\sigma}{1-C(\epsilon+\log(1/\delta)/m)}\cdot \left({\sqrt{\epsilon}}+ \sqrt{\frac{d\log(d)+\log(1/\delta)}{m}}\right)\right) \\
      \sup_{\vecw_\gamma\in\mathcal{W}_\gamma} \|-\mu_{5}(\vecw_\gamma) - \eta_t\nabla F(\vecw_\gamma)\|_2&\leq \bigo\left(\frac{\eta_t\sigma}{1-C(\epsilon+\log(1/\delta)/m)}\cdot \left({\sqrt{\epsilon}}+ \sqrt{\frac{d\log(d)+\log(1/\delta)}{m}}\right)\right)
\end{align*}
Since both $\mu_{1}(\vecw_\gamma) $ and $\nabla F(\vecw_\gamma)$ are Lipschitz (from Lemma~\ref{lem:lipschitz} and Assumption~\ref{asm:smoothness}), we know that with probability at least $1-\delta$,
\begin{align*}
    \sup_{\vecw\in\mathcal{W}} \|-\mu_{1}(\vecw) - \eta_t\nabla F(\vecw)\|_2\leq  & \sup_{\vecw_\gamma\in\mathcal{W}_\gamma} \|-\mu_{1}(\vecw_\gamma) - \eta_t\nabla F(\vecw_\gamma)\|_2 + \eta_tL\gamma + \frac{C\eta_t\sqrt{\epsilon}}{1-3\epsilon}\left({\frac{\sigma}{\sqrt{n}}}+L\gamma\right)  \\ 
    & \quad + \frac{C\eta_t\sigma}{1-3\epsilon}\cdot \sqrt{\frac{d\log(d/\delta)}{mn}}\\ 
    \leq &\bigo\left(\frac{\eta_t\sigma}{(1-3\epsilon)\sqrt{n}}\cdot \left({\sqrt\epsilon} + \sqrt{\frac{d^2\log(1+D/\gamma)+d\log(d/\delta)}{m}}\right)+\frac{\eta_tL\gamma}{{1-3\epsilon}}\right),
\end{align*}
By taking $\gamma = 1/nmL$, we know that with probability at least $1-\delta$, 
\begin{align}\label{eq:proof_delta1}
    \sup_{\vecw\in\mathcal{W}} \|-\mu_{1}(\vecw) - \eta_t\nabla F(\vecw)\|_2\leq \bigo\left(\frac{\eta_t\sigma}{(1-3\epsilon)\sqrt{n}}\cdot \left({\sqrt    \epsilon} + \sqrt{\frac{d^2\log(1+nmDL)+d\log(d/\delta)}{m}}\right)\right).
\end{align}
Similarly, we can get for Algorithm~\ref{algo:filtering},~\ref{algo:GAN} that with probability at least $1-\delta$,
\begin{align}\label{eq:proof_delta2}
        \sup_{\vecw\in\mathcal{W}} \|-\mu_{2}(\vecw) - \eta_t\nabla F(\vecw)\|_2&\leq \bigo\left(\frac{\eta_t\sigma}{(1-2\epsilon)\sqrt{n}}\cdot \left({\sqrt    \epsilon} + \sqrt{\frac{d^2\log(1+nmDL)+d\log(d/\delta)}{m}}\right)\right),
        \\
    \label{eq:proof_delta3}   \sup_{\vecw\in\mathcal{W}} \|-\mu_{3}(\vecw) -\eta_t \nabla F(\vecw)\|_2&\leq \bigo\left(\frac{\eta_t\sigma}{(1-C\epsilon)\sqrt{n}}\cdot \left({\sqrt{\epsilon+\sqrt{\frac{d\log(1+nmDL)+\log(1/\delta)}{m}}}}  \right)\right) \\
    \label{eq:proof_delta4} \sup_{\vecw\in\mathcal{W}} \|-\mu_{4}(\vecw) -\eta_t \nabla F(\vecw)\|_2& \leq \bigo\left(\frac{\eta_t\sigma}{1-C(\epsilon+\frac{\log(1/\delta)}{m})} \cdot \left({\sqrt{\epsilon}}+ \sqrt{\frac{d(\log(d)+\log(1+nmDL))+\log(1/\delta)}{m}}\right)\right)\\
        \label{eq:proof_delta5} \sup_{\vecw\in\mathcal{W}} \|-\mu_{5}(\vecw) - \eta_t\nabla F(\vecw)\|_2& \leq  \bigo\left(\frac{\eta_t\sigma}{1-C(\epsilon+\frac{\log(1/\delta)}{m})}\cdot \left({\sqrt{\epsilon}}+ \sqrt{\frac{d(\log(d)+\log(1+nmDL))+\log(1/\delta)}{m}}\right)\right)
\end{align}
Then, we proceed to analyze the convergence of the robust distributed gradient descent algorithm. This part follows directly with the analysis in~\cite{yin2018byzantine}. For completeness we include the proof here.

\paragraph*{Convergence analysis for strongly convex case.} 
We condition on the event that the bound above is satisfied for all $\vecw\in\W$. Then, in the $t$-th iteration, let the updated parameter before projection step be $
\widehat{\vecw}^{t+1} = \vecw^t + \vecg(\vecw^t).$
Thus, we have $\vecw^{t+1} = \Pi_\W(\widehat{\vecw}^{t+1})$. From the property of Euclidean projection, we know that 
\begin{align}
   \twonms{\vecw^{t+1} - \vecw^*} & \le \twonms{\widehat{\vecw}^{t+1} - \vecw^*} \nonumber\\ 
   & = \twonms{\vecw^t + \vecg(\vecw^t) - \vecw^*} \nonumber \\
   & \leq \twonms{\vecw^t - \eta_t\gradF(\vecw^t) - \vecw^*} + \twonms{-\vecg(\vecw^t) -\eta_t\gradF(\vecw^t)}. \label{eq:converge-1}
\end{align}
For the first term, we have
\begin{equation}\label{eq:iter-cvx-1}
\twonms{\vecw^t - \eta_t\gradF(\vecw^t) - \vecw^*}^2 = \twonms{\vecw^t - \vecw^*}^2 - 2\eta_t\innerps{\vecw^t - \vecw^*}{\gradF(\vecw^t)} + \eta_t^2\twonms{\gradF(\vecw^t)}^2.
\end{equation}
Since $F(\vecw)$ is $\lambda$-strongly convex, by the co-coercivity of strongly convex functions (see e.g. Lemma 3.11 in~\cite{bubeck2015convex}), we obtain
$$
\innerps{\vecw^t - \vecw^*}{\gradF(\vecw^t)} \ge \frac{L\lambda}{L + \lambda}\twonms{\vecw^t - \vecw^*}^2 + \frac{1}{L + \lambda}\twonms{\gradF(\vecw^t)}^2.
$$
Let $\eta_t = \frac{1}{L}$. Then we get
\begin{align*}
\twonms{\vecw^t - \eta_t\gradF(\vecw^t) - \vecw^*}^2 & \le (1- \frac{2\lambda}{L + \lambda}) \twonms{\vecw^t - \vecw^*}^2 - \frac{2}{L(L+\lambda)}\twonms{\gradF(\vecw^t)}^2 + \frac{1}{L^2}\twonms{\gradF(\vecw^t)}^2 \\
& \le (1- \frac{2\lambda}{L + \lambda}) \twonms{\vecw^t - \vecw^*}^2,
\end{align*}
where in the second inequality we use the fact that $\lambda \le L$. Using the fact $\sqrt{1-x} \le 1-\frac{x}{2}$, we get
\begin{equation}\label{eq:iter-linear-decay}
\twonms{\vecw^t - \eta_t\gradF(\vecw^t) - \vecw^*} \le (1- \frac{\lambda}{L + \lambda})\twonms{\vecw^t - \vecw^*}.
\end{equation}
Combining~\eqref{eq:converge-1} and~\eqref{eq:iter-linear-decay}, we get
\begin{equation}\label{eq:each-iter-sc}
\twonms{\vecw^{t+1} - \vecw^*} \le (1-\frac{\lambda}{L + \lambda})\twonms{\vecw^t - \vecw^*} + \frac{1}{L}\Delta_i,
\end{equation}
where $\Delta_i$ is defined as in~\eqref{eq:def-error-delta1}, (\ref{eq:def-error-delta2}),  (\ref{eq:def-error-delta3}),  (\ref{eq:def-error-delta4}), or (\ref{eq:def-error-delta5}), respectively.
Then we can complete the proof by iterating~\eqref{eq:each-iter-sc}.

\paragraph*{Convergence analysis for non-strongly Convex  Losses:}
Let $\tilde \vecg(\vecw^t) = -\vecg(\vecw^t)/\eta_t$ be the virtual gradient  at round $t$. 
We first show that when Assumption~\ref{asm:convexity}(b) is satisfied and we choose $\eta_t = \frac{1}{L}$, the iterates $\vecw^t$ stays in $\W$ without using projection. Namely, let 
$
\vecw^{t+1} = \vecw^t - \eta_t \tilde \vecg(\vecw^t),
$for $T=0,1,\ldots, T-1$, then $\vecw^{t}\in\W$ for all $t=0,1,\ldots, T$.
To see this, we have
$$
\twonms{\vecw^{t+1} - \vecw^*} \le \twonms{\vecw^t - \eta_t \gradF(\vecw^t) -\vecw^*} + \eta_t \twonms{\tilde \vecg(\vecw^t) - \gradF(\vecw^t)},
$$
and
\begin{align*}
\twonms{\vecw^t - \eta_t \gradF(\vecw^t) -\vecw^*}^2 &= \twonms{\vecw^t - \vecw^*}^2 - 2\eta_t\innerps{\gradF(\vecw^t)}{\vecw^t - \vecw^*} + \eta_t^2\twonms{\gradF(\vecw^t)}^2 \\
&\le \twonms{\vecw^t - \vecw^*}^2 -2\eta_t \frac{1}{L}\twonms{\gradF(\vecw^t)}^2 + \eta_t^2\twonms{\gradF(\vecw^t)}^2 \\
&= \twonms{\vecw^t - \vecw^*}^2 - \frac{1}{L^2}\twonms{\gradF(\vecw^t)}^2 \\
&\le \twonms{\vecw^t - \vecw^*}^2
\end{align*}
where the inequality is due to the co-coercivity of convex functions. Thus, we get
$$
\twonms{\vecw^{t+1} - \vecw^*} \le \twonms{\vecw^{t} - \vecw^*}  + \frac{\Delta_i}{L}.
$$
Let $D_t:=\twonms{\vecw^0 - \vecw^*} + \frac{t \gamma}{L}$ for $t=0,1,\ldots, T$. Since $T = \frac{LD_0}{\gamma}$, according to Assumption~\ref{asm:convexity}(b) we know that $\vecw^t\in\W$ for all $t=0,1,\ldots, T$. Then, we proceed to study the algorithm without projection. 

Using the smoothness of $F(\vecw)$, we have
\begin{align*}
F(\vecw^{t+1}) & \le F(\vecw^t) + \innerps{\gradF(\vecw^t)}{\vecw^{t+1} - \vecw^t} + \frac{L}{2}\twonms{\vecw^{t+1} - \vecw^t}^2 \\
&= F(\vecw^t) + \eta_t \innerps{\gradF(\vecw^t)}{ - \tilde \vecg(\vecw^t) + \gradF(\vecw^t) - \gradF(\vecw^t)} + \eta_t^2 \frac{L}{2} \twonms{ \tilde \vecg(\vecw^t) - \gradF(\vecw^t) + \gradF(\vecw^t)}^2.
\end{align*}
Since $\eta_t = \frac{1}{L}$ and $\twonms{ \tilde \vecg(\vecw^t) - \gradF(\vecw^t)} \le \gamma$, by simple algebra, we obtain
\begin{equation}\label{eq:smoothness-noise}
F(\vecw^{t+1}) \le F(\vecw^t) - \frac{1}{2L}\twonms{\gradF(\vecw^t)}^2 + \frac{1}{2L}\gamma^2.
\end{equation}
We state the following lemma from~\cite{yin2018byzantine} without proof.
\begin{lemma}\label{lem:cvx-first-part}
Condition on the event that~\eqref{eq:proof_delta1}, (\ref{eq:proof_delta2}), (\ref{eq:proof_delta3}), (\ref{eq:proof_delta4}) or (\ref{eq:proof_delta5}) holds for all $\vecw\in\W$. When $F(\vecw)$ is convex, by running $T =  \frac{LD_0}{\Delta}$ parallel iterations, there exists $t\in\{0,1,2,\ldots, T\}$ such that
$$
F(\vecw^t) - F(\vecw^*) \le 16D_0\Delta_i.
$$
for some $i\in\{1, 2, 3, 4, 5\}$.
\end{lemma}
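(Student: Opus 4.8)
This statement is a standard convergence guarantee for inexact (perturbed) gradient descent on a smooth convex objective, and the plan is to follow the argument of~\citet{yin2018byzantine}. Throughout I would condition on the event that the uniform deviation bound~\eqref{eq:proof_delta1} (respectively~\eqref{eq:proof_delta2},~\eqref{eq:proof_delta3},~\eqref{eq:proof_delta4} or~\eqref{eq:proof_delta5}) holds for \emph{every} $\vecw\in\W$; writing $\tilde\vecg(\vecw)=-\vecg(\vecw)/\eta_t$ for the virtual gradient and taking $\eta_t=1/L$, this event reads $\twonms{\tilde\vecg(\vecw)-\gradF(\vecw)}\le\Delta_i$ for all $\vecw\in\W$, with $\Delta_i$ the corresponding error from~\eqref{eq:def-error-delta1}--\eqref{eq:def-error-delta5} (which plays the role of the quantity called $\gamma$ in the preceding discussion). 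As already shown above, under Assumption~\ref{asm:convexity}(b) the iterates stay inside $\W$ and satisfy $\twonms{\vecw^t-\vecw^*}\le 2D_0$ for all $t\le T:=LD_0/\Delta_i$, so the Euclidean projection is never active and it suffices to analyze the plain recursion $\vecw^{t+1}=\vecw^t-\frac{1}{L}\tilde\vecg(\vecw^t)$.

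The first step is a one-step inequality. Put $e^t:=\tilde\vecg(\vecw^t)-\gradF(\vecw^t)$, so $\twonms{e^t}\le\Delta_i$, and combine three elementary facts: the smoothness-based descent inequality~\eqref{eq:smoothness-noise}, i.e. $F(\vecw^{t+1})\le F(\vecw^t)-\frac{1}{2L}\twonms{\gradF(\vecw^t)}^2+\frac{1}{2L}\Delta_i^2$; the convexity inequality $F(\vecw^t)-F(\vecw^*)\le\innerps{\gradF(\vecw^t)}{\vecw^t-\vecw^*}$; and the identity $\innerps{\gradF(\vecw^t)}{\vecw^t-\vecw^*}-\frac{1}{2L}\twonms{\gradF(\vecw^t)}^2=\frac{L}{2}\big(\twonms{\vecw^t-\vecw^*}^2-\twonms{\vecw^t-\vecw^*-\frac{1}{L}\gradF(\vecw^t)}^2\big)$, where $\vecw^t-\frac{1}{L}\gradF(\vecw^t)=\vecw^{t+1}+\frac{1}{L}e^t$. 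Using $\twonms{\vecw^{t+1}-\vecw^*+\frac{1}{L}e^t}^2\ge\twonms{\vecw^{t+1}-\vecw^*}^2-\frac{4\Delta_i D_0}{L}$ together with $\twonms{\vecw^{t+1}-\vecw^*}\le 2D_0$, this yields
\[
F(\vecw^{t+1})-F(\vecw^*)\;\le\;\frac{L}{2}\big(\twonms{\vecw^t-\vecw^*}^2-\twonms{\vecw^{t+1}-\vecw^*}^2\big)+2\Delta_i D_0+\frac{1}{2L}\Delta_i^2 .
\]
Summing over $t=0,\ldots,T-1$ telescopes the distance terms: $\sum_{t=1}^{T}\big(F(\vecw^t)-F(\vecw^*)\big)\le\frac{L}{2}D_0^2+T\big(2\Delta_i D_0+\frac{1}{2L}\Delta_i^2\big)$. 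Plugging in $T=LD_0/\Delta_i$ makes the right side $\bigo(LD_0^2)$, so dividing by $T$ shows that the minimum of the $T$ suboptimality gaps is $\bigo(D_0\Delta_i)$; tracking the absolute constants gives the stated bound $16D_0\Delta_i$, hence the existence of the desired index $t$.

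The main obstacle is not any individual estimate — the pieces are all textbook — but the bookkeeping in the telescoping step: one must keep the iterate distances uniformly controlled by $2D_0$, which is exactly where Assumption~\ref{asm:convexity}(b) and the specific horizon $T=LD_0/\Delta_i$ are used, and check that the perturbation contributions accumulate only \emph{linearly} in $t$, so that the leading $\frac{L}{2}D_0^2$ term, once averaged over $T$ rounds, collapses to the target $D_0\Delta_i$ rate rather than blowing up. A secondary subtlety — relevant only when this lemma is fed into Theorem~\ref{thm:main-gd-sc}(b) — is that it bounds a \emph{best} iterate, not the final one; upgrading to $F(\vecw^T)$ costs one further application of~\eqref{eq:smoothness-noise}, which controls the at-most-$\frac{1}{2L}\Delta_i^2$-per-step growth of the objective from the good iterate up to iteration $T$ and accounts for the extra factor $\big(1+\frac{1}{2L}\Delta_i\big)$ appearing there.
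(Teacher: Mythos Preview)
Your proposal is correct. The paper itself does not prove this lemma; it states it as a direct quotation from~\citet{yin2018byzantine} ``without proof,'' so there is no paper-side argument to compare against beyond that citation. Your telescoping argument---combining~\eqref{eq:smoothness-noise}, convexity, and the polarization identity to obtain a per-step inequality, summing, and dividing by $T=LD_0/\Delta_i$---is a standard and valid route to the claim, and your use of the bound $\twonms{\vecw^t-\vecw^*}\le 2D_0$ (which the paper derives just before stating the lemma) correctly supplies the uniform control needed to absorb the perturbation cross-term. If anything, your constants are better than the stated $16$: your sum yields $\min_t\bigl(F(\vecw^t)-F(\vecw^*)\bigr)\le \tfrac{5}{2}D_0\Delta_i+\tfrac{1}{2L}\Delta_i^2\le 3D_0\Delta_i$ once one uses $\Delta_i\le LD_0$ (implicit in $T\ge 1$). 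Your closing remark about how the lemma feeds into Theorem~\ref{thm:main-gd-sc}(b) also matches exactly what the paper does in the paragraph following the lemma.
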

Next, we show that $F(\vecw^T) - F(\vecw^*) \le  16 D_0\Delta_i + \frac{1}{2L}\Delta_i^2$. More specifically, let $t=t_0$ be the first time that $F(\vecw^{t}) - F(\vecw^*) \le 16 D_0\Delta_i$, and we show that for any $t > t_0$, $F(\vecw^{t}) - F(\vecw^*) \le 16 D_0\Delta_i + \frac{1}{2L}\Delta_i^2$. If this statement is not true, then we let $t_1 > t_0$ be the first time that $F(\vecw^{t}) - F(\vecw^*) > 16 D_0\Delta_i + \frac{1}{2L}\Delta_i^2$. Then there must be $F(\vecw^{t_1-1}) < F(\vecw^{t_1})$. According to~\eqref{eq:smoothness-noise}, there should also be 
$$
F(\vecw^{t_1-1}) - F(\vecw^*) \ge F(\vecw^{t_1}) - F(\vecw^*) - \frac{1}{2L}\Delta_i^2 > 16D_0\Delta_i.
$$
Then, according to the first order optimality of convex functions, for any $\vecw$,
$$
F(\vecw) - F(\vecw^*) \le \innerps{\gradF(\vecw)}{\vecw - \vecw^*} \le \twonms{\gradF(\vecw)}\twonms{\vecw - \vecw^*},
$$
and thus
$$
\twonms{\gradF(\vecw)} \ge \frac{F(\vecw) - F(\vecw^*)}{\twonms{\vecw - \vecw^*}}.
$$
This gives that 
$$
\twonms{\gradF(\vecw^{t_1-1})} \ge \frac{F(\vecw^{t_1-1}) - F(\vecw^*)}{\twonms{\vecw^{t_1-1} -\vecw^* }} > 8\Delta_i.
$$
Then according to~\eqref{eq:smoothness-noise}, this implies $ F(\vecw^{t_1}) \le F(\vecw^{t_1-1}) $,
which contradicts with the fact that $F(\vecw^{t_1-1}) < F(\vecw^{t_1})$.

\paragraph*{Convergence analysis for non-convex  Losses:}
 We condition on the event that~\eqref{eq:proof_delta1}, (\ref{eq:proof_delta2}), (\ref{eq:proof_delta3}), (\ref{eq:proof_delta4}) or (\ref{eq:proof_delta5})   holds for all $\vecw\in\W$ (for Algorithm~\ref{algo:explicit_lowregret},~\ref{algo:filtering}, \ref{algo:GAN} respectively). We first show that when Assumption~\ref{asm:convexity}(c) is satisfied and we choose $\eta_t = \frac{1}{L}$, the iterates $\vecw^t$ stays in $\W$ without using projection. Since we have
$$
\twonms{\vecw^{t+1} - \vecw^*} \le \twonms{\vecw^t - \vecw^*} + \eta_t(\twonms{\gradF(\vecw^t)} + \twonms{\tilde \vecg(\vecw^t) - \gradF(\vecw^t)}) \le \twonms{\vecw^t - \vecw^*}  + \frac{1}{L}(M+\Delta_i).
$$
Then, we know that by running $T = \frac{2L}{\Delta_i^2}(F(\vecw^0) - F(\vecw^*))$ parallel iterations, using Assumption~\ref{asm:convexity}(c), we know that $\vecw^t \in \W$ for $t=0,1,\ldots, T$ without projection.

We proceed to study the convergence rate of the algorithm. By the smoothness of $F(\vecw)$, we know that when choosing $\eta_t = \frac{1}{L}$, the inequality~\eqref{eq:smoothness-noise} still holds. More specifically, for all $t=0,1,\ldots, T-1$,
\begin{equation}\label{eq:smoothness-noise-2}
F(\vecw^{t+1}) - F(\vecw^*) \le F(\vecw^t) - F(\vecw^*) - \frac{1}{2L}\twonms{\gradF(\vecw^t)}^2 + \frac{1}{2L}\Delta_i^2.
\end{equation}
Sum up~\eqref{eq:smoothness-noise-2} for $t=0,1,\ldots, T-1$. Then, we get
$$
0 \le F(\vecw^T) - F(\vecw^*) \le F(\vecw^0) - F(\vecw^*) - \frac{1}{2L}\sum_{t=0}^{T-1}\twonms{\gradF(\vecw^t)}^2 + \frac{T}{2L}\Delta_i^2.
$$
This implies that
$$
\min_{t=0,1,\ldots, T} \twonms{\gradF(\vecw^t)}^2  \le \frac{2L}{T} (F(\vecw^0) - F(\vecw^*)) + \Delta_i^2,
$$
which completes the proof.

\section{Proof of Theorem~\ref{thm:main-gd-improved}}\label{app:proof_improved}

The proof follows the same route as of Theorem~\ref{thm:main-gd-sc}, except that we utilize the result of sub-Gaussian rate for the new procedure. In particular, we rely on the following lemma that appears in~\cite{diakonikolas2020outlier}.
\begin{lemma}[{Proposition 1.6, \cite{diakonikolas2020outlier}}]\label{lem:robust_guarantee}
Assume that $\|\Cov_{p^\star}(\vecx)\|\leq \sigma^2$. Consider the procedure which first randomly bucket $m$ samples from $p^\star$ into $k=\floor{\epsilon m+\log(1/\delta)}$ disjoint buckets of equal size, compute their empirical means $\vecz^1, \cdots, \vecz^k$, and apply~\eqref{eq:robust_subroutine6} or (\ref{eq:robust_subroutine7}) onto the empirical means. Let the output be $\mu$. Then  with probability at least $1-\delta$:
\begin{align*}
    \|\mu-\mu_{p^\star}\|&\leq\bigo\left(\frac{\sigma}{(1-C\epsilon)}\cdot \left({\sqrt    \epsilon} + \sqrt{\frac{d+\log(1/\delta)}{m}}\right)\right).
\end{align*}
\end{lemma}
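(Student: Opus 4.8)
This lemma is a restatement of Proposition~1.6 of~\cite{diakonikolas2020outlier} specialized to the \textsc{No-regret} and \textsc{Filtering} subroutines, so the plan is to assemble the three ingredients that underlie it and then do the parameter bookkeeping. The first ingredient is the \emph{stability-based} analysis of the two estimators, already recorded (via its references) in Theorem~\ref{thm:robust_guarantee}: if \textsc{Filtering} or \textsc{No-regret} is run on a set of $k$ points of which a fraction at most $\epsilon''<c_0$ (a universal breakdown constant) is adversarial, and whose uncorrupted subset has empirical mean $\hat\mu_g$ and empirical covariance of operator norm at most $\tau^2$, then with the termination threshold $\xi$ set on the scale of $\tau^2$ the output $\mu$ satisfies $\|\mu-\hat\mu_g\|_2 = \bigo(\tau\sqrt{\epsilon''}/(1-C\epsilon''))$. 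The subroutine calls in~\eqref{eq:robust_subroutine6} and~\eqref{eq:robust_subroutine7} are calibrated precisely so that $\xi$ matches the operator norm of the empirical covariance of the bucket means computed below (the $\eta_t$ and the extra $n$ appearing there reflect the rescaling used when these subroutines are invoked inside Algorithm~\ref{alg:robust-gd}).

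The second ingredient is the effect of bucketing on the corruption level. Randomly partition $[m]$ into $k$ equal buckets and let $\vecz^1,\dots,\vecz^k$ be the bucket means. Since the buckets are disjoint, each corrupted original sample lands in at most one bucket, so at most $\epsilon m$ of the $k$ buckets are affected; call these \emph{bad} and the remaining $k_g \ge k-\epsilon m$ \emph{good}. Taking $k = \floor{\epsilon m + \log(1/\delta)}$ (enlarging the absolute constant in front of $\epsilon m$ if necessary) makes the effective corruption fraction $\epsilon'' := \#\{\text{bad}\}/k \le \epsilon m/k$ a constant strictly below $c_0$, and ensures $k_g = \Theta(k)$.

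The third and central ingredient is the high-probability \emph{stability} of the good bucket-mean set. Each good bucket mean is the average of $m/k$ i.i.d.\ draws from $p^\star$, hence i.i.d.\ with mean $\mu_{p^\star}$ and covariance $\tfrac{k}{m}\Cov_{p^\star}(\vecx)\preceq \tfrac{k\sigma^2}{m}I$. The concentration lemma of~\cite{diakonikolas2020outlier} --- a median-of-means / Lugosi--Mendelson style statement, proved by a Bernstein bound together with an $\epsilon$-net argument controlling the covariance operator norm --- shows that, with probability at least $1-\delta$, the set $\{\vecz^j : j \text{ good}\}$ is $(\epsilon'',\beta)$-stable with respect to $\mu_{p^\star}$ for $\beta = \bigo\big(\sigma\sqrt{\epsilon} + \sigma\sqrt{(d+\log(1/\delta))/m}\big)$; in particular, on this event its empirical covariance has operator norm $\bigo((d+k)\sigma^2/m)$, which fixes the scale of $\xi$. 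It is precisely the extra $\log(1/\delta)$ buckets that buy the subgaussian $\sqrt{\log(1/\delta)/m}$ rate in place of the naive $\sqrt{d\log(1/\delta)/m}$, and establishing this stability simultaneously in mean and in covariance, at the correct scale, is the main obstacle --- essentially the nontrivial content imported from~\cite{diakonikolas2020outlier}.

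Finally I would combine the pieces: on the $1-\delta$ event, the first ingredient applied with $\tau^2 = \bigo((d+k)\sigma^2/m)$ and corruption $\epsilon'' = \bigo(\epsilon m/k)$ gives $\|\mu-\hat\mu_g\|_2 = \bigo(\tau\sqrt{\epsilon''})$, while stability gives $\|\hat\mu_g-\mu_{p^\star}\|_2 \le \beta$; substituting $k \asymp \epsilon m + \log(1/\delta)$ and $k_g \asymp k$ collapses $\tau\sqrt{\epsilon''}$ to $\bigo(\sigma\sqrt{\epsilon}+\sigma\sqrt{d/m})$, so the triangle inequality yields the stated bound $\bigo\big(\tfrac{\sigma}{1-C\epsilon}(\sqrt{\epsilon}+\sqrt{(d+\log(1/\delta))/m})\big)$, with the $1/(1-C\epsilon)$ factor tracking how close $\epsilon''$ is to the breakdown constant $c_0$.
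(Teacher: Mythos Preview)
The paper does not actually prove this lemma: it is quoted verbatim as Proposition~1.6 of~\cite{diakonikolas2020outlier} and used as a black box, with the surrounding text in Appendix~\ref{app:proof_improved} only saying ``we rely on the following lemma that appears in~\cite{diakonikolas2020outlier}'' before moving on. Your proposal, by contrast, unpacks the cited argument into its three natural pieces (stability guarantee of the base estimator, reduction of the corruption fraction via bucketing, and the median-of-means/Lugosi--Mendelson concentration that yields the subgaussian rate for the good bucket means), which is indeed the structure of the proof in~\cite{diakonikolas2020outlier}. So there is nothing to compare on the paper's side; your sketch simply goes one level deeper than the paper does.

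Your reconstruction is essentially correct. The one point worth flagging explicitly is the arithmetic you already half-noticed: with $k=\floor{\epsilon m+\log(1/\delta)}$ literally, the post-bucketing corruption fraction $\epsilon m/k$ can be arbitrarily close to~$1$ when $\log(1/\delta)\ll \epsilon m$, which would violate the breakdown condition of the base estimator. As you say, this is handled by taking $k=\floor{C_0\epsilon m+\log(1/\delta)}$ for a sufficiently large absolute constant $C_0$ (and indeed the paper itself uses $k=\floor{2\epsilon m+\log(1/\delta)}$ in Theorem~\ref{thm:main-gd-improved}, while the subroutine calls~\eqref{eq:robust_subroutine6}--\eqref{eq:robust_subroutine7} hard-code the post-bucketing corruption level at $0.1$). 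With that constant made explicit, the rest of the bookkeeping --- $\tau^2\asymp (d+k)\sigma^2/m$, $\epsilon''\asymp \epsilon m/k$, and the triangle inequality at the end --- goes through as you wrote.
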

% Besides the robust estimation guarantee above, we also need to show that the new procedure is Lipschitz. 
The rest of the proof follows the same as that in Appendix~\ref{prf:main-gd-sc}. 

\section{Analysis for multi-round local model update}\label{app:multi-round}

In this section, we analyze the FL protocol in Algorithm~\ref{alg:robust-gd} with $k=m$ when $H\geq 2$ total rounds are updated locally before aggregation in central server. Similar analysis can also be applied to convex and non-convex cases, along with the case of $k<m$. %Let $\vecw^{t, i}$ denote the $i$-th local update when the central server updates at round $t$. 

In this case, we have the local update for the $i$-th honest machine be
\begin{align*}
\vecg_i(\vecw^{Ht}) &= -\sum_{s=Ht}^{H(t+1)-1}\eta_t \gradF_i(\vecw_i^s) %\\
  %  \vecg(\vecw^{Ht})& = \frac{1}{m}\sum_{i=1}^m\vecg_i(\vecw^{Ht})
\end{align*}
In this case, we slightly modify our Assumption~\ref{asm:bounded-variance} to consider the $H$-round local model update:
\begin{asm}\label{asm:bounded-variance-FL}
Assume that for any  fixed $\vecw^{Ht}\in\mathcal{W}$ and any $\eta_t<1/(2H), i\in[m]$, $\|\Cov(\vecg_i(\vecw^{Ht}))\|_2 \leq \bigo(\frac{\eta_t^2\sigma_H^2}{n}).$
\end{asm}
Note that the  distribution of $\vecg_j(\vecw^{Ht})$ has complex dependence: for any $l\in[H]$, the randomness of $\vecw^{Ht+l}_i$ depends on both the randomness of noise $\vecz^{i,j}_{Ht+l}$ and the parameter in the previous step $\vecw^{Ht+l-1}_i$. However, one can still verify that in the  simple case of  mean estimation with $f(\vecw; \vecz) = \|\vecw-\vecz\|_2^2$. Assumption~\ref{asm:bounded-variance} implies Assumption~\ref{asm:bounded-variance-FL} with $\sigma_H=H\sigma$. This is from that
\begin{align*}
    \|\Cov(\vecg_i(\vecw^{Ht}))\|_2  & = \eta_t^2\left\|\Cov\left(\frac{1}{n}\sum_{j=1}^{n}\sum_{s=Ht}^{H(t+1)-1}\gradf(\vecw_i^s; \vecz_s^{i, j})\right)\right\|_2 \\ 
    & = 4\eta_t^2\left\|\Cov\left(\frac{1}{n}\sum_{j=1}^{n}\sum_{s=Ht}^{H(t+1)-1}(\vecw_i^s- \vecz_s^{i, j})\right)\right\|_2  \\
    & = 4\eta_t^2\left\|\Cov\left(\frac{1}{n}\sum_{j=1}^{n}\sum_{s=Ht}^{H(t+1)-1}(\vecw_i^{Ht}-\vecz_s^{i, j}-\eta_t \sum_{l=Ht}^s (\vecw^l_i - \vecz_{l}^{i, j})) \right)\right\|_2 \\
    & = 4\eta_t^2\left\|\Cov\left(\frac{1}{n}\sum_{j=1}^{n}\sum_{s=Ht}^{H(t+1)-1}-\vecz_s^{i, j}-\eta_t \sum_{l=Ht}^{s-1} (\vecw^l_i - \vecz_{l}^{i, j}) \right)\right\|_2  \\ 
    & \leq  4\eta_t^2\left\|\Cov\left(\frac{1}{n}\sum_{j=1}^{n}\sum_{s=Ht}^{H(t+1)-1} (\sum_{k=s-Ht}^{H}(\eta_t H)^k)\vecz_s^{i, j} \right)\right\|_2 \\
    & \leq \bigo\left(\frac{\eta_t^2H^2\sigma^2}{n}\right).
\end{align*}
The last inequality is from the independence between $\vecz_s^{i, j}$. Similarly, one can also show that under mild regularization conditions on the parameter space, Assumption~\ref{asm:bounded-variance-FL} is satisfied in linear regression.

Now we are ready to prove the main theorem as below.

\begin{theorem}\label{thm:main-gd-multi}
Let $\vecw_i^t, i\in\{1, 2, 3,4, 5\}$  denote the output of Algorithm~\ref{alg:robust-gd} with $H\geq 2$ and step-size $\eta_t = a/L(t+a)$ at round $t$ when the $\mathsf{RobustEstimation subroutine}$ takes the algorithm in~\eqref{eq:robust_subroutine1},~(\ref{eq:robust_subroutine2}),~(\ref{eq:robust_subroutine3}),~(\ref{eq:robust_subroutine4}),~(\ref{eq:robust_subroutine5}), respectively, where $\sigma^2$ is replaced with $\sigma_H^2$. Here $a=(L+\lambda)/\lambda$.    Under Assumption~\ref{asm:parameter},~\ref{asm:smoothness},~\ref{asm:bounded-variance-FL} and~\ref{asm:convexity}(a), with probability at least $1-\delta$, we have 
\begin{align*}
     \twonms{{\vecw^{HT}} - \vecw^*}\leq \left(\frac{a}{T+a-1}\right)^H \|\vecw^0-\vecw^\star\|_2 +\frac{C_1a^2H^2D}{(T+a-1)^2} +  C_2 \Delta_i,\end{align*}
where
\begin{align}\label{eq:def-error-delta1-multi}
\Delta_1 &:= \bigo\left(\frac{\sigma_H}{(1-6\epsilon)\sqrt{n}}\cdot \left({\sqrt    \epsilon} + \sqrt{\frac{d^2\log(1+nmL)+d\log(d/\delta)}{m}}\right)\right), \\
\label{eq:def-error-delta2-multi}\Delta_2 &:= \bigo\left(\frac{\sigma_H}{(1-2\epsilon)\sqrt{n}}\cdot \left({\sqrt    \epsilon} + \sqrt{\frac{d^2\log(1+nmL)+d\log(d/\delta)}{m}}\right)\right), \\
\label{eq:def-error-delta3-multi}\Delta_3 &:= \bigo\left(\frac{\sigma_H}{({1-C_1\epsilon})\sqrt{n}}\cdot \left({\sqrt    {\epsilon+\sqrt{\frac{d\log(1+nmL)+\log(1/\delta)}{m}}}}\right)\right), \\
\label{eq:def-error-delta4-multi}\Delta_4 &:=\bigo\left(\frac{\sigma_H}{(1-C_2(\epsilon+\log(1/\delta)/m))\sqrt{n}}\cdot \left({\sqrt{\epsilon}}+ \sqrt{\frac{d\log(d)+\log(1+nmL)+\log(1/\delta)}{m}}\right)\right), \\
\label{eq:def-error-delta5-multi}\Delta_5 &:=\bigo\left(\frac{\sigma_H}{(1-C_3(\epsilon+\log(1/\delta)/m))\sqrt{n}}\cdot \left({\sqrt{\epsilon}}+ \sqrt{\frac{d\log(d)+\log(1+nmL)+\log(1/\delta)}{m}}\right)\right),
\end{align}
and $C_i$ are universal constants.
\end{theorem}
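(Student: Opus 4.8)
The plan is to retrace the proof of Theorem~\ref{thm:main-gd-sc} in Appendix~\ref{prf:main-gd-sc}, but with the one-step gradient oracle replaced by the $H$-step local update map $\vecw^{Ht}\mapsto\vecg_i(\vecw^{Ht})$, and with an extra error term accounting for the bias that multiple local steps introduce. First I would establish the robust-aggregation bound. Since the honest machines draw independent samples, the updates $\{\vecg_i(\vecw^{Ht})\}$ are i.i.d.\ across machines, and by Assumption~\ref{asm:bounded-variance-FL} their covariance has spectral norm $\bigo(\eta_t^2\sigma_H^2/n)$; moreover, by $L$-smoothness the map $\vecw^{Ht}\mapsto\vecg_i(\vecw^{Ht})$ is $(1+\eta_tL)^H=\bigo(1)$-Lipschitz (recall $H$ is a fixed constant), and by the stability estimates used in Lemma~\ref{lem:lipschitz} (e.g.\ \citet[Proposition 2.3]{steinhardt2018robust}, \citet[Lemma E.2]{zhu2019generalized}) so is $\vecw^{Ht}\mapsto\vecg(\vecw^{Ht})$. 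Applying Theorem~\ref{thm:robust_guarantee} with $\sigma^2$ replaced by $\sigma_H^2$, combined with a $\gamma$-covering of $\W$ of size $(1+D/\gamma)^d$, a union bound and the choice $\gamma=1/(nmL)$, gives: with probability at least $1-\delta$, uniformly over $\vecw^{Ht}\in\W$,
\[
  \bigl\|\vecg(\vecw^{Ht}) - \Expect[\vecg_i(\vecw^{Ht})]\bigr\|_2 \;\le\; \eta_t\,\Delta_i ,
\]
where the $\log(1+nmL)$ factors in~\eqref{eq:def-error-delta1-multi}--\eqref{eq:def-error-delta5-multi} come from the covering number.

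Second, I would control the multi-step bias. Starting each round from the synchronized iterate $\vecw^{Ht}$, define the ``virtual'' population trajectory $\vecy^0=\vecw^{Ht}$, $\vecy^{l+1}=\vecy^l-\eta_t\gradF(\vecy^l)$ for $l=0,\dots,H-1$. By $L$-smoothness the local iterates satisfy $\twonms{\vecw_i^{Ht+l}-\vecw^{Ht}}\le \bigo(\eta_t H(\twonms{\gradF(\vecw^{Ht})}+\textup{noise}))$, which is $\bigo(\eta_t H L D)$ on the compact set $\W$; feeding this into a step-by-step (Gr\"onwall-type) comparison of the two trajectories, and using that the first local step is noiseless because all machines start at $\vecw^{Ht}$, yields $\Expect[\vecg_i(\vecw^{Ht})]=(\vecy^H-\vecw^{Ht})+\vecb_t$ with $\twonms{\vecb_t}\le\bigo(\eta_t^2H^2L^2D)$. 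Meanwhile the co-coercivity argument from Appendix~\ref{prf:main-gd-sc} (Lemma 3.11 of~\cite{bubeck2015convex}), applied to each of the $H$ substeps with step size $\eta_t=a/(L(t+a))$ and $a=(L+\lambda)/\lambda$, gives $\twonms{\vecy^H-\vecw^*}\le (1-\tfrac{1}{t+a})^H\twonms{\vecw^{Ht}-\vecw^*}$, valid once $\eta_tH\le 1/(2L)$, i.e.\ for $t\gtrsim aH$; for the earlier ``burn-in'' rounds I would bound $\twonms{\vecw^{Ht}-\vecw^*}\le D$ trivially. Compactness of $\W$ and the non-expansiveness of $\Pi_\W$ also ensure the iterates remain in $\W$, so the projection never hurts.

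Third, combining the above with $\widehat{\vecw}^{H(t+1)}=\vecw^{Ht}+\vecg(\vecw^{Ht})$ and $\twonms{\vecw^{H(t+1)}-\vecw^*}\le\twonms{\widehat{\vecw}^{H(t+1)}-\vecw^*}$ gives the per-round recursion
\[
  \twonms{\vecw^{H(t+1)}-\vecw^*}\;\le\;\Bigl(1-\tfrac{1}{t+a}\Bigr)^{H}\twonms{\vecw^{Ht}-\vecw^*}+\bigo(\eta_t^2H^2L^2D)+\eta_t\Delta_i .
\]
Unrolling this and using $\prod_{s=t+1}^{T}(1-\tfrac{1}{s+a})^H=\bigl(\tfrac{t+a}{T+a-1}\bigr)^H$ (up to indexing): the initial condition contributes $\bigl(\tfrac{a}{T+a-1}\bigr)^H\twonms{\vecw^0-\vecw^*}$; the burn-in rounds together with the $\eta_t^2H^2$ bias terms telescope (using $\sum_{t\le T}(t+a)^{H-2}\asymp (T+a)^{H-1}$ for $H\ge2$, and $D\,(aH/(T+a))^H\lesssim a^2H^2D/(T+a)^2$) to the $C_1 a^2H^2D/(T+a-1)^2$ term; and the robust-aggregation errors telescope (via $\sum_t (t+a)^{H-1}\asymp (T+a)^H/H$ and $a/L=\bigo(1/\lambda)$) to the $C_2\Delta_i$ term. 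The main obstacle I anticipate is the bias control in the second step: the intermediate local iterates are random with intricate cross-step dependence (Assumption~\ref{asm:bounded-variance-FL} is tailored precisely to bypass this at the level of the covariance), so one must argue carefully that the drift bias and the burn-in contribution together collapse into the claimed $\bigo(a^2H^2D/(T+a-1)^2)$ order under the harmonic step-size schedule; the rest is a routine adaptation of the $H=1$ arguments.
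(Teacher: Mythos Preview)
Your proposal is correct and follows essentially the same route as the paper: introduce the virtual population-gradient trajectory (the paper's $\bar\vecw^{Ht+l}$, your $\vecy^l$), decompose the per-round error into the contraction $\twonms{\bar\vecw^{H(t+1)}-\vecw^*}$, the robust-aggregation error $\twonms{\vecg(\vecw^{Ht})-\Expect[\vecg_i(\vecw^{Ht})]}\le\eta_t\Delta_i$, and the multi-step bias $\twonms{\Expect[\vecg_i(\vecw^{Ht})]-(\bar\vecw^{H(t+1)}-\vecw^{Ht})}\le H^2\eta_t^2L^2D$ obtained by iterating a one-step drift bound; then unroll the recursion with the harmonic step size. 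One simplification: your burn-in worry is unnecessary, since the paper applies the co-coercivity contraction to each of the $H$ substeps individually, requiring only $\eta_t\le 1/L$ (always satisfied by $\eta_t=a/(L(t+a))$), not $\eta_t H\le 1/(2L)$; the telescoping is then done by multiplying the recursion by $(t+a)^{H-1}/\eta_t$, which is equivalent to your product formula.
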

\begin{proof}
We condition on the event that the bound above is satisfied for all $\vecw\in\W$. Then, in the $t$-th iteration, let the updated parameter before projection step be $
\widehat{\vecw}^{H(t+1)} = \vecw^{Ht} +  \vecg(\vecw^{Ht})$ (note that $\vecg$ is the local model update which can be viewed as negative of gradient).
Thus, we have $\vecw^{H(t+1)} = \Pi_\W(\widehat{\vecw}^{H(t+1)})$.
For the sake of analysis, we create an artificial sequence of $\bar \vecw^{Ht+l}, l\in[H]$, such that $\bar \vecw^{Ht} = \vecw^{Ht}$, $\bar \vecw^{Ht+l+1} = \bar\vecw^{Ht+l} - \eta_t \gradF(\bar\vecw^{Ht+l})$, $\forall l\in[H]$.
From the property of Euclidean projection, we know that 
\begin{align}
   \twonms{\vecw^{H(t+1)} - \vecw^*} & \le \twonms{\widehat{\vecw}^{H(t+1)} - \vecw^*} \nonumber\\ 
   & = \twonms{\vecw^{Ht} +  \vecg(\vecw^{Ht}) - \vecw^*} \nonumber \\
   & \leq \twonms{\vecw^{Ht} - \eta_t\sum_{l=0}^{H-1}\gradF(\bar\vecw^{Ht+l}) - \vecw^*} + \twonms{-\vecg(\vecw^{Ht}) -\eta_t\sum_{l=0}^{H-1}\gradF(\bar\vecw^{Ht+l})} \nonumber \\
   & = \twonms{\bar \vecw^{Ht} - \eta_t\sum_{l=0}^{H-1}\gradF(\bar \vecw^{Ht+l}) - \vecw^*} + \twonms{-\vecg(\vecw^{Ht}) -\eta_t\sum_{l=0}^{H-1}\gradF(\bar\vecw^{Ht+l})} \nonumber \\ 
   & =\twonms{\bar \vecw^{H(t+1)} - \vecw^\star} + \twonms{-\vecg(\vecw^{Ht}) -\eta_t\sum_{l=0}^{H-1}\gradF(\bar\vecw^{Ht+l})}\label{eq:converge-1-multi}
\end{align}
Following the same analysis as in~\eqref{eq:iter-linear-decay}, we know that when $\eta_t\leq 1/L$, for any $l\in[H]$,
\begin{align*}
    \twonms{\bar \vecw^{Ht+l+1}  - \vecw^*} \leq (1-\frac{\eta_tL\lambda}{L+\lambda})  \twonms{\bar \vecw^{Ht+l}  - \vecw^*}
\end{align*}
Thus we know that
\begin{align}
      \twonms{\bar \vecw^{H(t+1)}  - \vecw^*} \leq (1-\frac{\eta_tL\lambda}{L+\lambda})^H  \twonms{\bar \vecw^{Ht}  - \vecw^*} \label{eq:first_term_multiple}
\end{align}
For the second term, we know that
\begin{align*}
\twonms{-\vecg(\vecw^{Ht}) -\eta_t\sum_{l=0}^{H-1}\gradF(\bar\vecw^{Ht+l})} 
& \leq \twonms{\vecg(\vecw^{Ht}) - \mathbb{E}[\vecg_i(\vecw^{Ht}) |\vecw^{Ht}]}
\\& \quad + \twonms{-\mathbb{E}[\vecg_i(\vecw^{Ht}) |\vecw^{Ht}] - \eta_t\sum_{l=0}^{H-1}\gradF(\bar\vecw^{Ht+l})}.
\end{align*}
From Assumption~\ref{asm:bounded-variance-FL} and Theorem~\ref{thm:robust_guarantee}, we know that the first term satisfies w.p. $1-\delta$
\begin{align}
     \twonms{\vecg(\vecw^{Ht}) - \mathbb{E}[\vecg_i(\vecw^{Ht}) |\vecw^{Ht}]} \leq \Delta_i.\label{eq:statistical_multiple}
\end{align}
For the second term, we have
\begin{align*}
    \twonms{\mathbb{E}[-\vecg_i(\vecw^{Ht}) |\vecw^{Ht}] - \eta_t\sum_{l=0}^{H-1}\gradF(\bar\vecw^{Ht+l})} = \twonms{\eta_t\sum_{l=0}^{H-1}(\mathbb{E}[ \gradF_i(\vecw_i^{Ht+l})|\vecw^{Ht}] - \gradF(\bar\vecw^{Ht+l}))} 
\end{align*}
Note that for any $l\in[H-1]$, we have
\begin{align}
    \twonms{\mathbb{E}[ \gradF_i(\vecw_i^{Ht+l+1})|\vecw^{Ht}] - \gradF(\bar\vecw^{Ht+l+1})} & = \twonms{\mathbb{E}[ \gradF_i(\vecw_i^{Ht+l}-\eta_t\gradF_i(\vecw_i^{Ht+l}))|\vecw^{Ht}] \nonumber \\ 
    & \quad - \gradF(\bar\vecw^{Ht+l}-\eta_t\gradF(\bar \vecw^{Ht+l}))} \nonumber\\ &\stackrel{(i)}{\leq} \twonms{\mathbb{E}[ \gradF_i(\vecw_i^{Ht+l})|\vecw^{Ht}] - \gradF(\bar\vecw^{Ht+l})} \nonumber \\ 
    & \quad  +\eta_tL(\twonms{\gradF(\bar \vecw^{Ht+l})}+\twonms{\mathbb{E}[\gradF_i(\vecw_i^{Ht+l})|\vecw_i^{Ht}]})\nonumber \\ 
    & \stackrel{(ii)}{\leq} \twonms{\mathbb{E}[ \gradF_i(\vecw_i^{Ht+l})|\vecw^{Ht}] - \gradF(\bar\vecw^{Ht+l})} +2\eta_tL^2D. \label{eq:iterative-multiple}
\end{align}
Here (i) is from the Lipschitzness of $\mathbb{E}[\gradF_i(\vecw)]$ and $\gradF(\vecw)$, and (ii) uses the fact that the parameter space is compact with diameter $D$, and the function is smoothness with minimizer inside $\mathcal{W}$, as in Assumption~\ref{asm:parameter},~\ref{asm:smoothness}, and~\ref{asm:convexity}(a).

By iteratively applying~\eqref{eq:iterative-multiple}, we know that
\begin{align*}
      \twonms{\mathbb{E}[ \gradF_i(\vecw_i^{Ht+l+1})|\vecw^{Ht}] - \gradF(\bar\vecw^{Ht+l+1})} &\leq \twonms{\mathbb{E}[ \gradF_i(\vecw_i^{Ht})|\vecw^{Ht}] - \gradF(\bar\vecw^{Ht})} +2l\eta_tL^2D \\ 
      & = 2l\eta_tL^2D.
\end{align*}
The last equality uses the fact that $\gradF_i(\vecw)$ is unbiased.
This leads to 
\begin{align}\label{eq:second_term_multiple}
    \twonms{\mathbb{E}[\vecg_i(\vecw^{Ht}) |\vecw^{Ht}] - \eta_t\sum_{l=0}^{H-1}\gradF(\bar\vecw^{Ht+l})} & = \twonms{\eta_t\sum_{l=0}^{H-1}(\mathbb{E}[ \gradF_i(\vecw_i^{Ht+l})|\vecw^{Ht}] - \gradF(\bar\vecw^{Ht+l}))} \nonumber \\ 
    & \leq H^2\eta_t^2L^2D.
\end{align}
Combining~\eqref{eq:converge-1-multi}, (\ref{eq:first_term_multiple}), (\ref{eq:statistical_multiple}), and~(\ref{eq:second_term_multiple}), we get
\begin{equation}\label{eq:each-iter-sc-multiple}
\twonms{\vecw^{H(t+1)} - \vecw^*} \le (1-\frac{\eta_t L\lambda}{L + \lambda})^H\twonms{\vecw^{Ht} - \vecw^*} + \eta_t\Delta_i + H^2\eta_t^2L^2D.
\end{equation}

By taking $\eta_t= \frac{a}{L(t+a)}$
where $a=\frac{L+\lambda}{\lambda}$, we can verify that $\eta_t\leq 1/L$. Now we multiply $(t+a)^{H-1}/\eta_t$ on both sides, which gives
\begin{align*}
    \frac{(t+a)^{H-1}\twonms{{\vecw^{H(t+1)}} - \vecw^*}}{\eta_t} &\le (\frac{t+a-1}{t+a})^H\frac{(t+a)^{H-1}}{\eta_t}\twonms{\vecw^{Ht} - \vecw^*} + (t+a)^{H-1}\Delta_i + (t+a)^{H-1}\eta_t H^2L^2D \\ 
    & = \frac{(t+a-1)^{H-1}}{\eta_{t-1}}\twonms{\vecw^{Ht} - \vecw^*} + (t+a)^{H-1}\Delta_i + (t+a)^{H-1}\eta_t H^2L^2D.
\end{align*}
By recursively applying the above inequality, we get
\begin{align*}
     \frac{(T+a-1)^{H-1}\twonms{{\vecw^{HT}} - \vecw^*}}{\eta_{T-1}}\leq {a^{H-1}L} \|\vecw^0-\vecw^\star\|_2 + (\sum_{t=1}^{T-1} (t+a)^{H-1} \Delta_i + {a(t+a)^{H-2}H^2LD}).
\end{align*}
Rearranging the above formula gives
\begin{align*}
     \twonms{{\vecw^{HT}} - \vecw^*}\leq \left(\frac{a}{T+a-1}\right)^H \|\vecw^0-\vecw^\star\|_2 +\frac{C_1a^2H^2D}{(T+a-1)^2} +  C_2 \Delta_i.
\end{align*}

\end{proof}

\vspace{-5pt}
\section{Incorporating the Privacy Guarantee}\label{sec:privacy}
\vspace{-5pt}

Secure aggregation techniques~\cite{bonawitz2017practical,bell2020secure} are designed to conceal individual client's updates and reveal only the aggregated global update to a semi-honest server that attempts to infer the clients' privacy from their updates.
In addition to Byzantine robustness, we aim to achieve privacy guarantees. 
Orchestrating robust FL estimators (to mitigate malicious clients) with secure aggregation schemes (to mitigate semi-honest servers) is challenging, as robust estimators require access to local updates, whereas secure aggregation schemes normally hide them from the server.
Consequently, most de facto FL protocols cannot protect both the server and the clients \textit{simultaneously}, but must aim to satisfy only one of the two criteria.
Two recent works~\cite{so2020byzantine,he2020secure} attempt to address this challenge through the use of secure multi-party computation (MPC) but they either incur high communication cost or rely on non-colluding servers. This does not fit in the FL setting (see Appendix~\ref{sec:related_work} for further discussion).
Thus we are motivated to consider the following question:
\begin{quote}
    \emph{Can we achieve robustness and privacy simultaneously in FL protocols?}
\end{quote}
% Privacy is another concern in Federated Learning. We assume that the centralized
% server $\mathcal{S}$ is semi-honest: the server can launch whatever attacks it
% wishes, including inference attacks, using only legitimate updates from the
% clients as inputs. The server, however, cannot deviate from the protocol for
% regulatory or reputational pressure. 
% This makes a semi-honest server highly
% stealthy. 
% The assumption on the semi-honest server is consistent with the convention in this line of
% work~\cite{yang2019federated,bonawitz2017practical,mohassel2017secureml}. 
% 
\revision{
The idea of bucketing can also be used to reconcile robust FL with secure aggregation. Similar idea of adding secure aggregation in the bucketing procedure has also been proposed and analyzed in~\cite{velicheti2021secure, burkhalter2021rofl}. }
Secure aggregation~\cite{bonawitz2017practical,bell2020secure} is a class of cryptographic protocols designed to augment user privacy in FL.
Concretely, secure aggregation conceals all the unnecessary information (\emph{i.e.} the gradient from each client) from the server and only reveals the minimum information needed to update the global model (\emph{i.e} the summed gradient).
Thus, it was considered to be incompatible with robust FL because robust FL requires each client's gradient to filter out the malicious ones.

However, with the idea of bucketing, we no longer require all the client's gradients.
Instead, we only need the sums of gradients for the buckets.
This leaves space for secure aggregation within each bucket and thus reconciles the two originally incompatible techniques to support bidirectional protection in federated learning. 
%
% For more details, please refer to Appendix~\ref{app:privacy}.
%
A subtlety is that in order to apply secure aggregation, we need to clip and quantize the clients' local updates to fit them in a finite field.
As a result, the operator norm of the clients' updates are changed. 
However, it is easy to show that the difference will only be in the constants so the asymptotic results will remain the same.

\revision{
\begin{theorem}
[Security against semi-honest server]
Let $\Pi$ be an instantiation of \system, there exists a PPT (probabilistic polynomial Turing machine) simulator $\textsc{Sim}$ which can only see the averaged updates from the shards. For all clients $\mathcal{C}$, the output of $\textsc{Sim}$ is computationally indistinguishable from the view of that real server $\Pi_\mathcal{C}$ in that execution, i.e., $\Pi_\mathcal{C} \approx \textsc{Sim}(\{ \textbf{g}_t^{H_j} \}_{j \in [p]})$.

\label{cor:sec}

\end{theorem}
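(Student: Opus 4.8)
The plan is to reduce security of \system against a semi-honest server to the simulation-security of the underlying secure-aggregation primitive via a standard hybrid argument. Recall that in each round $t$ the bucketing step partitions the $m$ client updates into $p$ shards $H_1,\dots,H_p$, and within each shard the participating clients run a secure-aggregation protocol (e.g.\ \citet{bonawitz2017practical,bell2020secure}) so that the server obtains only the shard sum, equivalently the shard average $\textbf{g}_t^{H_j}$; the server then applies the public, deterministic $\mathsf{RobustEstimationSubroutine}$ to $\{\textbf{g}_t^{H_j}\}_{j\in[p]}$ followed by the projection $\Pi_\W$. We take as given that the secure-aggregation primitive is simulation-secure against a semi-honest server: there exists a PPT $\textsc{Sim}_{\mathrm{SA}}$ that, on input a shard sum (and the shard's public dropout/participation set), outputs a transcript computationally indistinguishable from the server's real view of that shard's aggregation.

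First I would construct the global simulator $\textsc{Sim}$. On input $\{\textbf{g}_t^{H_j}\}_{j\in[p],\,t\in[T]}$ it proceeds round by round: for each $t$ and each shard $j$ it calls $\textsc{Sim}_{\mathrm{SA}}$ with fresh independent randomness on $\textbf{g}_t^{H_j}$ to obtain a simulated shard transcript, concatenates these, and then appends the result of running the robust-estimation and projection steps on $\{\textbf{g}_t^{H_j}\}_j$ together with the (publicly known) broadcast parameter $\vecw^{t}$. Since the robust-estimation subroutine and the model update depend only on the shard averages and public parameters, they constitute perfect post-processing: whatever the real server computes downstream of the shard averages is reproduced exactly by $\textsc{Sim}$, so no additional reduction is needed for those steps.

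Next I would establish indistinguishability by a hybrid argument over the $pT$ secure-aggregation instances. Fix an ordering of the $(t,j)$ pairs and define hybrid $\mathcal{H}_k$ in which the first $k$ shard-aggregation transcripts are produced by $\textsc{Sim}_{\mathrm{SA}}$ while the remaining ones are the real transcripts; then $\mathcal{H}_0$ is the real view $\Pi_\mathcal{C}$ and $\mathcal{H}_{pT}$ is $\textsc{Sim}(\{\textbf{g}_t^{H_j}\}_{j\in[p]})$. A distinguisher separating $\mathcal{H}_{k-1}$ from $\mathcal{H}_k$ with non-negligible advantage yields, by embedding the external secure-aggregation challenge into the $k$-th slot and generating all other slots (real or simulated) and all post-processing internally, a distinguisher against the secure-aggregation primitive; hence consecutive hybrids are computationally close, and a union bound gives total advantage at most $pT$ times a negligible function, i.e.\ negligible. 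This yields $\Pi_\mathcal{C}\approx\textsc{Sim}(\{\textbf{g}_t^{H_j}\}_{j\in[p]})$.

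Finally, two subtleties warrant a remark rather than real work. The clipping and quantization mapping local updates into the finite field only alter the inputs fed to the primitive (and the additive constants in the operator-norm bound, as already noted), not the reduction itself, since $\textsc{Sim}_{\mathrm{SA}}$ operates on whatever well-formed field elements are aggregated. The part requiring genuine care is the composition bookkeeping: one must ensure the randomness of the $pT$ instances is mutually independent and that within-shard dropout patterns are simulatable from the shard identity alone — both hold for the primitives of \citet{bonawitz2017practical,bell2020secure}, whose simulators already accept the dropout set as public input. I expect this cross-round, cross-shard hybrid accounting to be the only nontrivial point; the remainder is routine post-processing composition and a textbook reduction.
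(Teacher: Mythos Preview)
Your proposal is correct but takes a genuinely different route from the paper. The paper does not treat secure aggregation as a black box: instead it opens up the specific pairwise-masking scheme of \citet{bonawitz2017practical} and reduces the theorem to their Lemma~6.1, which asserts that within a shard the masked updates are \emph{identically distributed} to uniformly random vectors conditioned on the correct sum. It then proves that lemma by induction on the shard size (base case $n=2$, inductive step from $k$ to $k+1$). Because the per-shard guarantee is perfect (information-theoretic) rather than merely computational, the cross-shard and cross-round composition is immediate and the paper leaves it implicit rather than running a hybrid argument. Your approach, by contrast, invokes the primitive's simulation security abstractly and carries out an explicit $pT$-step hybrid; this is more modular (it would work verbatim for any simulation-secure aggregation primitive, including \citet{bell2020secure}, not just the specific masking scheme) and makes the composition bookkeeping explicit, at the cost of yielding only computational indistinguishability and introducing a $pT$ security loss that the paper's perfect-security route avoids entirely.
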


% \vspace{-1.5em}
\begin{proof}[Proof for Theorem~\ref{cor:sec}]
The transcript of the server is the updates from the sharded clients $\{ \textbf{g}_t^{H_j} \}_{j \in [p]}$.
Hence, Theorem~\ref{cor:sec} is equivalent to the following lemma since the $\textsc{Sim}$ can split the aggregated updates into several random shards which is computationally indistinguishable from the true transcript. 

\begin{lemma}[Lemma 6.1 in~\cite{bonawitz2017practical}]
Given any shard $H_k$ formed by a set of clients $C_k$, the parameter size $d$, the group size $q$, and the updates $\textbf{g}^{(i)}$ where $\forall i\in C_k$, $\textbf{g}^{(i)}\in \mathbb{Z}^d_q$, we have
\begin{equation*}
\begin{split}
    &\{\{\textbf{u}_{ij}\overset{\$}{\leftarrow}\mathbb{Z}_q^d\}_{i<j}, \textbf{u}_{ij}\coloneqq -\textbf{u}_{ji} \; \forall \; i,j \in C_k, \; i > j \quad : \quad \{\textbf{g}^{(i)}+\sum_{j\in C_k \slash i}\textbf{u}_{ij} \pmod{q}\}_{i\in C_k}\}
    \\ & \equiv \; \{\{\textbf{v}_i\overset{\$}{\leftarrow}\mathbb{Z}_q^d\}_{i\in C_k}~s.t.~ \sum_{i\in C_k}\textbf{v}_i=\sum_{i\in C_k}\textbf{g}^{(i)} \pmod{q} \quad : \quad \{\textbf{v}_{i}\}_{i\in C_k}\}
\end{split}
\end{equation*}

% \vspace{-0.4em}
\noindent, where $\textbf{u}_{ij}$ is the random mask shared between client $i$ and $j$, $\overset{\$}{\leftarrow}$ donates uniformly sampling from some field, and $\equiv$ denotes that the distributions are identical.

\label{lem:sec}
\vspace{-5pt}
\end{lemma}

Lemma~\ref{lem:sec} illustrates that the distribution of updates with random
masks added is identical to uniformly sampling from $\mathbb{Z}_q^d$. Thus,
individual clients' updates are securely concealed behind the random masks added
by secure aggregation, and a semi-honest server can infer zero information about 
individual clients from the aggregated updates alone.
In the following, we prove Lemma~\ref{lem:sec} via induction on $n$, where $n$
is the size of the clients set $C_k$, $n = |C_k|$.

\noindent \textit{Base Case:}~When $n=2$, assume $C_k = \{i, j\}$, $i < j$, and $\sum_{i\in C_k}\textbf{g}^{(i)} \pmod{q} = c$, $c$ is a constant. The first elements of two distributions are $\textbf{g}^{(i)} + \textbf{u}_{ij} \pmod{q}$ and $\textbf{v}_{i}$, respectively, both of which are uniformly random sampled from $\mathbb{Z}_q^d$.
The second elements are $\textbf{g}^{(j)} + \textbf{u}_{ji} \pmod{q} = c - (\textbf{g}^{(i)} + \textbf{u}_{ij}) \pmod{q}$ and $\textbf{v}_{j} = c - \textbf{v}_{i} \pmod{q}$, respectively, which are the sum $c$ minus the corresponding first elements.
As a result, the distributions are identical.

\noindent \textit{Inductive Hypothesis:}
When $n=k$, the lemma holds.

\noindent \textit{Inductive Step:}
% If we ignore all the randomness from and for client $k+1$, 
According to the inductive hypothesis, the left and right distributions of the first $k$ clients are indistinguishable.
We follow the protocol to generate the left transcript when the $(k+1)^{th}$ client is added to the shard.
To deal with the right-hand-side transcript, we first add the same randomness as for the left-hand-side to the first $k$ updates and then subtract them from the total sum to obtain the $(k+1)^{th}$ update.
It's easy to prove that the first $k$ updates on both sides follow the
same uniformly random distribution, and that the $(k+1)^{th}$ update is the
difference between the total sum and the sum of the first $k$ updates.
Hence, the left and right transcripts are indistinguishable.

\end{proof}

\vspace{-0.5em}
In case the readers are not familiar with the simulation proof technique, please refer to~\cite{Lindell2017} for more information about simulation-based security proof.
}
\newpage
\section{Evaluation on Breaking Points}
\label{sec:append-breaking-point}

To empirically evaluate the breaking point of the proposed robust aggregators, we test \textsc{Filtering} and \textsc{No-Regret} on MNIST with different number of malicious clients as shown in Figure~\ref{fig:breaking_point}.
We can tell that when $\epsilon<=0.4$, both \textsc{Filtering} and \textsc{No-Regret} works well.
Only when $\epsilon$ reaches $0.5$, \textsc{No-Regret} breaks down because the number of malicious clients is the same as the benign ones and no robust aggregator can work under such settings.

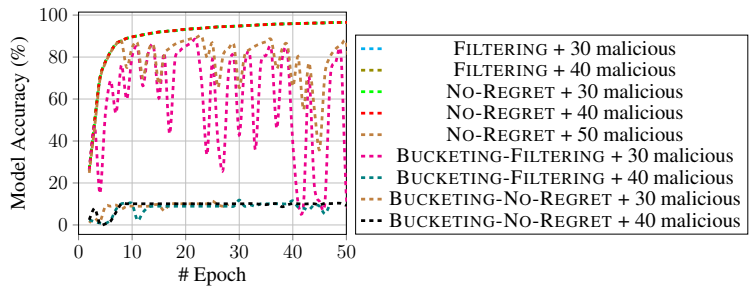
\begin{figure}[!h]
    \centering
    \resizebox{0.6\textwidth}{!}{\begin{tikzpicture}
\begin{axis}[
  set layers,
  grid=major,
  xmin=0, xmax=50,
  % xmode=log, log basis x={2},
  ytick align=outside, ytick pos=left,
  xtick align=outside, xtick pos=left,
  xlabel=\# Epoch,
  ylabel={Model Accuracy (\%)},
  legend pos=south east,
  enlarge y limits=0.05,
  legend style={at={(2.5,0)}}
  ]

\addplot+[
  cyan, dashed, mark=none, line width=2pt, %mark options={scale=0.75},
  smooth, 
  error bars/.cd, 
    y fixed,
    y dir=both, 
    y explicit
] table [x expr=\thisrow{x}+1, y=y, col sep=comma] {data/breaking_point/MNIST/trimmedmean_filterl2_30.txt};
\addlegendentry{\textsc{Filtering} + 30 malicious}

\addplot+[
  olive, dashed, mark=none, line width=2pt, %mark options={scale=0.75},
  smooth, 
  error bars/.cd, 
    y fixed,
    y dir=both, 
    y explicit
] table [x expr=\thisrow{x}+1, y=y, col sep=comma] {data/breaking_point/MNIST/trimmedmean_filterl2_40.txt};
\addlegendentry{\textsc{Filtering} + 40 malicious}

\addplot+[
  green, dashed, mark=none, line width=2pt, %mark options={scale=0.75},
  smooth, 
  error bars/.cd, 
    y fixed,
    y dir=both, 
    y explicit
] table [x expr=\thisrow{x}+1, y=y, col sep=comma] {data/breaking_point/MNIST/trimmedmean_ex_noregret_30.txt};
\addlegendentry{\textsc{No-Regret} + 30 malicious}

\addplot+[
  red, dashed, mark=none, line width=2pt, %mark options={scale=0.75},
  smooth, 
  error bars/.cd, 
    y fixed,
    y dir=both, 
    y explicit
] table [x expr=\thisrow{x}+1, y=y, col sep=comma] {data/breaking_point/MNIST/trimmedmean_ex_noregret_40.txt};
\addlegendentry{\textsc{No-Regret} + 40 malicious}

\addplot+[
  brown, dashed, mark=none, line width=2pt, %mark options={scale=0.75},
  smooth, 
  error bars/.cd, 
    y fixed,
    y dir=both, 
    y explicit
] table [x expr=\thisrow{x}+1, y=y, col sep=comma] {data/breaking_point/MNIST/trimmedmean_ex_noregret_50.txt};
\addlegendentry{\textsc{No-Regret} + 50 malicious}

\addplot+[
  magenta, dashed, mark=none, line width=2pt, %mark options={scale=0.75},
  smooth, 
  error bars/.cd, 
    y fixed,
    y dir=both, 
    y explicit
] table [x expr=\thisrow{x}+1, y=y, col sep=comma] {data/breaking_point/MNIST/trimmedmean_mom_filterl2_30.txt};
\addlegendentry{\textsc{Bucketing-Filtering} + 30 malicious}

\addplot+[
  teal, dashed, mark=none, line width=2pt, %mark options={scale=0.75},
  smooth, 
  error bars/.cd, 
    y fixed,
    y dir=both, 
    y explicit
] table [x expr=\thisrow{x}+1, y=y, col sep=comma] {data/breaking_point/MNIST/trimmedmean_mom_filterl2_40.txt};
\addlegendentry{\textsc{Bucketing-Filtering} + 40 malicious}

\addplot+[
  brown, dashed, mark=none, line width=2pt, %mark options={scale=0.75},
  smooth, 
  error bars/.cd, 
    y fixed,
    y dir=both, 
    y explicit
] table [x expr=\thisrow{x}+1, y=y, col sep=comma] {data/breaking_point/MNIST/trimmedmean_mom_ex_noregret_30.txt};
\addlegendentry{\textsc{Bucketing-No-Regret} + 30 malicious}

\addplot+[
  black, dashed, mark=none, line width=2pt, %mark options={scale=0.75},
  smooth, 
  error bars/.cd, 
    y fixed,
    y dir=both, 
    y explicit
] table [x expr=\thisrow{x}+1, y=y, col sep=comma] {data/breaking_point/MNIST/trimmedmean_mom_ex_noregret_40.txt};
\addlegendentry{\textsc{Bucketing-No-Regret} + 40 malicious}

% \legend{};

\end{axis}
\end{tikzpicture}}
    \caption{Evaluation w/ different number of malicious clients.}
    \label{fig:breaking_point}
\end{figure}
\section{Evaluation of Sever}
\label{sec:append-sever}

We also evaluate the performance of Sever~\cite{diakonikolas2019sever} in federated learning.
Sever requires that the clients' updates are from a $\gamma$-approximate learner.
Thus, we invoke Sever after the model converges (i.e., Epoch 20 in \F~\ref{fig:sever_evaluation}).
Sever computes outlier scores for the clients' updates and filters the malicious clients.
And then, we retrain the model and invoke Sever after convergence till we reach a stationary point.
We evaluate the performance of Sever on MNIST under MPA attack. 
As shown in \F~\ref{fig:sever_evaluation}, Sever cannot successfully mitigate MPA.

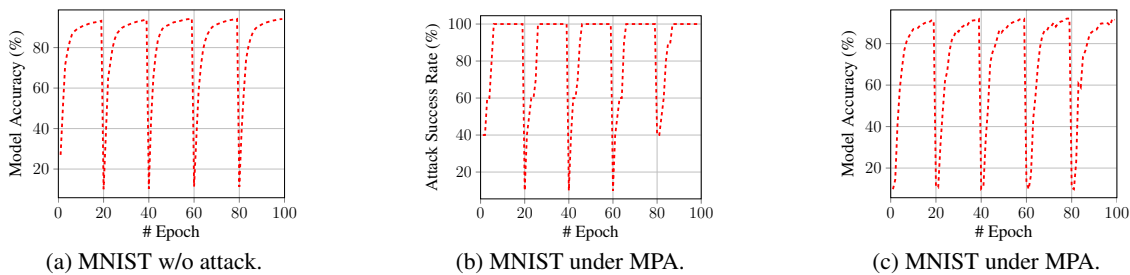
\begin{figure*}[!htbp]
% \vspace{-5pt}
     \centering
     \hfill
     \begin{subfigure}[b]{\figwidth}
         \centering
         \resizebox{\textwidth}{!}{\begin{tikzpicture}
\begin{axis}[
  % set layers,
  grid=major,
  xmin=0, xmax=100,
  % xmode=log, log basis x={2},
  ytick align=outside, ytick pos=left,
  xtick align=outside, xtick pos=left,
  xlabel=\# Epoch,
  ylabel={Model Accuracy (\%)},
  enlarge y limits=0.05
  % legend pos=south east,
  % legend style={at={(1.5,0.4)}, anchor=north, nodes={scale=0.75, transform shape}},
  ]

\addplot+[
  red, dashed, mark=none, line width=1.6pt, %mark options={scale=0.75},
  % smooth, 
  error bars/.cd, 
    y fixed,
    y dir=both, 
    y explicit
] table [x expr=\thisrow{x}, y=y, col sep=comma] {data/main/MNIST/noattack_sever.txt};
\addlegendentry{Sever}

\legend{};

% \spy[blue!70!black] on (5.25,5.2) in node [fill=white] at (3.75,1.5);

\end{axis}
\end{tikzpicture}}
         \caption{MNIST w/o attack.}
         \label{fig:sever_mnist_noattack}
     \end{subfigure}
     \hfill
     \begin{subfigure}[b]{\figwidth}
         \centering
         \resizebox{\textwidth}{!}{\begin{tikzpicture}
\begin{axis}[
  set layers,
  grid=major,
  xmin=0, xmax=100,
  % xmode=log, log basis x={2},
  ytick align=outside, ytick pos=left,
  xtick align=outside, xtick pos=left,
  xlabel=\# Epoch,
  ylabel={Attack Success Rate (\%)},
  legend pos=south east,
  enlarge y limits=0.05
]

\addplot+[
  red, dashed, mark=none, line width=1.6pt, %mark options={scale=0.75},
  % smooth, 
  error bars/.cd, 
    y fixed,
    y dir=both, 
    y explicit
] table [x expr=\thisrow{x}, y=asr, col sep=comma] {data/main/MNIST/modelpoisoning_sever.txt};

\legend{};

\end{axis}
\end{tikzpicture}}
         \caption{MNIST under MPA.}
         \label{fig:sever_mnist_mpa_asr}
     \end{subfigure}
     \hfill
     \begin{subfigure}[b]{\figwidth}
         \centering
         \resizebox{\textwidth}{!}{\begin{tikzpicture}
\begin{axis}[
  set layers,
  grid=major,
  xmin=0, xmax=100,
  % xmode=log, log basis x={2},
  ytick align=outside, ytick pos=left,
  xtick align=outside, xtick pos=left,
  xlabel=\# Epoch,
  ylabel={Model Accuracy (\%)},
  legend pos=south east,
  enlarge y limits=0.05
]

\addplot+[
  red, dashed, mark=none, line width=1.6pt, %mark options={scale=0.75},
  % smooth, 
  error bars/.cd, 
    y fixed,
    y dir=both, 
    y explicit
] table [x expr=\thisrow{x}, y=acc, col sep=comma] {data/main/MNIST/modelpoisoning_sever.txt};

\legend{};

\end{axis}
\end{tikzpicture}}
         \caption{MNIST under MPA.}
         \label{fig:sever_mnist_mpa_acc}
     \end{subfigure}
    
    \caption{Performance of Sever on MNIST without attack and with MPA.}
    \label{fig:sever_evaluation}
\end{figure*}

\end{document}